\documentclass{article}
\usepackage{fullpage}
\usepackage[utf8]{inputenc}
\usepackage[T1]{fontenc}
\usepackage{authblk}
\usepackage{amsmath}
\usepackage{amssymb}
\usepackage{mathtools}
\usepackage{amsthm}
\usepackage{amsfonts}
\usepackage{nicefrac}
\usepackage{microtype}
\usepackage{xcolor}
\usepackage{placeins}
\usepackage{graphicx}
\usepackage{enumitem}
\usepackage{wrapfig}
\usepackage{thmtools}
\usepackage{thm-restate}
\usepackage{subcaption}
\usepackage{tcolorbox}
\usepackage{booktabs}
\usepackage{url}
\usepackage{refcount}
\usepackage[round]{natbib}
\definecolor{gdblue}{HTML}{0072B2}
\definecolor{muongreen}{HTML}{009E73}
\definecolor{signorange}{HTML}{D55E00}
\definecolor{darkscarlet}{rgb}{0.34, 0.01, 0.1}
\definecolor{yaleblue}{rgb}{0.06, 0.3, 0.57}
\definecolor{darkpowderblue}{rgb}{0.0, 0.2, 0.6}

\DeclareRobustCommand{\mybox}[2][gray!10]{%
  \begin{tcolorbox}[left=0pt,right=0pt,top=0pt,bottom=0pt,colback=#1,colframe=#1,
    width=\linewidth,enlarge left by=0mm,boxsep=5pt,arc=0pt,outer arc=0pt]
  #2
  \end{tcolorbox}}

\definecolor{algblue}{rgb}{0.1, 0.1, 0.44}
\newcommand{\algname}[1]{{\color{algblue!70!black}\sf #1}}
\newcommand{\approxalg}[1]{{\ttfamily\color{algblue!70!black}#1}}
\newcommand{\R}{\mathbb{R}}
\newcommand{\ip}[2]{\langle #1,#2\rangle}
\newcommand{\norm}[1]{\left\lVert #1\right\rVert}
\newcommand{\F}{\mathrm{F}}
\newcommand{\T}{\mathsf{T}}
\newcommand{\polar}{\mathrm{polar}}
\newcommand{\nuc}{*}
\newcommand{\tr}{\mathrm{tr}}

\newcommand{\rank}{\mathrm{rank}}
\newcommand{\sign}{\mathrm{sign}}
\newcommand{\diag}{\mathrm{diag}}
\newcommand{\range}{\mathrm{range}}
\newcommand{\spec}{\mathrm{spec}}

\newtheorem{lemma}{Lemma}
\newtheorem{proposition}{Proposition}
\newtheorem{corollary}{Corollary}
\newtheorem{remark}{Remark}

\usepackage{hyperref}
\usepackage[hyperpageref]{backref}
\hypersetup{pagebackref=true,colorlinks=true,citecolor=darkscarlet,linkcolor=darkpowderblue,
  filecolor=magenta,urlcolor=yaleblue,menucolor=gray}
\renewcommand*{\backrefalt}[4]{%
  \ifcase #1 \footnotesize{(Not cited.)}%
  \or        \footnotesize{(Cited on page~#2)}%
  \else      \footnotesize{(Cited on pages~#2)}%
  \fi}

\title{\bf A Quadratic Lens on Muon's Polar Update: Benefits, Pitfalls, and the Role of Newton--Schulz$^\dagger$}
\author[1,2,$\ast$,$\ddagger$]{\bf Egor Shulgin}
\author[3,4,$\ast$]{\bf Sam Laing}
\author[3]{\bf Antonio Orvieto}
\author[1]{\bf Peter Richt\'arik}
\affil[1]{King Abdullah University of Science and Technology (KAUST), Saudi Arabia}
\affil[2]{Mohamed bin Zayed University of Artificial Intelligence (MBZUAI), United Arab Emirates}
\affil[3]{ELLIS Institute T\"ubingen and MPI for Intelligent Systems, T\"ubingen AI Center, Germany}
\affil[4]{OpenEuroLLM}
\date{}

\begin{document}
\maketitle
{\renewcommand{\thefootnote}{$\ast$}\footnotetext{Equal contribution. Correspondence to Egor Shulgin, \texttt{shulgin.yegor@gmail.com}.}}
{\renewcommand{\thefootnote}{$\dagger$}\footnotetext{A shorter version of this work was presented at the ICML 2026 Workshop on High-dimensional Learning Dynamics.}}
{\renewcommand{\thefootnote}{$\ddagger$}\footnotetext{The initial versions of this work were completed while Egor Shulgin was at KAUST.}}

\begin{abstract}
Muon updates matrix-valued parameters along approximately orthogonalized directions and often trains large language models faster than \algname{AdamW}, yet it remains unclear when and why orthogonalization helps. We study \algname{PolarGD}, the orthogonalized update at the core of \algname{Muon}, on matrix quadratics, where optimizers can be analyzed in closed form, and compare update directions under line search to remove the effect of learning-rate tuning. Orthogonalization pays off when the gradient is far from low rank and its largest singular directions see high curvature. \algname{PolarGD} reaches the minimizer in one step whenever the rows of the error are orthogonal with equal norms, for any Hessian, whereas gradient descent needs all Hessian eigenvalues to be equal; and unlike sign descent, \algname{PolarGD} is unaffected by rotations of the problem. Over many steps, \algname{PolarGD} acts as gradient descent preconditioned by the current iterate, so its speed is governed by the conditioning of the iterate rather than of the Hessian. This explains fast regimes as well as a slow one, caused by tiny singular values of the iterate, in which \algname{PolarGD} is much slower than gradient descent. Finally, the few \approxalg{Newton--Schulz} iterations used in \algname{Muon} treat these tiny singular values as gradient descent does, so fewer iterations can converge faster than exact orthogonalization; in a transformer trained with \algname{Muon}, about a quarter of the singular values of the attention matrices fall in this regime.
\end{abstract}

\section{Introduction}

\algname{Muon} \citep{jordan2024muon} has emerged as a strong optimizer for training large neural networks.
It has matched or outperformed \algname{AdamW} in language-model pretraining, from speedrun
benchmarks \citep{jordan2024moddednanogpt} to large-scale training \citep{liu2025muon,shah2025practical}, and standard
optimizers more broadly in vision \citep{jordan2024muon,pethick2025training}, diffusion
\citep{schaipp2025optimization}, and tabular models \citep{gorishniy2026benchmarking}. For two-dimensional parameter blocks, \algname{Muon}
approximately orthogonalizes the momentum matrix with a few \approxalg{Newton--Schulz} iterations, that is,
it steps along an approximation of the polar factor $\polar(G)=UV^\top$ of $G=U\Sigma_G V^\top$.
The same direction is shared by a broader family of matrix optimizers. It is the steepest-descent
direction for the spectral norm, used in stochastic spectral descent
\citep{carlson2015stochasticRBM,carlson2015stochastic,carlson2015preconditioned} and revisited by
\citet{bernstein2024old}.
It is also the linear-minimization-oracle (LMO) direction of norm-constrained methods
\citep{pethick2025training}, and \algname{Shampoo} \citep{gupta2018shampoo} reduces to it when preconditioner
accumulation is switched off \citep{bernstein2024old}. Controlled benchmarks place such
matrix-based optimizers among the strongest available \citep{wen2026fantastic}, so understanding when
a step along $\polar(G)$ helps matters well beyond \algname{Muon}.

Existing analyses view these methods as non-Euclidean trust-region \citep{kovalev2025muon} or
LMO-based steepest descent \citep{pethick2025training} and provide convergence guarantees for \algname{Muon} \citep{li2025note}. These are
worst-case bounds over broad function classes, governed by smoothness constants in the chosen norm and by norm-equivalence
factors; where they suggest an advantage over (stochastic) gradient descent (\algname{GD}), it comes from comparing such constants
across norms \citep{kim2026convergence,davis2025spectral}. They do not separate the effect of the orthogonalized direction
from that of the step-size scale set by these constants, so when this direction improves on the gradient step remains open. Analyses of \algname{Muon} with finite \approxalg{Newton--Schulz} treat it as an approximation of the polar factor whose
error worsens the guarantees \citep{shulgin2026beyond,kim2026convergence,takezawa2026fedmuon}, related work attributes this
error to ill-conditioning and replaces the solver \citep{refael2025sumo,he2025lowrank}, and improved solvers aim to reduce it
\citep{cesista2025muonoptcoeffs,grishina2025accelerating,amsel2025polar}.

We study this question on matrix quadratics. Quadratic models are the classical setting for
understanding optimizers, and they have repeatedly explained optimizer behavior in deep learning,
from critical batch sizes and the value of preconditioning \citep{zhang2019algorithmic} to the
advantage of \algname{Adam} on transformers \citep{zhang2024transformers} and of sign descent under
Zipf-distributed data \citep{kunstner2025scalinglawsgradientdescent}. Their matrix-valued form
already contains the features that matter for orthogonalized updates: singular structure, anisotropic
curvature, dependence on initialization, and the choice of coordinate system. We focus on the core
update of \algname{Muon} and of the methods above,
$X_{k+1}=X_k-\gamma_k\polar(\nabla f(X_k))$, and compare it with \algname{GD} and \algname{SignGD}. We call this update \algname{PolarGD} by
analogy with \algname{SignGD}: it replaces the gradient by its polar factor, as \algname{SignGD} replaces it by its entrywise sign
(it has also been called spectral descent \citep{carlson2015stochastic}). Momentum and
stochasticity are left out on purpose, so that the effect of the update direction can be isolated.

To compare the three directions on equal footing, we let each take its best scalar step, computed by
exact line search. This removes learning-rate tuning from the comparison, so that methods are
compared through their directions rather than through step-size choices. On matrix quadratics, the
resulting one-step decreases are available in closed form and show a tradeoff between how much a
direction $D$ decreases the loss to first order, its gain $\ip{G}{D}$, and the curvature $\tr(D^\top HD)$ along it.

\paragraph{Contributions.} Our main contributions are as follows.
\begin{itemize}[leftmargin=1.2em,itemsep=0.3em]
\item \textbf{One-step geometry (Section~\ref{sec:one_step}).} We give closed-form line-search
decreases for \algname{GD}, \algname{SignGD}, and \algname{PolarGD}. Each method reaches the minimizer in one step under its own condition
(Theorem~\ref{thm:onestep_regimes}): \algname{GD} from every starting point only when the Hessian is a multiple of the
identity, \algname{SignGD} when the error has entries of equal magnitude and the Hessian preserves their signs (e.g., a
diagonal Hessian), and \algname{PolarGD},
for any positive-definite Hessian, when the rows of the error are orthogonal with equal norms. This explains
why orthogonal initialization is special for \algname{PolarGD} and diagonal structure for \algname{SignGD}. In the worst case over
iterates, however, the one-step comparison is pessimistic for \algname{PolarGD}: its guaranteed decrease is smaller than
\algname{GD}'s (Theorem~\ref{prop:precond:progress}), although \algname{PolarGD} is often faster over whole trajectories
(Figure~\ref{fig:representative_plot}).
\item \textbf{Rotation equivariance (Section~\ref{sec:curvature}).} \algname{PolarGD} is equivariant under
orthogonal changes of basis, while \algname{SignGD} is not (Theorem~\ref{thm:orthogonal}): rotating the problem rotates
the \algname{PolarGD} iterates but changes the \algname{SignGD} trajectory. Accordingly, \algname{PolarGD} keeps its performance when an ill-conditioned diagonal problem, such as a Zipf-type regression model of
language data, is rotated, whereas \algname{SignGD} degrades.
\item \textbf{Implicit preconditioning (Section~\ref{sec:precond}).} \algname{PolarGD} is a gradient step with a
preconditioner set by the current iterate, whose condition number is that of the iterate rather than of the
Hessian. When the iterate has full row rank, this gives an exact formula for the progress per step, for line search
and fixed steps. With line search, \algname{GD} satisfies \(f(X_{k+1})\le(1-1/\kappa(H))\,f(X_k)\); \algname{PolarGD} satisfies the same
bound with the condition number \(\kappa(H)\) of the Hessian replaced by the condition number \(\kappa(X_k)\) of the
iterate (taking \(X^\star=0\)). \algname{PolarGD} is therefore fastest when the rows of $X_k$ are orthogonal with equal norms and
slowest when $X_k$ has tiny singular values; this slow regime is tight, and in it \algname{GD} can be nearly one-step
exact while \algname{PolarGD} stays slow along the trajectory. We also show that a direction that is better for one step
need not give a faster trajectory.
\item \textbf{Rank deficiency and finite \approxalg{Newton--Schulz} (Section~\ref{subsec:precond:softened_updates}).}
We extend these identities to rank-deficient iterates and show that a few \approxalg{Newton--Schulz} steps, as used in
\algname{Muon}, give a softened polar update: they behave like damped polar, treating large singular values like exact polar and tiny ones
like a \algname{GD} step. Near rank deficiency this converges faster than exact polar, so finite \approxalg{Newton--Schulz} is not merely an
approximation of the polar factor.
\end{itemize}

\paragraph{Related and concurrent work.} \citet{gonon2026insights} also study \algname{Muon} on quadratics. On the isotropic
quadratic, exact polar steps with a constant step size can fail to reach the minimizer, and random perturbations of the
polar step restore reachability in a one-dimensional model; on general Hessians, they show numerically that the comparison
with \algname{GD} depends on the shape of the spectrum, that \approxalg{Newton--Schulz} can help or hurt, and that a direction that wins at every
step need not win along the trajectory. Our one-step and trajectory results are exact for any positive-definite Hessian.
We relate our one-step condition to \citet{davis2025spectral} in Section~\ref{sec:setup} and discuss other work in the sections where it
is relevant.

\section{Notation and Setup}
\label{sec:setup}

Let \(f:\R^{d\times m}\to\R\) be differentiable, \(G:=\nabla f(X)\), and
\(\ip{A}{B}:=\tr(A^\top B)\). We write $\norm{\cdot}_\F$, $\norm{\cdot}_2$, and $\norm{\cdot}_{\nuc}$
for the Frobenius, spectral, and nuclear norms, and $\norm{\cdot}_1$ for the entrywise $\ell_1$ norm.
We also write $\sigma_i(\cdot)$ for
singular values, $\kappa(A):=\sigma_{\max}(A)/\sigma_{\min}(A)$, $A^\dagger$ for the pseudoinverse with
$A^{\dagger/2}:=(A^\dagger)^{1/2}$ for $A\succeq0$, and $\Pi_{\mathcal V}$ for the orthogonal
projector onto a subspace $\mathcal V$.

We consider iterations
\begin{equation}
\label{eq:general_update}
  X_{k+1}=X_k-\gamma_k D_k,\qquad \gamma_k\ge0,
\end{equation}
where $D_k$ is computed from $G_k:=\nabla f(X_k)$ as one of the \algname{GD}, \algname{PolarGD}, and \algname{SignGD} directions
(dropping the index $k$) $D_{\rm GD}:=G$, $D_{\rm Pol}:=Q:=\polar(G)$, and $D_{\rm Sign}:=S:=\sign(G)$.
Here \(\polar(G):=UV^\top\) for a thin SVD \(G=U\Sigma_G V^\top\), and \(S\) is the entrywise sign
matrix with \(\sign(0)=0\). Both \(Q\) and \(S\) are LMO directions,
\(\mathrm{LMO}(G):=\arg\max_{\norm{D}\le1}\ip{G}{D}\): \(Q\) for the spectral norm and \(S\) for the
entrywise \(\ell_\infty\) norm, and \(\ip{G}{\mathrm{LMO}(G)}\) is the corresponding dual norm, namely
\(\norm{G}_{\nuc}\) and \(\norm{G}_1\), respectively.

\paragraph{Comparing directions by line search.}
We compare directions under exact line search, with step $\gamma^\star(D)\in\arg\min_{\gamma\ge0}f(X-\gamma D)$
and decrease $\Delta(D):=f(X)-f(X-\gamma^\star(D)D)$. With fixed steps, the polar and sign directions are used
in two forms: \emph{LMO-normalized}, $D_k=Q_k$ or $S_k$, which ignores the size of the gradient and is the form
used in most prior analyses \citep{kovalev2025muon,pethick2025training}, and \emph{dual-adapted},
$D_k=\norm{G_k}_{\nuc}Q_k$ or $\norm{G_k}_1S_k$, which multiplies the direction by the matching norm of the
gradient, as in stochastic spectral descent \citep{carlson2015stochastic} and \algname{PolarGrad} \citep{lau2025polargrad}. Since rescaling $D$ does not change the line-search
update $X-\gamma^\star(D)D$, line search treats both forms identically. The following bounds hold beyond
quadratics and become equalities for the quadratic model of Section~\ref{sec:one_step}.

\noindent\textbf{Assumption (matrix curvature bounds).}
There exist symmetric positive semidefinite $M\preceq L\in\R^{d\times d}$ such that for all
$X,D\in\R^{d\times m}$ the Bregman divergence
$B_f(X+D,X):=f(X+D)-f(X)-\ip{\nabla f(X)}{D}$ satisfies
\begin{equation}
\label{eq:matrix_sandwich}
  \tfrac12 \ip{M D}{D}
  \;\le\;
  B_f(X+D,X)
  \;\le\;
  \tfrac12 \ip{L D}{D},
\end{equation}
where $\ip{L D}{D}=\tr(D^\T L D)$ (and similarly for $M$). For $L=\ell I_d$ and $M=\mu I_d$, these are
the usual quadratic bounds implied by $\ell$-smoothness and $\mu$-strong convexity in the Frobenius
norm; a general $L$ captures anisotropic (e.g., row-wise) curvature. For
$f(X)=\tfrac12\tr(X^\T H X)$, both bounds hold with equality for $L=M=H$.

For a direction $D$ with gain $\ip{G}{D}>0$, write $\norm{D}_L^2:=\ip{L D}{D}$ for its curvature under the upper
bound. Minimizing the upper bound in~\eqref{eq:matrix_sandwich} along $X-\gamma D$
over $\gamma\ge0$ gives the step
\begin{equation}
\label{eq:gamma_model}
  \gamma_L(D) := \frac{\ip{G}{D}}{\norm{D}_L^2}.
\end{equation}

\begin{restatable}[One-step decrease along a direction]{proposition}{PropDecrease}
\label{prop:model_decrease}
Assume \eqref{eq:matrix_sandwich}. Fix $X$ and a direction $D$ with $\ip{G}{D}>0$ and $\norm{D}_L>0$.
Let $X^+ = X - \gamma_L(D) D$, where $\gamma_L(D)$ is given by \eqref{eq:gamma_model}. Then
\begin{equation}
\label{eq:model_decrease}
  f(X^+) \le f(X) - \frac{\ip{G}{D}^2}{2\norm{D}_L^2}.
\end{equation}
If $L=M$, equality holds and $\gamma_L(D)$ is the exact line-search step.
\end{restatable}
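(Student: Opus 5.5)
The plan is to apply the upper bound in \eqref{eq:matrix_sandwich} with displacement $-\gamma D$ and then optimize the resulting scalar quadratic in $\gamma$. Taking $X$ and $-\gamma D$ in the Bregman bound gives, for every $\gamma\ge0$,
\begin{equation*}
  f(X-\gamma D)\;\le\; f(X)-\gamma\ip{G}{D}+\tfrac{\gamma^2}{2}\norm{D}_L^2 ,
\end{equation*}
where we use $\ip{L(-\gamma D)}{-\gamma D}=\gamma^2\norm{D}_L^2$. Because $\norm{D}_L>0$, the right-hand side is a strictly convex parabola in $\gamma$. Its minimizer is $\gamma_L(D)=\ip{G}{D}/\norm{D}_L^2$ from \eqref{eq:gamma_model}, and this value is positive since $\ip{G}{D}>0$, so it is admissible. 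Substituting it gives $-\ip{G}{D}^2/\norm{D}_L^2+\ip{G}{D}^2/(2\norm{D}_L^2)=-\ip{G}{D}^2/(2\norm{D}_L^2)$, which is \eqref{eq:model_decrease}.

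For the equality case $L=M$, the two sides of the sandwich coincide. Hence $f(X-\gamma D)$ is exactly the parabola above for all $\gamma\ge0$, not merely bounded by it. Its minimizer over $\gamma\ge0$ is therefore the exact line-search step $\gamma^\star(D)$. Since the parabola is strictly convex, this minimizer is unique and equals $\gamma_L(D)$, and the decrease is attained with equality.

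No step is a real obstacle. The only points to check are that the Bregman bound may be applied with a negative multiple of $D$ (it holds for all $D$) and that positivity of $\ip{G}{D}$ puts the unconstrained minimizer inside the constraint $\gamma\ge0$.
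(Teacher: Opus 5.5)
Your proof is correct. The inequality \eqref{eq:model_decrease} is proved exactly as in the paper: apply the upper Bregman bound with displacement $-\gamma D$ and evaluate the resulting parabola at $\gamma_L(D)$.

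The equality case $L=M$ is where you take a different, shorter route. The paper proves a stronger statement valid for any $M\preceq L$ with $\norm{D}_M>0$. It brackets the exact step, $\ip{G}{D}/\norm{D}_L^2\le\gamma^\star(D)\le\ip{G}{D}/\norm{D}_M^2$, and similarly brackets the exact decrease $\Delta(D)$. To do so it first derives derivative bounds $-\ip{G}{D}+\gamma\norm{D}_M^2\le\phi'(\gamma)\le-\ip{G}{D}+\gamma\norm{D}_L^2$ for $\phi(\gamma)=f(X-\gamma D)$. These come from adding the Bregman bound applied at $X$ and at $X-\gamma D$. It then lets the brackets collapse when $L=M$.

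You instead observe that for $L=M$ the restriction of $f$ to the ray is exactly the strictly convex parabola. Its unique minimizer over $\gamma\ge0$ is $\gamma_L(D)$, which settles the stated claim directly. The paper's detour buys quantitative control of the exact line-search step and decrease when $L\neq M$, which your argument does not provide but the proposition does not require.
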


\begin{figure}[t]
\vspace{-2mm}
    \centering
    \includegraphics[width=\linewidth]{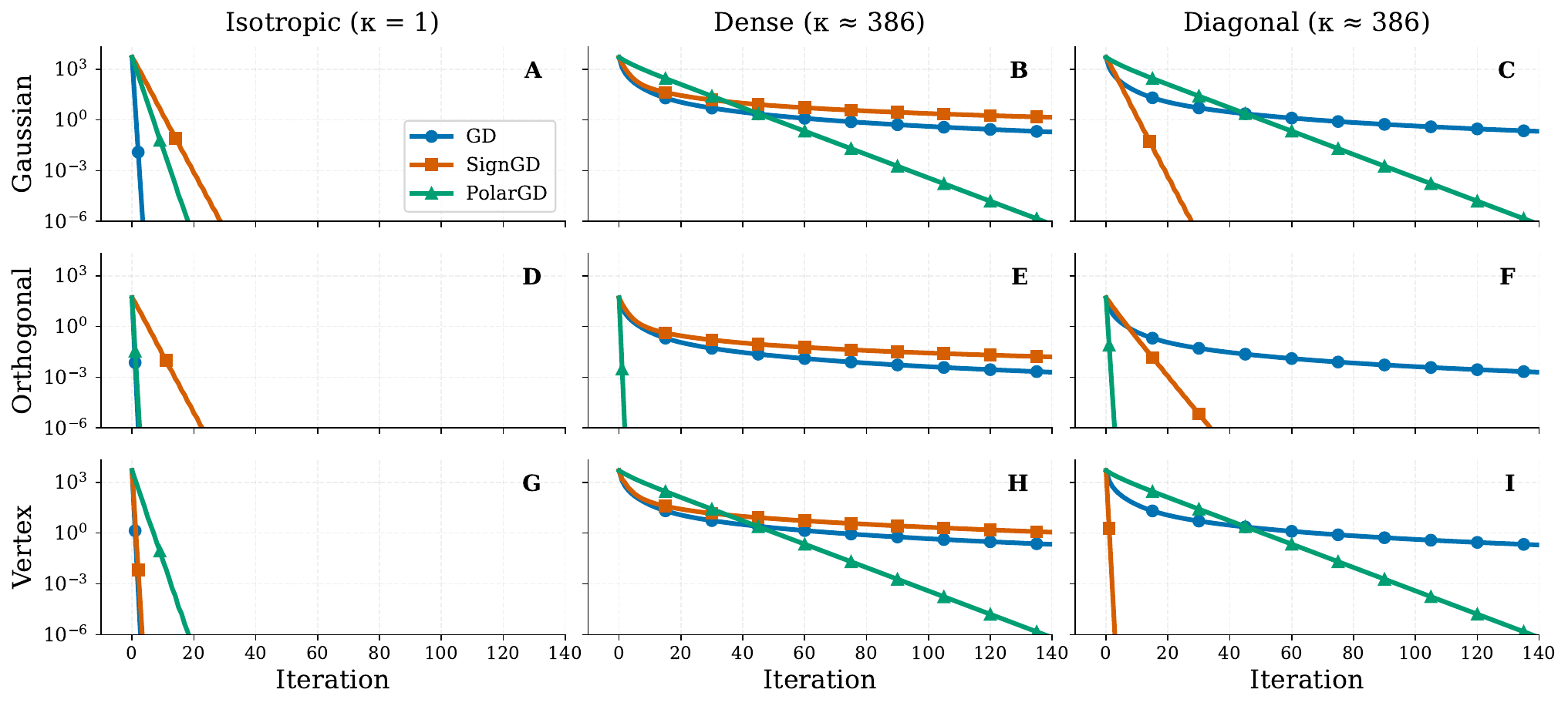}
    \vspace{-2mm}
    \caption{
    \textbf{Convergence across initialization--Hessian regimes.} 
    Performance of dual-adapted \textcolor{gdblue}{\textbf{\sf GD}}, \textcolor{signorange}{\textbf{\sf SignGD}} and \textcolor{muongreen}{\textbf{\sf PolarGD}} (the core \algname{Muon} update) on matrix quadratic objectives $f(X) = \frac{1}{2}\tr(X^\top H X)$ with tuned fixed step sizes, not line search (see App.~\ref{app:exp_details:lr_tune}). To isolate structural dependencies, we test three initializations: \emph{Gaussian} ($X_0 \sim \mathcal{N}(0,1)$), \emph{Orthogonal} and \emph{Vertex} ($X_0 \sim \{ -1,1\}$). We evaluate these against three Hessian profiles: \emph{Isotropic} ($H=I_d$), \emph{Dense} Marchenko--Pastur ($\kappa=386$) and \emph{Diagonal} (matching the spectrum in the dense case).
    Consistent with our analysis, each optimizer favors a distinct structural regime: \algname{GD} is robust to initialization but degrades under ill-conditioning, \algname{SignGD} benefits from coordinate-aligned curvature, and \algname{PolarGD} is rotation-equivariant and benefits from row-isotropic initialization. No single optimizer dominates across all regimes.
    }
    \label{fig:representative_plot}
    \vspace{-4mm}
\end{figure}
Proposition~\ref{prop:model_decrease} judges a direction by its squared gain over its curvature,
$\ip{G}{D}^2/\norm{D}_L^2$, independently of its scale. Exact line search decreases $f$ at least as much, and
exactly as much when $L=M$ (Appendix~\ref{app:onestep_proofs} brackets the exact decrease when $L\ne M$). Since
$\ip{G}{G}=\norm{G}_\F^2$ and $\ip{G}{Q}=\norm{G}_{\nuc}$, the polar direction guarantees a larger decrease than \algname{GD}
if and only if
\begin{equation}\label{eq:pol_beats_gd}
  \frac{\norm{G}_L}{\norm{Q}_L}>\frac{\norm{G}_\F^2}{\norm{G}_{\nuc}}.
\end{equation}
This is why we allow matrix-valued curvature bounds in~\eqref{eq:matrix_sandwich}. A scalar bound
$L=\ell I_d$ charges every direction the same curvature per unit Frobenius norm, $\norm{D}_L^2=\ell\norm{D}_\F^2$. The
left side of~\eqref{eq:pol_beats_gd} is then $\norm{G}_\F/\norm{Q}_\F=\norm{G}_\F/\sqrt r$, with $r=\rank(G)$, while
the Cauchy--Schwarz inequality $\norm{G}_{\nuc}\le\sqrt r\,\norm{G}_\F$ shows that the right side is at least
$\norm{G}_\F/\sqrt r$. So~\eqref{eq:pol_beats_gd} never holds: under a scalar bound, the polar direction never
guarantees more decrease than \algname{GD}, and guarantees as much only when all nonzero singular values of $G$ are equal.
A matrix $L$ changes this. Writing $G=\sum_i\sigma_iu_iv_i^\T$, the \algname{GD} curvature
$\norm{G}_L^2=\sum_i\sigma_i^2\,u_i^\T Lu_i$ weights the curvature along each left singular vector $u_i$ by
$\sigma_i^2$, whereas the polar curvature $\norm{Q}_L^2=\sum_iu_i^\T Lu_i$ weights all of them equally. When the
large singular directions of $G$ see high curvature, \algname{GD} pays for it most, and the polar direction can win. The right side depends only on the singular values of $G$:
for fixed $\norm{G}_\F$, it is largest when $G$ is close to rank one (at rank one, $Q$ is proportional to $G$ and
the two directions coincide; Remark~\ref{rmk:vector}) and smallest when all nonzero singular values of $G$ are
equal. On quadratics, $L=M=H$ and~\eqref{eq:pol_beats_gd} compares the exact line-search
decreases (Section~\ref{sec:one_step}). Squaring~\eqref{eq:pol_beats_gd} gives $\norm{G}_{\nuc}^2/\norm{G}_\F^2>\norm{G}_\F^2\norm{Q}_L^2/\norm{G}_L^2$.
Bounding the two curvatures by their worst cases, $\norm{G}_L^2\le\lambda_{\max}(L)\norm{G}_\F^2$ and
$\norm{Q}_L^2\le\tr(L)$, which are the Frobenius- and spectral-norm smoothness constants implied by~\eqref{eq:matrix_sandwich},
replaces the right side by $\tr(L)/\lambda_{\max}(L)$. For curvature $L\propto AA^\top$ generated by incoming activations
$A$ (up to transposition), this is their stable rank, and the result is the criterion of \citet{davis2025spectral}. Keeping
the matrix curvature makes~\eqref{eq:pol_beats_gd} exact for line search on quadratics, and the two criteria can disagree
in either direction. With $\ip{G}{S}=\norm{G}_1$ in place of $\norm{G}_{\nuc}$, the same comparison for sign descent is that of
\citet{balles2020geometrysigngradientdescent}, who find that it beats \algname{GD} when the Hessian is concentrated on its diagonal and
its largest eigenvalue is much larger than the average, again with scalar ($\ell_\infty$ and Euclidean) smoothness constants. \citet{shen2025convergence} and \citet{kovalev2025non} assume the same one-sided curvature bound for \algname{Muon}-type
methods. \citet{shen2025convergence} check along training trajectories the condition
$\norm{G}_{\nuc}^2/\norm{Q}_L^2\ge\norm{G}_\F^2/\lambda_{\max}(L)$, which bounds only \algname{GD}'s curvature by its worst case and is
therefore implied by~\eqref{eq:pol_beats_gd}, but not equivalent to it. \citet{kovalev2025non} analyze non-Euclidean \algname{SGD}
with momentum, using extrapolation or variance reduction for their faster stochastic rates, and obtain rates in $\tr(L)$
rather than a dimension factor times $\lambda_{\max}(L)$, which amounts to bounding both curvatures by their worst cases.

This one-step comparison is pessimistic for \algname{PolarGD} (its worst case is also worse than \algname{GD}'s;
Theorem~\ref{prop:precond:progress}), yet in Figure~\ref{fig:representative_plot} \algname{PolarGD} beats \algname{GD} by orders of
magnitude for every initialization on the dense and diagonal Hessians, and the gap widens with $\kappa(H)$
(Figure~\ref{fig:ls_vs_fixed}(c)); Section~\ref{sec:precond} studies this gap.

\section{One-Step Geometry}
\label{sec:one_step}

On matrix quadratics, matrix geometry, curvature, and initialization already interact nontrivially, while the exact line-search quantities of Section~\ref{sec:setup} remain available in closed form.
We consider
\begin{equation}
\label{eq:quad_model}
  f(X)=\tfrac12\tr\bigl((X-X^\star)^\T H (X-X^\star)\bigr),
\end{equation}
with $X\in\R^{d\times m}$, $H=H^\T\succeq 0$, and gradient $G=H(X-X^\star)$; we set $X^\star=0$ when convenient. Here \eqref{eq:matrix_sandwich} holds with equality for $L=M=H$, so by Proposition~\ref{prop:model_decrease} the exact line-search step and decrease along any $D$ with $\ip{G}{D}>0$ and $\tr(D^\T H D)>0$ are
\begin{equation}
\label{eq:quad_gamma_star}
  \gamma^\star(D)=\frac{\ip{G}{D}}{\tr(D^\T H D)},
  \qquad
  \Delta(D)= \frac{\ip{G}{D}^2}{2\,\tr(D^\T H D)}.
\end{equation}
\paragraph{Exact one-step decreases for \algname{GD}, \algname{PolarGD}, \algname{SignGD}.}
With $D_{\mathrm{GD}}=G$, $D_{\mathrm{Pol}}=Q$, and $D_{\mathrm{Sign}}=S$,
\begin{equation}
\label{eq:quad_deltas_three}
  \Delta_{\mathrm{GD}} = \frac{\norm{G}_\F^4}{2\,\tr(G^\T H G)},
  \qquad
  \Delta_{\mathrm{Pol}} = \frac{\norm{G}_{\nuc}^2}{2\,\tr(Q^\T H Q)},
  \qquad
  \Delta_{\mathrm{Sign}} = \frac{\norm{G}_1^2}{2\,\tr(S^\T H S)}.
\end{equation}
As in Section~\ref{sec:setup}, \algname{GD} weights the curvature along the left singular vectors of $G$ by
$\sigma_i^2$, whereas \algname{PolarGD} uses the unweighted curvature; \algname{SignGD} instead pays the coordinate-wise curvature
$\tr(S^\T HS)$, which can be favorable when $H$ is coordinate-aligned (Theorem~\ref{thm:onestep_regimes}).

\begin{remark}[Vector and rank-one gradients] If $m=1$ (vector case), then $\norm{G}_{\nuc}=\norm{G}_\F$ and $Q=G/\norm{G}_\F$,
so \algname{GD} and \algname{PolarGD} are collinear and therefore identical under exact line search.
More generally, if $\rank(G)=1$, then $Q$ is proportional to $G$ and \algname{GD}/\algname{PolarGD} coincide under exact line search.
\label{rmk:vector}
\end{remark}

\mybox[gray!8]{On our matrix quadratic model, the one-step progress and optimal step-size of \algname{GD}, \algname{PolarGD} and \algname{SignGD} can be written in closed form. How the algorithms rank depends on (1) \textit{Initialization}:  the evaluation point $X$, (2) \textit{Curvature}: the eigenvalues of $H$, (3) \textit{Geometry}: the eigenvectors of $H$.}

We first isolate the role of the iterate with an isotropic Hessian (Section~\ref{subsec:isotropic}).

\subsection{Isotropic Quadratic Setting}
\label{subsec:isotropic}

Consider $H=\lambda I_d$ with $\lambda>0$ and $X^\star=0$, so that $G=\lambda X$. Let $X=U\Sigma_X V^\T$ be a thin SVD of rank $r$, and let $s$ be the number of nonzero entries of $X$. By \eqref{eq:quad_gamma_star}, \algname{GD} uses $\gamma^\star(G)=1/\lambda$ and reaches $X^\star$ in one step from every $X$. \algname{PolarGD} uses $\gamma^\star(Q)=\norm{X}_{\nuc}/r$, so that
\[
  X^+_{\rm Pol}=U\Bigl(\Sigma_X-\frac{\norm{X}_{\nuc}}{r}I_r\Bigr)V^\T :
\]
it subtracts the \emph{same scalar from every nonzero singular value}. \algname{SignGD} analogously subtracts the same scalar from every nonzero entry, $X^+_{\rm Sign}=X-\frac{\norm{X}_1}{s}\sign(X)$. The resulting loss ratios are (App.~\ref{app:isotropic})
\begin{equation}
\label{eq:isotropic_ratios}
  \frac{f(X^+_{\rm Pol})}{f(X)} = 1-\frac{\norm{X}_{\nuc}^2}{r\norm{X}_\F^2},
  \qquad
  \frac{f(X^+_{\rm Sign})}{f(X)} = 1-\frac{\norm{X}_1^2}{s\norm{X}_\F^2}.
\end{equation}
Writing $\bar\sigma$ and $\overline{\sigma^2}$ for the means of the nonzero singular values of $X$ and of their squares, the \algname{PolarGD} ratio equals $1-\bar\sigma^2/\overline{\sigma^2}$, the relative spread of the singular values; the \algname{SignGD} ratio is the same quantity for the magnitudes of the nonzero entries. Hence \algname{PolarGD} is one-step exact if and only if all nonzero singular values of $X$ are equal ($X$ is a scaled partial isometry), and \algname{SignGD} if and only if all nonzero entries have equal magnitude. Thus, even for the best-conditioned Hessian, where \algname{GD} needs a single step from anywhere, the efficiency of \algname{PolarGD} and \algname{SignGD} depends on the geometry of the initialization (Figure~\ref{fig:representative_plot}, first column: \algname{GD} converges in one step from every initialization, \algname{PolarGD} only from the orthogonal one, and \algname{SignGD} only from the vertex one). Section~\ref{subsec:immediate} characterizes one-step exactness for general $H$.
\subsection{One-Step Convergence Analysis}
\label{subsec:immediate}

For general $H$, whether a method reaches $X^\star$ in one step depends jointly on $H$ and the error $E=X-X^\star$.

\begin{restatable}[One-step convergence regimes under exact line search on quadratics]{theorem}{OneStepRegimes}
\label{thm:onestep_regimes}
Consider \eqref{eq:quad_model} with $H\succ 0$, let $E:=X-X^\star$, and call a direction $D\in\{G,Q,S\}$ \emph{one-step exact} at $X\neq X^\star$ if $X-\gamma^\star(D)D=X^\star$.
\begin{enumerate}
\item[\textup{(i)}]
\algname{GD} is one-step exact from \emph{every} $X$ if and only if $H=\lambda I_d$ for some $\lambda>0$.
(For a fixed $X\neq X^\star$, \algname{GD} is one-step exact if and only if $HE=\lambda E$ for some $\lambda>0$.)

\item[\textup{(ii)}] 
\algname{SignGD} is one-step exact at $X$ if and only if all nonzero entries of $E$ have a common magnitude and $\sign(HE)=\sign(E)$ entrywise, including zero entries.
In particular, if $H$ is strictly diagonally dominant with positive diagonal, i.e.
$H_{ii}>\sum_{j\neq i}|H_{ij}|$ for all $i$, then $\sign(HS)=S$ for every $S\in\{\pm1\}^{d\times m}$;
hence any \emph{vertex initialization} $E=cS$ with $c>0$ yields one-step convergence.

\item[\textup{(iii)}] 
Assume $\rank(G)=d \le m$. \algname{PolarGD} is one-step exact at $X$ if and only if $EE^\T = c\,I_d$ for some $c>0$, that is, $E=\sqrt{c}\,U V^\T$ with $U\in\R^{d\times d}$ orthogonal and $V\in\R^{m\times d}$ having orthonormal columns (a scaled orthogonal matrix if $m=d$).
\end{enumerate}
\end{restatable}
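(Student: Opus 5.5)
All three parts rest on one reduction. A direction $D$ with $\ip{G}{D}>0$ is one-step exact at $X$ if and only if $E=\gamma^\star(D)\,D$, since $X-\gamma^\star(D)D=X^\star$ means exactly this. Because $\gamma^\star(D)>0$ for a descent direction, this says $E=cD$ for some $c>0$. Conversely, if $E=cD$ with $c>0$, then $f$ along the ray $X-\gamma D$ is minimized at $\gamma=c$ with value $0$, and $H\succ0$ makes that minimizer unique. So $\gamma^\star(D)=c$ and the method is exact. Each part therefore reduces to characterizing when $E$ is a positive multiple of $G=HE$, of $Q=\polar(HE)$, or of $S=\sign(HE)$. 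Note that $\ip{G}{G},\ip{G}{Q},\ip{G}{S}$ are all positive when $G\neq0$, and $G\neq0$ holds because $H\succ0$ and $E\neq0$.

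\textbf{Part (i).} For fixed $X$, the condition $E=cHE$ is equivalent to $HE=\lambda E$ with $\lambda=1/c>0$. For the global statement, if exactness holds for every $E$, take $E$ to be each rank-one matrix $e\,e_1^\T$. This makes every vector $e$ an eigenvector of $H$, which forces $H=\lambda I_d$. The converse is immediate from the fixed-$X$ criterion.

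\textbf{Part (ii).} Exactness means $E=c\,\sign(HE)$ with $c>0$. First suppose this holds. Then every nonzero entry of $E$ has magnitude $c$, and $\sign(E)=\sign(HE)$ entrywise, including the zero pattern. Conversely, suppose the nonzero entries of $E$ share a magnitude $c$ and $\sign(HE)=\sign(E)$. Then $E=c\,\sign(E)=c\,\sign(HE)$. For the diagonal-dominance claim, take any entry of $HS$ with $S\in\{\pm1\}^{d\times m}$. It equals $H_{ii}S_{ij}+\sum_{l\neq i}H_{il}S_{lj}$, and the first term dominates the sum in absolute value, so the entry has sign $S_{ij}$. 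Hence $\sign(HS)=S$, and with $E=cS$ we get $\sign(HE)=\sign(E)$ because $c>0$.

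\textbf{Part (iii).} This is the step needing care. With $\rank(G)=d\le m$, the polar factor is $Q=(GG^\T)^{-1/2}G$, which satisfies $QQ^\T=I_d$. For the forward direction, exactness gives $E=cQ$, so $EE^\T=c^2I_d$. For the converse, assume $EE^\T=cI_d$. Then
\[
GG^\T=HEE^\T H=cH^2, \qquad (GG^\T)^{-1/2}=c^{-1/2}H^{-1},
\]
where we use $H\succ0$ to take the positive square root of $H^2$. Hence $Q=c^{-1/2}H^{-1}HE=c^{-1/2}E$, so $E$ is a positive multiple of $Q$. The explicit form $E=\sqrt c\,UV^\T$ then follows from a thin SVD of $E$: all its singular values equal $\sqrt c$, with $U$ orthogonal $d\times d$. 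The only subtlety is identifying $(cH^2)^{-1/2}$ with $c^{-1/2}H^{-1}$, which is where positive definiteness enters. This is also why no condition on $H$ appears in (iii).
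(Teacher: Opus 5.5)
Your proposal is correct and follows the paper's proof essentially step for step: the same reduction of one-step exactness to $E=\gamma D$ with $\gamma>0$, the same single-column test matrices in (i), the same diagonal-dominance estimate in (ii), and the same identity $(cH^2)^{-1/2}=c^{-1/2}H^{-1}$ in (iii). The only step you leave implicit is why ``every vector is an eigenvector'' forces $H=\lambda I_d$; the paper settles it by noting that for independent eigenvectors $u,v$, the vector $u+v$ must also be an eigenvector.
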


\mybox[gray!8]{One-step analysis on quadratics reveals a ``preference'' for each method: \algname{GD} converges in one step from every initialization if and only if the Hessian has equal eigenvalues. \algname{SignGD} relaxes this condition on the Hessian (diagonal dominance suffices), at the price of a vertex initialization. \algname{PolarGD} \textit{requires no condition on the Hessian}, provided the rows of the initial error $X_0-X^\star$ are orthogonal with equal norms (e.g., a scaled orthogonal matrix). This is why orthogonal initialization is special for \algname{PolarGD} in Figure~\ref{fig:representative_plot}, and vertex initialization for \algname{SignGD} on diagonal Hessians.}

\subsection{Rotation Equivariance and Curvature Adaptation}
\label{sec:curvature}

While most trends in Figure~\ref{fig:representative_plot} correlate with our one-step characterization, another pattern stands out: for every initialization, \algname{PolarGD} performs the same on the dense and on the diagonal Hessian, in stark contrast to \algname{SignGD}. The following result, proven in App.~\ref{subsec:orth_equiv}, explains this \textit{beyond the line-search setup}: the dense Hessian is a rotation of the diagonal one, so for initializations with rotation-invariant distributions (Gaussian, orthogonal) the two problems are equivalent in distribution for \algname{GD} and \algname{PolarGD}; the vertex case matches empirically.

\begin{restatable}[Orthogonal equivariance]{theorem}{Equivariance}
    Let $P\in\R^{d\times d}$ and $R\in\R^{m\times m}$ be orthogonal and $\tilde H:=P^\T HP$. Run \algname{GD} or \algname{PolarGD}, with exact line search or the same fixed (LMO-normalized or dual-adapted) step sizes, on \eqref{eq:quad_model} with Hessian $H$ from the error $E_0:=X_0-X^\star$, and with Hessian $\tilde H$ from $\tilde E_0:=P^\T E_0R$. Then the errors satisfy $E_k=P\tilde E_kR^\T$ and the losses coincide for all $k$. \algname{SignGD} does not have this property (App.~\ref{subsec:orth_equiv}).
    \label{thm:orthogonal}
\end{restatable}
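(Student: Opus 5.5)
The plan is induction on $k$ for the claim $E_k=P\tilde E_kR^\T$, with base case the definition $E_0=P\tilde E_0R^\T$. Since $X_k-X^\star=E_k$ and the gradient depends only on the error, we can write $G=HE$ and $\tilde G=\tilde H\tilde E$. Assume $E=P\tilde ER^\T$ at step $k$. Then
\[
G=HP\tilde ER^\T=P(P^\T HP)\tilde ER^\T=P\tilde GR^\T .
\]
So the gradients are related by the same two-sided orthogonal transformation. The losses also coincide:
\[
\tr(E^\T HE)=\tr(R\tilde E^\T P^\T HP\tilde ER^\T)=\tr(\tilde E^\T\tilde H\tilde E),
\]
by cyclicity of the trace.

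The key step is showing that each direction map commutes with $A\mapsto PAR^\T$. For \algname{GD} this is immediate, since $D=G$. For \algname{PolarGD}, take a thin SVD $\tilde G=\tilde U\tilde\Sigma\tilde V^\T$. Then $G=(P\tilde U)\tilde\Sigma(R\tilde V)^\T$, and $P\tilde U$ and $R\tilde V$ have orthonormal columns, so this is a valid thin SVD of $G$. Hence $\polar(G)=P\tilde U\tilde V^\T R^\T=P\,\polar(\tilde G)R^\T$. Well-definedness of $\polar$ (independence of the choice of SVD) is the one point to address. I would handle it by writing $\polar(G)=G(G^\T G)^{\dagger/2}$, which is basis-free, and then checking directly that $(R\tilde G^\T\tilde GR^\T)^{\dagger/2}=R(\tilde G^\T\tilde G)^{\dagger/2}R^\T$.

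Next come the scalar quantities used in the step sizes. The nuclear and Frobenius norms are invariant under $A\mapsto PAR^\T$, so $\norm{G}_{\nuc}=\norm{\tilde G}_{\nuc}$ and $\norm{G}_\F=\norm{\tilde G}_\F$. This means the dual-adapted scalings agree. The line-search quantities in \eqref{eq:quad_gamma_star} also agree: with $D=P\tilde DR^\T$ we have $\ip{G}{D}=\ip{\tilde G}{\tilde D}$ and $\tr(D^\T HD)=\tr(\tilde D^\T\tilde H\tilde D)$. Hence $\gamma^\star(D)=\gamma^\star(\tilde D)$, and a fixed schedule is the same by assumption. Combining these gives
\[
E_{k+1}=E_k-\gamma_kD_k=P(\tilde E_k-\gamma_k\tilde D_k)R^\T=P\tilde E_{k+1}R^\T,
\]
which closes the induction. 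The degenerate case $G=0$ is handled by the same relation $G=P\tilde GR^\T$, since $G=0$ exactly when $\tilde G=0$.

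For the negative claim about \algname{SignGD}, a small counterexample suffices. Take $d=2$, $m=1$, $H=\diag(1,a)$, a rotation $P$ by $45^\circ$, and $R=1$. Choose the error $E_0=(1,0)^\T$. In the original problem the sign direction is $(1,0)^\T$, and line search reaches zero in one step. In the rotated problem $\tilde E_0=P^\T E_0$ has two nonzero entries of equal magnitude, and $\tilde H\tilde E_0$ fails to have the same sign pattern when $a\ne1$. By Theorem~\ref{thm:onestep_regimes}(ii), the rotated run is therefore not one-step exact, so the losses after one step differ. Choosing $a$ so that the sign condition fails cleanly (for instance $a$ large) is a short computation. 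I expect no real obstacle anywhere in the argument; the only delicate points are the SVD-uniqueness issue for $\polar$ and this concrete counterexample.
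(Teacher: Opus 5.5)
Your induction for \algname{GD} and \algname{PolarGD} is correct and matches the paper's proof. You use $G=P\tilde GR^\T$, equivariance of the polar factor by transporting a thin SVD, and invariance of the inner product and of the Frobenius and nuclear norms, so the line-search steps and the fixed scalings agree. Your basis-free check via $\polar(G)=G(G^\T G)^{\dagger/2}$ also settles well-definedness, which the paper leaves implicit.

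The \algname{SignGD} counterexample, however, fails: your example is consistent with equivariance. The problem is that $E_0=(1,0)^\T$ is an eigenvector of $H=\diag(1,a)$. Hence $\tilde H\tilde E_0=P^\T HPP^\T E_0=P^\T HE_0=P^\T E_0=\tilde E_0$ for every $a$, and the anisotropy of $H$ never enters.

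With $P=\tfrac{1}{\sqrt2}\begin{pmatrix}1&-1\\1&1\end{pmatrix}$ you get $\tilde E_0=\tfrac{1}{\sqrt2}(1,-1)^\T$. Its nonzero entries have equal magnitude and $\sign(\tilde H\tilde E_0)=\sign(\tilde E_0)$, so by Theorem~\ref{thm:onestep_regimes}(ii) the rotated run is also one-step exact. Directly: $\tilde S=(1,-1)^\T$, $\ip{\tilde G}{\tilde S}=\sqrt2$, and $\tr(\tilde S^\T\tilde H\tilde S)=\norm{P\tilde S}_2^2=2$. So $\gamma^\star=1/\sqrt2$ and $\tilde E_1=0$. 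Both runs reach the minimizer in one step, so $E_k=P\tilde E_kR^\T$ holds for all $k$. Taking $a$ large does not help.

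To repair it, choose a one-step-exact error that is not an eigenvector of $H$. For example, take the vertex error $E_0=(1,1)^\T$, which is one-step exact for diagonal $H$ ($\gamma^\star=1$). Then $\tilde E_0=P^\T E_0=(\sqrt2,0)^\T$, while $\tilde H\tilde E_0=P^\T(1,a)^\T=\tfrac{1}{\sqrt2}(1+a,a-1)^\T$. For $a\neq1$ this has a nonzero entry where $\tilde E_0$ vanishes, so the rotated run is not one-step exact and the losses after one step differ.

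Two other repairs also work:
\begin{itemize}
\item Keep $E_0=(1,0)^\T$ but rotate by an angle that is not a multiple of $45^\circ$, so that $\tilde E_0$ has unequal magnitudes.
\item Use the paper's example: $H=I_2$ and $E_0=(2,1)^\T$, which is not one-step exact, rotated to $(\sqrt5,0)^\T$, which is.
\end{itemize}
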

The same holds for \algname{Muon}'s update with momentum and finite \approxalg{Newton--Schulz} (deterministic gradients), since
momentum commutes with the rotation and $\mathrm{NS}_q(P^\T MR)=P^\T\mathrm{NS}_q(M)R$.

\begin{figure}[!b]
\centering
\begin{minipage}[c]{0.66\linewidth}
\includegraphics[width=\linewidth]{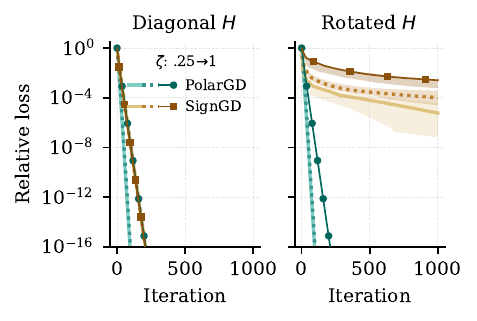}
\end{minipage}\hfill
\begin{minipage}[c]{0.31\linewidth}
\caption{Zipf quadratic with $\lambda_k(H) \propto k^{-\zeta}$, $\zeta \in \{0.25, 0.5, 1\}$, for diagonal and randomly rotated $H$ with the target $X^\star$ kept fixed; best-tuned dual-adapted fixed steps, median and range over 5 seeds. \algname{PolarGD} is unaffected by the rotation, while \algname{SignGD} degrades significantly. Details in App.~\ref{app:exp_details:zipf}.}
\label{fig:zipf_rotate}
\end{minipage}
\end{figure}

\paragraph{Zipf Laws experiment.} \citet{kunstner2025scalinglawsgradientdescent} model next-token prediction as linear regression over one-hot inputs, with squared loss $L(X)=\frac{1}{2n}\|ZX-Y\|_\F^2$. Since one-hot inputs are orthogonal, \(Z^\top Z\) is diagonal with entries given by token counts; under Zipf frequencies \(\pi_k\propto k^{-\zeta}\), this gives~\eqref{eq:quad_model} with $H=\diag(\pi_1,\dots,\pi_d)$ and a target $X^\star$ of next-token probabilities. On such ill-conditioned diagonal problems, \algname{SignGD} scales much more favorably than \algname{GD} \citep{kunstner2025scalinglawsgradientdescent}; heavy-tailed frequencies of this kind slow \algname{GD} on rare classes but not sign-based methods, which \citet{kunstner2024heavy} identify as a reason for \algname{Adam}'s advantage on language models. Figure~\ref{fig:zipf_rotate} shows that \algname{PolarGD} behaves similarly in the diagonal case (if $H$ and $X-X^\star$ are diagonal and $d=m$, then $Q=\sign(G)$; App.~\ref{app:diag_quadratic}) and that, unlike \algname{SignGD}, it keeps its performance when $H$ is randomly rotated; the advantage of sign descent requires curvature concentrated on the diagonal \citep{balles2020geometrysigngradientdescent}, which a rotation removes. Theorem~\ref{thm:orthogonal} guarantees this when $H$ and $X^\star$ are rotated together; Figure~\ref{fig:zipf_rotate} keeps $X^\star$ fixed and confirms it empirically, and App.~\ref{app:exp_details:zipf} shows both cases. This is consistent with, though not by itself an explanation of, the strong performance of \algname{Muon} in language modeling.

\paragraph{From one step to trajectories.} Theorem~\ref{thm:onestep_regimes} explains why row-isotropic iterates are special, and Theorem~\ref{thm:orthogonal} why \algname{PolarGD} behaves the same on diagonal and rotated Hessians. Neither explains the stronger phenomenon in Figures~\ref{fig:representative_plot} and~\ref{fig:zipf_rotate}: \algname{PolarGD} stays fast along the whole trajectory when $H$ is dense and ill-conditioned. Section~\ref{sec:precond} shows that the polar direction is a gradient step preconditioned by the iterate covariance $XX^\top$ rather than by $H$, which explains why row-isotropic iterates can be fast for any $\kappa(H)$ and why near-rank-deficient iterates are slow.

\mybox[gray!8]{On our quadratic loss, \algname{PolarGD} is empirically as robust to ill-conditioning as \algname{SignGD} (explained in Section~\ref{sec:precond}). Unlike \algname{SignGD}, it is orthogonally equivariant, so its performance carries over to every rotation of a problem, not only to coordinate-aligned ones.}

\section{Beyond One Step: Implicit Preconditioning}
\label{sec:precond}

The polar direction can be rewritten as a gradient step with a preconditioner set by the row covariance $XX^\top$
of the current iterate rather than by $H$ alone. This yields an optimistic regime, where row-isotropic iterates
cancel the curvature, and a pessimistic one, where tiny singular values make the preconditioner ill-conditioned.
We first note that one-step preferences can change along a trajectory (see also Appendix~\ref{app:2x2_alternation}).

\label{sec:alternations}
\paragraph{Local preferences need not be consistent along a trajectory.}
Even on convex quadratics, the sign of $\Delta_{\mathrm{Pol}}(X)-\Delta_{\mathrm{GD}}(X)$ can change with $X$, so pointwise one-step comparisons do not induce a global ordering between methods. Appendix~\ref{app:2x2_alternation} gives a closed-form $2\times 2$ diagonal example in which, along the exact-line-search \algname{GD} trajectory $\{X_k^{\mathrm{GD}}\}_{k\ge 0}$, the local winner between \algname{GD} and \algname{PolarGD} alternates deterministically at every iterate. \citet{gonon2026insights} give a numerical example of a related failure: a greedy rule that picks the better of the two directions chooses the polar direction at every step and is still much slower than \algname{GD} with line search. Both examples call for the trajectory-level analysis below.

\subsection{PolarGD as an Implicit Preconditioned Method}
\label{subsec:precond:polar_precond}

We return to the quadratic setting of Section~\ref{sec:one_step}, with \(H\succ0\). For the thin SVD
\(G=U\Sigma_GV^\top\) of Section~\ref{sec:setup}, of rank \(r\),
\begin{equation}
\label{eq:precond:polar_whiten_compact}
  Q=\polar(G)=(GG^\top)^{\dagger/2}G,
  \qquad
  QQ^\top=\Pi_{\operatorname{range}(G)},
\end{equation}
so polar normalization sets all singular values of \(G\) to one. Writing \(\Sigma:=XX^\top\) and using \(G=HX\),
\begin{equation}
\label{eq:precond:A_def_compact}
  Q=\polar(HX)=(H\Sigma H)^{\dagger/2}HX=:A(X)X,
  \qquad
  A(X):=(H\Sigma H)^{\dagger/2}H.
\end{equation}
So \algname{PolarGD} applies a left preconditioner determined by the current iterate.

\begin{figure}[!b]
\centering
\begin{minipage}[c]{0.58\linewidth}
\includegraphics[width=\linewidth]{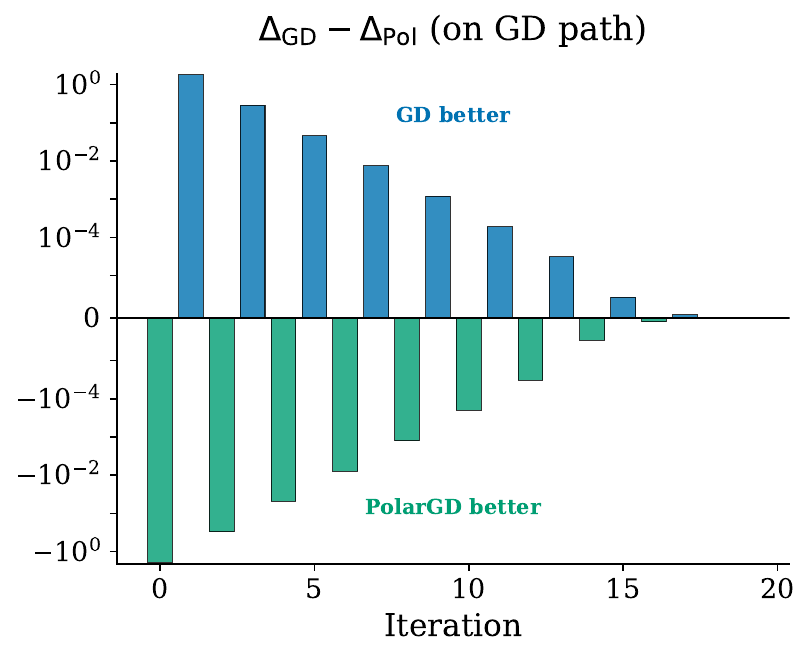}
\end{minipage}\hfill
\begin{minipage}[c]{0.39\linewidth}
\caption{\textbf{Alternating local winner.} A $2\times 2$ diagonal quadratic with $\kappa(H)=10$ (details in App.~\ref{app:2x2_alternation}). Along the \algname{GD} trajectory, the one-step advantage $\Delta_{\mathrm{GD}} - \Delta_{\mathrm{Pol}}$ alternates sign every iteration. Loss curves are shown in Figure~\ref{fig:alternating}.
}
\label{fig:alternating_winner}
\end{minipage}
\end{figure}

The cleanest identities occur in the full-row-rank regime \(d\le m\), \(\rank(X)=d\). We state the
main results in this regime and give the active-subspace extension below.

\begin{restatable}[Conditioning of the implicit preconditioner]{proposition}{PropConditioning}
\label{prop:precond:conditioning}
Assume \(H\succ0\) and \(\rank(X)=d\).
\begin{equation}
\label{eq:precond:conditioning_identities}
  A(X)\Sigma A(X)^\top=I_d,
  \qquad
  A(X)^\top A(X)=\Sigma^{-1}.
\end{equation}
Consequently, the singular values of \(A(X)\) are \(\{1/\sigma_i(X)\}_{i=1}^d\), and
\begin{equation}
\label{eq:precond:kappaA}
  \kappa(A(X))=\kappa(X)=\sqrt{\kappa(\Sigma)}.
\end{equation}
\end{restatable}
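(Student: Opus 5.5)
The plan is to reduce everything to one invertible symmetric matrix. Since \(H\succ0\) and \(\rank(X)=d\), the matrix \(\Sigma=XX^\top\) is positive definite. Hence \(M:=H\Sigma H\) is symmetric positive definite, so \(M^\dagger=M^{-1}\) and \(M^{\dagger/2}=M^{-1/2}\) is the unique symmetric positive definite inverse square root. Then \(A(X)=M^{-1/2}H\), and both identities in \eqref{eq:precond:conditioning_identities} become direct computations with \(M^{-1/2}\).

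The first step is the identity \(A\Sigma A^\top=I_d\). Using \(H^\top=H\) and the symmetry of \(M^{-1/2}\),
\[
A\Sigma A^\top=M^{-1/2}H\Sigma H M^{-1/2}=M^{-1/2}MM^{-1/2}=I_d .
\]
The second step is \(A^\top A=\Sigma^{-1}\). We have \(A^\top A=HM^{-1}H=H(H\Sigma H)^{-1}H\). Since \(H\) is invertible, \((H\Sigma H)^{-1}=H^{-1}\Sigma^{-1}H^{-1}\), which gives \(A^\top A=\Sigma^{-1}\). This is the only place where \(H\succ0\) enters essentially, and full row rank is what makes \(\Sigma^{-1}\) exist. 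Both are routine once the pseudoinverse has been replaced by the inverse.

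For the consequences, I would use the fact that the squared singular values of the square matrix \(A\) are the eigenvalues of \(A^\top A=\Sigma^{-1}\). These eigenvalues are \(1/\sigma_i(X)^2\), because the eigenvalues of \(\Sigma=XX^\top\) are the \(\sigma_i(X)^2\) for \(i=1,\dots,d\) under full row rank. So the singular values of \(A\) are \(1/\sigma_i(X)\), and taking the ratio of the extreme values gives \(\kappa(A)=\sigma_{\max}(X)/\sigma_{\min}(X)=\kappa(X)\). Finally, \(\kappa(\Sigma)=\kappa(X)^2\), since \(\Sigma\) is symmetric positive definite with eigenvalues \(\sigma_i(X)^2\), and this gives \(\sqrt{\kappa(\Sigma)}\).

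I expect no real obstacle. The one point to state carefully is that the notation \((\cdot)^{\dagger/2}\) reduces to the ordinary symmetric inverse square root under the hypotheses. This is what allows the factor \(M^{-1/2}MM^{-1/2}\) to collapse and \((H\Sigma H)^{-1}\) to be factored.
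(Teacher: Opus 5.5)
Your proposal is correct and follows the paper's proof essentially step by step: replace $(H\Sigma H)^{\dagger/2}$ by the ordinary inverse square root, collapse $M^{-1/2}MM^{-1/2}=I_d$, factor $(H\Sigma H)^{-1}=H^{-1}\Sigma^{-1}H^{-1}$ to get $A^\top A=\Sigma^{-1}$, and read the singular values of $A$ off the eigenvalues of $\Sigma^{-1}$. One small correction to your aside that the factorization is the only essential use of $H\succ0$: invertibility of $H$ is also what makes $H\Sigma H\succ0$, and without it the first identity would give the projector onto $\range(H\Sigma H)$ rather than $I_d$.
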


This is the first qualitative difference from \algname{GD}. For \algname{GD} on a quadratic, the linear rate is governed by \(\kappa(H)\). For \algname{PolarGD}, the conditioning of the implicit preconditioner is governed by the current iterate covariance. Thus the Hessian condition number alone is not predictive: a badly conditioned \(H\) can be harmless near row-isotropic iterates, while a well-conditioned \(H\) can be paired with an ill-conditioned preconditioner if \(X\) has tiny singular values.

In full row rank, \(QQ^\top=I_d\), hence $\tr(Q^\top H Q)=\tr(H)$.
Combining this with the line-search identity from Section~\ref{sec:one_step} yields the following
exact contraction.

\begin{restatable}[Exact progress coefficient for \algname{PolarGD}]{theorem}{PropProgress}
\label{prop:precond:progress}
Assume \(H\succ0\) and \(\rank(X)=d\). Let \(Q=\polar(HX)\), and let
\(X^+=X-\gamma^\star(Q)Q\) be the exact line-search update. Then
\begin{equation}
\label{eq:precond:gamma_star}
  \gamma^\star(Q)=\frac{\|HX\|_{\nuc}}{\tr(H)},
  \qquad
  f(X^+)=f(X)-\frac{\|HX\|_{\nuc}^2}{2\tr(H)}.
\end{equation}
Equivalently,
\begin{equation}
\label{eq:precond:progress_coeff}
  \frac{f(X^+)}{f(X)}=1-p(X),
  \qquad
  p(X):=\frac{\|HX\|_{\nuc}^2}{\tr(H)\tr(HXX^\top)}.
\end{equation}
Moreover,
\begin{equation}
\label{eq:precond:p_bounds}
  \max\left\{
    \frac{\lambda_{\min}(H)}{\tr(H)},\,
    \frac{4\kappa(X)}{(1+\kappa(X))^2}
  \right\}
  \le p(X)\le 1,
\end{equation}
where both lower bounds are tight, and the upper bound is attained if and only if \(XX^\top=cI_d\), in which case \algname{PolarGD} reaches the
minimizer in one step. Finally, for a fixed dual-adapted step \(X^+=X-\gamma\norm{HX}_{\nuc}Q\),
\begin{equation}
\label{eq:precond:fixed_step}
  \frac{f(X^+)}{f(X)}=1-\alpha(2-\alpha)\,p(X),
  \qquad
  \alpha:=\gamma\tr(H).
\end{equation}
\end{restatable}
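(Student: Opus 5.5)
We take \(X^\star=0\), so \(G=HX\) and \(f(X)=\tfrac12\tr(X^\top HX)\). Since \(\rank(X)=d\) and \(H\succ0\), \(G=HX\) also has rank \(d\). By \eqref{eq:precond:polar_whiten_compact} this gives \(QQ^\top=I_d\), hence \(\tr(Q^\top HQ)=\tr(HQQ^\top)=\tr(H)\).

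\textbf{Line-search and fixed-step identities.} Inserting \(\ip{G}{Q}=\norm{HX}_{\nuc}\) and the curvature \(\tr(H)\) into \eqref{eq:quad_gamma_star} gives \eqref{eq:precond:gamma_star} at once. Dividing by \(f(X)=\tfrac12\tr(HXX^\top)\) gives \eqref{eq:precond:progress_coeff}. For the fixed dual-adapted step \(D=\norm{HX}_{\nuc}Q\), the quadratic is exact, so
\[
f(X-\gamma D)=f(X)-\gamma\norm{HX}_{\nuc}^2+\tfrac12\gamma^2\norm{HX}_{\nuc}^2\tr(H).
\]
With \(\alpha=\gamma\tr(H)\) this equals \(f(X)-\tfrac{\norm{HX}_{\nuc}^2}{2\tr(H)}\,\alpha(2-\alpha)\), which is \eqref{eq:precond:fixed_step}.

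\textbf{Bounds on \(p\).} For the upper bound \(p\le1\), write \(\norm{HX}_{\nuc}=\ip{H^{1/2}\cdot H^{1/2}X}{Q}\) and apply Cauchy--Schwarz to the pair \((H^{1/2}X,\,H^{1/2}Q)\):
\[
\norm{HX}_{\nuc}^2\le\tr(X^\top HX)\,\tr(Q^\top HQ)=\tr(HXX^\top)\tr(H).
\]
Equality holds iff \(H^{1/2}Q\propto H^{1/2}X\), i.e.\ \(X=cQ\). Then \(XX^\top=c^2QQ^\top=c^2I_d\). Conversely, if \(XX^\top=cI_d\), Theorem~\ref{thm:onestep_regimes}(iii) gives one-step exactness, so \(f(X^+)=0\) and \(p=1\).

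For the first lower bound, use \(\norm{HX}_{\nuc}\ge\norm{HX}_\F\). This gives \(p\ge \tr(XX^\top H^2)/(\tr(H)\tr(HXX^\top))\), and \(\tr(XX^\top H^2)\ge\lambda_{\min}(H)\tr(HXX^\top)\) because \(H^2\succeq\lambda_{\min}(H)H\).

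The second lower bound is the main obstacle; I would get it from a Kantorovich-type inequality via the preconditioner of Proposition~\ref{prop:precond:conditioning}. Write \(\norm{HX}_{\nuc}=\tr\bigl((X^\top H^2X)^{1/2}\bigr)\), and let \(\sigma_1\ge\dots\ge\sigma_d>0\) be the singular values of \(X\). The idea is to compare \(\norm{HX}_{\nuc}^2\) with \(\tr(H)\tr(HXX^\top)\) by writing
\[
\norm{HX}_{\nuc}=\tr\bigl((H\Sigma H)^{1/2}\bigr)=\ip{H^{1/2}X}{H^{1/2}Q}.
\]
The step that needs real work is showing that the angle between \(H^{1/2}X\) and \(H^{1/2}Q=H^{1/2}AX\) is controlled by \(\kappa(A)=\kappa(X)\). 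The plan: use Proposition~\ref{prop:precond:conditioning}, \(A^\top A=\Sigma^{-1}\) and \(A\Sigma A^\top=I\), to write \(Q=AX\) with \(A\) having singular values \(1/\sigma_i\). Then express \(\cos\angle(H^{1/2}X,H^{1/2}Q)\) through a symmetric positive definite operator whose spectrum lies in \([\sigma_d,\sigma_1]\) up to scaling. Kantorovich's inequality, \(\cos\ge 2\sqrt{\kappa}/(1+\kappa)\), then gives \(p\ge 4\kappa(X)/(1+\kappa(X))^2\).

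A cleaner route to try first: with \(Y:=H^{1/2}X\), we have \(p=\tr\bigl((Y^\top HY)^{1/2}\bigr)^2/(\tr(H)\norm{Y}_\F^2)\). Bound this in the eigenbasis of \(\Sigma\), reducing to scalar weights \(\sigma_i\) paired with diagonal entries of \(H\) in a suitable basis. Then apply the Pólya--Szegő/Kantorovich bound \((\sum a_ib_i)^2\ge\frac{4\kappa}{(1+\kappa)^2}\sum a_i^2\sum b_i^2\) whenever \(a_i/b_i\in[m,M]\) with \(M/m=\kappa(X)\). The point is to choose the vectors so that the ratio range is exactly \([\sigma_d,\sigma_1]\).

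\textbf{Tightness.} For the first lower bound, take \(d=2\), \(H=\diag(\lambda_{\max},\lambda_{\min})\), and \(X\) nearly rank one aligned with the \(\lambda_{\min}\) eigenvector, letting the other singular value tend to zero. For the Kantorovich bound, take \(H=I\) and \(X\) with two singular values in ratio \(\kappa\), so \(p=(\sigma_1+\sigma_2)^2/(2(\sigma_1^2+\sigma_2^2))\). Comparing this with \(4\kappa/(1+\kappa)^2\) may require choosing \(H\) anisotropic to reach equality, and I would check this numerically before fixing the example.
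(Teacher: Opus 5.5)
Everything except the second lower bound is fine and matches the paper: the line-search and fixed-step identities, the first lower bound, and the upper bound. Your Cauchy--Schwarz on $\ip{H^{1/2}X}{H^{1/2}Q}$ is equivalent to the paper's H\"older step, and your equality case $X=cQ\Rightarrow XX^\top=c^2I_d$ is valid.

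The gap is $p(X)\ge4\kappa(X)/(1+\kappa(X))^2$. This is the substantive claim, and you only outline it.

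Your first route never names the operator whose spectrum lies in $[\sigma_d,\sigma_1]$, and never proves that spectral inclusion. In the paper this operator is $T:=H^{-1/2}(H\Sigma H)^{1/2}H^{-1/2}$. With $\rho:=H/\tr(H)$ one has exactly $p=\tr(\rho T)^2/\tr(\rho T^2)$. The inclusion $\sigma_dI_d\preceq T\preceq\sigma_1I_d$ follows from $\sigma_d^2H^2\preceq H\Sigma H\preceq\sigma_1^2H^2$ by operator monotonicity of the square root.

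Your second route cannot be an exact reduction. $H$ and $\Sigma$ need not commute, so $\norm{HX}_{\nuc}$ is not a function of the $\sigma_i$ and of $h_i:=u_i^\top Hu_i$, where $X=U\diag(\sigma_i)V^\top$. The route does work once you supply a one-sided inequality. Test the nuclear norm against the spectral-norm-one matrix $\polar(X)=UV^\top$:
\[
  \norm{HX}_{\nuc}\ \ge\ \ip{HX}{UV^\top}=\tr\bigl(H\Sigma^{1/2}\bigr)=\sum_i h_i\sigma_i,
  \qquad
  \tr(H)=\sum_ih_i,
  \qquad
  \tr(H\Sigma)=\sum_ih_i\sigma_i^2 .
\]
Now apply your P\'olya--Szeg\H{o} inequality to $(\sqrt{h_i}\,\sigma_i)_i$ and $(\sqrt{h_i})_i$. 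Their entrywise ratios $\sigma_i$ lie in $[\sigma_d,\sigma_1]$, which gives the claim. Completed this way, your route is a legitimate and somewhat more elementary alternative to the paper's. It replaces the exact identity for $p$ and the operator-monotonicity step by a single duality inequality. That inequality is lossy when $H\Sigma\neq\Sigma H$, which is harmless because it is an equality in the commuting tightness instance.

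Your tightness example for the second bound fails as proposed. With $H=I_2$ and singular values in ratio $\kappa$, you get $p=(1+\kappa)^2/\bigl(2(1+\kappa^2)\bigr)$. Since $(1+\kappa)^4-8\kappa(1+\kappa^2)=(\kappa-1)^4>0$ for $\kappa\neq1$, the bound is never attained. Let $0<a<b$ be the extreme singular values. Equality in the weighted Kantorovich step needs the curvature weights $h_i/\tr(H)$ to sit only at the two endpoints, with mass $b/(a+b)$ on $\sigma=a$ and $a/(a+b)$ on $\sigma=b$. The weights $\tfrac12,\tfrac12$ from $H=I_2$ cannot do this. The paper's instance $H=\diag(b,a)$, $X=\diag(a,b)$ does, giving $HX=ab\,I_2$ and $p(X)=4ab/(a+b)^2$.

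Your example for the first bound is correct; it is the $d=2$ case of the paper's near-rank-deficient family.
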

The full-rank assumption holds approximately in all nine regimes of Figure~\ref{fig:representative_plot}: the numerical rank of the gradients stays close to $d$ along the trajectories (Figure~\ref{fig:rank_track}). The coefficient \(p(X)\) is scale-invariant, so it measures the shape of the iterate rather than its magnitude. Row-isotropic iterates give \(p(X)=1\), recovering Theorem~\ref{thm:onestep_regimes}(iii). The second lower bound in~\eqref{eq:precond:p_bounds} behaves like \(4/\kappa(X)\) for ill-conditioned iterates. It is the Kantorovich factor \(4\kappa(H)/(1+\kappa(H))^2\approx4/\kappa(H)\) of \algname{GD} with exact line search, with \(\kappa(H)\) replaced by \(\kappa(X)\): in the worst case, \algname{GD} is limited by the conditioning of the curvature and \algname{PolarGD} by that of the iterate (Remark~\ref{rem:cv} gives a coefficient-of-variation form of \(p(X)\)). On dense quadratics, \algname{GD}'s iteration count grows roughly linearly with \(\kappa(H)\), while tuned \algname{PolarGD}'s grows only slowly (Figure~\ref{fig:ls_vs_fixed}(c)). The first lower bound, however, can be as small as \(1/(1+(d-1)\kappa)\approx 1/(\kappa d)\), roughly \(d/4\) times smaller than \(4\kappa/(\kappa+1)^2\approx 4/\kappa\). It is tight: Appendix~\ref{app:pessimistic_polar_bound} constructs a near-rank-deficient full-rank family with \(p(X)\to \lambda_{\min}(H)/\tr(H)\), on which \algname{GD} is nearly one-step exact, while \algname{PolarGD} contracts at exactly \(\bigl(\frac{(d-1)\kappa-1}{(d-1)\kappa+1}\bigr)^2\) per step from the second step on, \algname{GD}'s worst-case rate with \(\kappa\) replaced by \((d-1)\kappa\) (Corollary~\ref{cor:two_cluster}). Figure~\ref{fig:ns_abl} shows such an instance with $H=I_d$: \algname{GD} converges in one step, whereas exact \algname{PolarGD} contracts by a fixed factor per step.
Thus exact line search alone does not make \algname{PolarGD} uniformly better than \algname{GD}; its advantage depends on how the iterate covariance evolves along the trajectory. By~\eqref{eq:precond:fixed_step}, the same coefficient governs the tuned fixed steps of Figure~\ref{fig:representative_plot}; line search is the rate \(\alpha=1\), which maximizes the one-step factor but, as Section~\ref{sec:alternations} illustrates, need not give the best trajectory. The adverse example also motivates the softened polar variants below: the bad regime is caused by exact polar normalization of tiny singular modes.

\begin{figure}[t]
    \centering
    \includegraphics[width=0.9\linewidth]{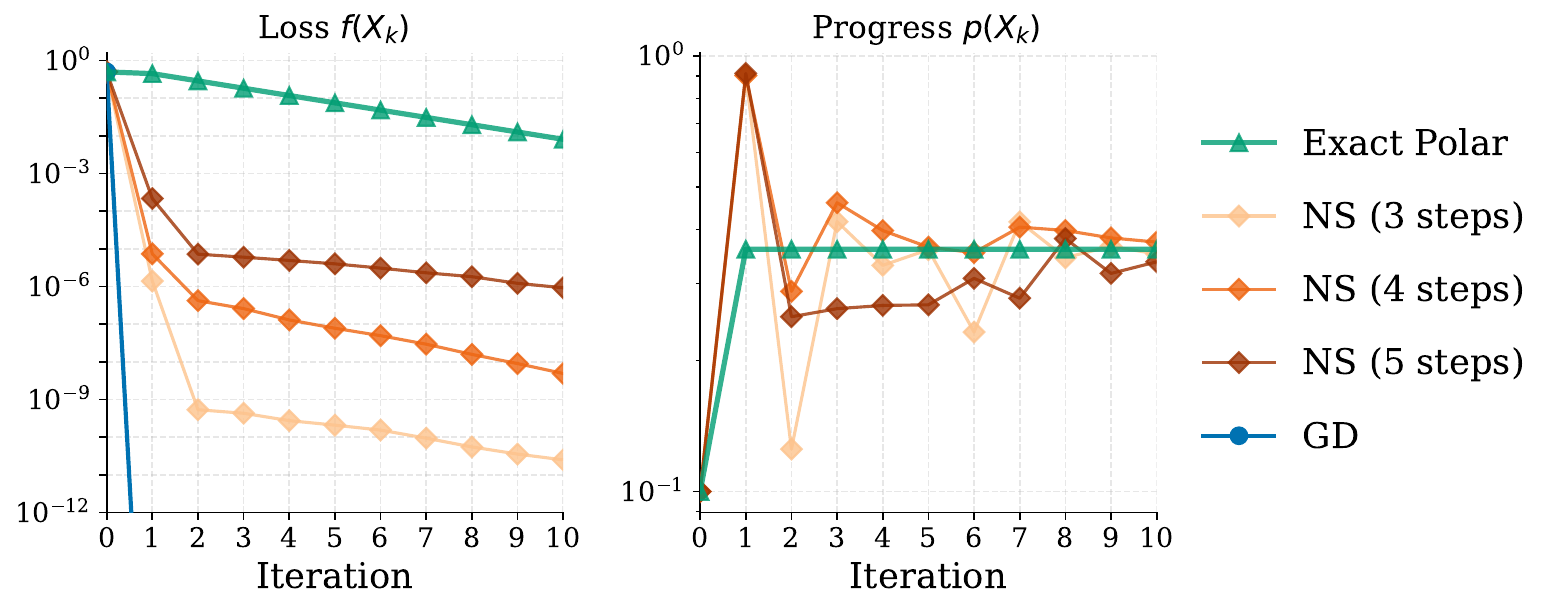}
    \caption{\textbf{Spectral recovery near rank deficiency.}
    \textit{(Left)} Loss $f(X_k)$, \textit{(Right)} exact progress coefficient $p(X_k)$. We initialize at $X_0 = O^{\T}\diag(1, \delta, \ldots, \delta)O$ for $\delta=10^{-4}$, $O$ random orthogonal, $H=I_d$ and $d=10$; with $H=I_d$, \algname{GD} is one-step exact, so this setting isolates the effect of the iterate's spectrum on \algname{PolarGD}. All methods use exact line search. Exact \algname{PolarGD} contracts at the fixed rate \((1-2/d)^2\) per step after the first (Corollary~\ref{cor:two_cluster}), while \approxalg{Newton--Schulz} damps the tiny singular values and recovers quickly, the more so with fewer steps. \algname{GD} is shown for reference. Further details in Appendix~\ref{app:exp_details}.
    }
    \label{fig:ns_abl}
\end{figure}

\paragraph{Relation to prior preconditioning views.}
Writing $\polar(G)=(GG^\top)^{-1/2}G$ as a gradient-built preconditioner is common \citep{lau2025polargrad,ma2026preconditioning}; on quadratics, substituting $G=HX$ shows that this preconditioner is set by the iterate, with conditioning $\kappa(X)$ whatever $H$ is (Proposition~\ref{prop:precond:conditioning}). Under smoothness and the Polyak--{\L}ojasiewicz condition, \citet{lau2025polargrad} prove linear rates for the dual-adapted step that depend on $\kappa(H)$, such as $1-1/(r\kappa(H))$ for gradients of rank $r$; on quadratics, the first lower bound in~\eqref{eq:precond:p_bounds} has this form and is tight, so this dependence cannot be removed in the worst case, while the second bound depends on $\kappa(X)$ alone. When $H$ and $XX^\top$ commute, $Q=\polar(X)$ and $H$ drops out of the direction (Corollary~\ref{cor:precond:commuting}). \citet{ma2026preconditioning} exploit decoupling of this kind, along trajectories aligned with the eigenbasis of the problem and with geometrically decaying steps, to prove rates independent of the condition number for matrix factorization and for an in-context-learning quadratic, and leave matrix linear regression, the setting studied here, open.

\paragraph{Greedy steps are not the best fixed steps.}
For \algname{GD}, exact line search typically does at least as well as a tuned fixed step; for \algname{PolarGD} and
\algname{SignGD} it does not (Figure~\ref{fig:ls_vs_fixed}; Appendix~\ref{app:dim_ret} gives further settings). In the
dual-adapted parameterization, line search is the fixed rate \(\alpha=1\) in~\eqref{eq:precond:fixed_step}, which
maximizes the one-step factor, yet the tuned rate \(\alpha\approx1.7\) converges much faster: it keeps \(p(X_k)\) near
\(0.6\), whereas under line search \(p(X_k)\) decays to \(0.01\) within \(150\) iterations. Line search itself creates the
tiny singular values that make \(p(X)\) small. When \(H\) and \(XX^\top\) commute, a fixed step maps the singular values
of \(X\) to \(s_i^+=|s_i-\alpha\bar s_H|\), where \(\bar s_H\) is their \(\lambda\)-weighted mean
(Corollary~\ref{cor:precond:commuting}). Line search subtracts the mean, which lies in the bulk of the spectrum, so
singular values near it are sent close to zero and a growing tail of tiny ones forms; a longer step instead reflects
small singular values to large ones. As in Figure~\ref{fig:alternating_winner}, the locally optimal choice is not
optimal along the trajectory, and the gap grows with \(\kappa(H)\) (Figure~\ref{fig:ls_vs_fixed}(c)).

\mybox[gray!8]{On our quadratic loss, \algname{PolarGD} acts like a self-preconditioned \algname{GD} step: the polar
map sets all singular values of the gradient to one, so the effective preconditioner is controlled by the iterate
covariance \(XX^\top\), not by \(\kappa(H)\) alone. This helps near row-isotropic iterates, but can
be pessimistic near rank deficiency, where exact polar normalization gives tiny singular modes the
same weight as large ones.}

\paragraph{Active-subspace extension.}
When \(\rank(G)<d\), \(QQ^\top\) is no longer \(I_d\) but the projector
\(\Pi_G:=\Pi_{\operatorname{range}(G)}\). The exact line-search formula remains valid with active curvature
\(\tr(H\Pi_G)\):
\begin{equation}
\label{eq:precond:active_progress_main}
  \gamma^\star(Q)=\frac{\|G\|_{\nuc}}{\tr(H\Pi_G)},
  \qquad
  f(X^+)=f(X)-\frac{\|G\|_{\nuc}^2}{2\tr(H\Pi_G)}.
\end{equation}
Moreover, like \algname{GD} on this model, exact \algname{PolarGD} acts on \(X\) from the left and cannot increase rank: for any step size, \(\rank(X_{k+1})\le \rank(X_k)\) (Proposition~\ref{prop:rank_nonincreasing}).
Thus rank deficiency is an active-subspace regime rather than a failure of the theory. Appendix~\ref{app:rank_deficient_active_subspace} gives the full projector-based extension.
\subsection{Near-Rank-Deficiency and Softened Polar Updates}
\label{subsec:precond:softened_updates}

The representation \(Q=(GG^\top)^{\dagger/2}G\) also exposes a failure mode of exact polar
normalization: tiny nonzero singular values are inverted. Two simple variants reduce this sensitivity:
rank-truncated polar,
$Q^{(s)}:=U_{:,1:s}V_{:,1:s}^\top$,
which drops small modes, and damped polar,
\[
  Q_\varepsilon:=(GG^\top+\varepsilon I_d)^{-1/2}G,
\]
which maps each singular value to \(\sigma_i/\sqrt{\sigma_i^2+\varepsilon}\). Up to the factor \(\sqrt\varepsilon\),
damped polar is the soft spectral clipping \((I_d+GG^\top/c^2)^{-1/2}G\) with \(c=\sqrt\varepsilon\), a smooth version of
\(\sigma_i\mapsto\min(\sigma_i,c)\), which \citet{jiang2026enhancing} apply to the updates of \algname{AdamW}-type optimizers; there
\(c\) is large and only the largest singular values are capped, whereas \algname{Muon} operates in the opposite regime. We define
these variants and give basic properties in Appendix~\ref{app:trunc_damp}.

\paragraph{Finite \approxalg{Newton--Schulz} as damped polar.} 
Practical \algname{Muon} replaces the SVD by $q$ \approxalg{Newton--Schulz} steps on $G/\norm{G}_\F$, which keep the singular vectors of
$G$ and map each normalized singular value $s$ to $\phi_q(s)$ (Appendix~\ref{app:newton_schulz_filter}). Small
singular values are multiplied by a fixed factor $\ell_q$, which grows geometrically with $q$ ($\ell_5\approx485$ for
\algname{Muon}'s coefficients), while those above about $1/\ell_q$ are mapped close to $1$. This is the behavior of damped polar
with $\varepsilon_q=\ell_q^{-2}$, i.e., a rescaled soft spectral clipping at $c=1/\ell_q$: finite \approxalg{Newton--Schulz} acts like \algname{GD} on singular values below the damping scale
$1/\ell_q$ and like exact polar above it, and its damping decreases geometrically with the number of steps. Near rank
deficiency it therefore avoids the slow regime of exact polar, and fewer steps converge faster
(Figure~\ref{fig:ns_abl}). This contrasts with convergence bounds that treat the \approxalg{Newton--Schulz} error only as a degradation factor
\citep{shulgin2026beyond,kim2026convergence}: here a less accurate polar factor converges faster. \citet{gonon2026insights}
observe numerically that five \approxalg{Newton--Schulz} steps can beat exact polar on some quadratics; damping of tiny singular values
gives a deterministic mechanism for such gains.

\begin{figure}[t]
    \centering
    \includegraphics[width=0.95\linewidth]{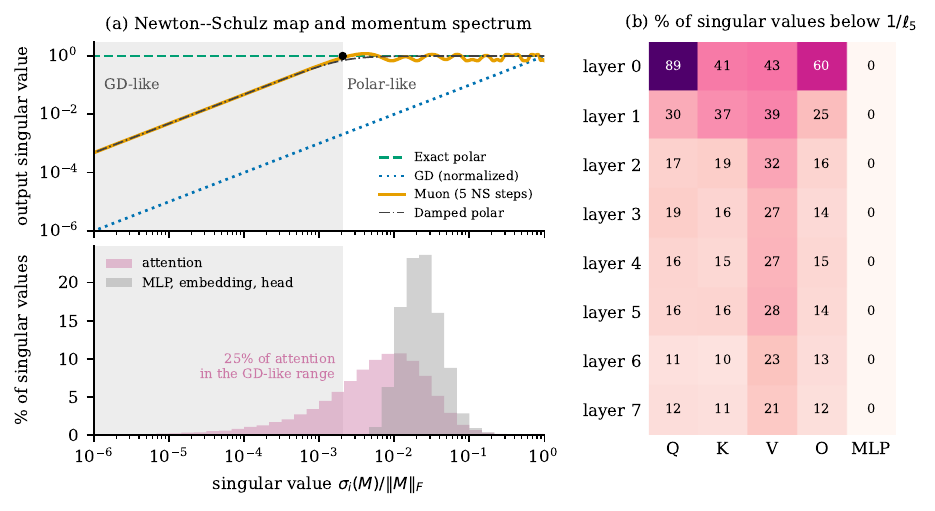}
    \caption{\textbf{\algname{Muon} departs from exact polar on a sizable part of the attention spectrum.} Momentum matrices $M$ that \algname{Muon} orthogonalizes in a 90M-parameter transformer (final checkpoint), with singular values normalized by $\norm{M}_\F$ as in training. \textit{(a)} Top: output vs.\ input singular value for exact polar, normalized \algname{GD}, \algname{Muon}'s five \approxalg{Newton--Schulz} (NS) steps, and damped polar with $\varepsilon=\ell_5^{-2}$; below the damping scale $1/\ell_5$ (shaded, \algname{GD}-like), \algname{Muon}'s output is proportional to its input, as for \algname{GD}, and above it \algname{Muon} behaves like exact polar. Bottom: distribution of the singular values. \textit{(b)} Percentage of singular values below $1/\ell_5$, by layer and matrix; the embedding and the LM head have none.}
    \label{fig:transformer}
\end{figure}

\paragraph{\approxalg{Newton--Schulz} in practice.} To check whether this regime occurs in practice, we examined a 90M-parameter
transformer trained with \algname{Muon} on 1.8B tokens (App.~\ref{app:transformer}). Figure~\ref{fig:transformer}(a) shows the
singular values of the momentum matrices $M$ that \algname{Muon} orthogonalizes, normalized by $\norm{M}_\F$ as in training.
About a quarter of the singular values of the attention matrices lie below the damping scale
$1/\ell_5\approx2\cdot10^{-3}$, where five \approxalg{Newton--Schulz} steps act like \algname{GD} rather than like exact polar; the MLP
matrices, the embedding, and the LM head have none. Over all 50 matrices, exact polar would place about 16\% of the
update's squared Frobenius norm on these directions and five \approxalg{Newton--Schulz} steps about 5\%. The effect is strongly
heterogeneous (Figure~\ref{fig:transformer}(b)): it is largest in the first layer and decreases with depth. As in
NanoGPT \citep[Figure~5]{shulgin2026beyond}, attention matrices are thus much harder to orthogonalize than MLP
matrices, and their smallest singular values explain why: \approxalg{Newton--Schulz} becomes polar-like on all directions only
once $\ell_q$ times the smallest normalized singular value exceeds $1$, which takes 9--18 steps for the attention
matrices and 4--5 for the MLP matrices.

\FloatBarrier
\section{Conclusion}
\label{sec:conclusion}
We isolated the geometry of the orthogonalized update shared by \algname{Muon} and related matrix optimizers by analyzing \algname{PolarGD} on matrix quadratics.
\algname{PolarGD} benefits from orthogonal equivariance and implicit preconditioning by the iterate, but it is not uniformly better than \algname{GD} or \algname{SignGD}: it is fast near row-isotropic iterates and slow near rank deficiency, where exact polar normalization overweights tiny singular values. Finite \approxalg{Newton--Schulz} softens the update exactly there and can then converge faster than exact polar, so it is not merely an approximation of the polar factor. The trajectory results rely on the quadratic structure, but Proposition~\ref{prop:model_decrease} only needs the curvature bounds~\eqref{eq:matrix_sandwich}, and equivariance and the softening by \approxalg{Newton--Schulz} hold for any objective, including the momentum matrix that \algname{Muon} orthogonalizes. Our transformer measurements show that the regime where exact polar and \approxalg{Newton--Schulz} differ occurs in practice. This suggests treating the number of \approxalg{Newton--Schulz} steps, or the damping level, as a design choice of the optimizer rather than only as an accuracy setting.

\label{subsec:precond:quadratic_precond}

\section*{Acknowledgements}
The research reported in this publication was supported by funding from King Abdullah University of Science and
Technology (KAUST): i) KAUST Baseline Research Scheme, ii) Center of Excellence for Generative AI, under award number
5940, iii) CRG Grant ORFS-CRG12-2024-6460. This research was partially
funded by the European Commission under the grant No.~101195233 (OpenEuroLLM). Antonio Orvieto acknowledges the
financial support of the Hector Foundation.

\bibliography{ref}

\clearpage

\appendix

\tableofcontents

\section*{Where the Main-Text Results Are Proved}
\begin{center}
\small
\begin{tabular}{@{}ll@{}}
\toprule
Main-text result & Appendix \\
\midrule
Proposition~\ref{prop:model_decrease} (one-step decrease along a direction) & \ref{app:onestep_proofs} \\
Isotropic case (Section~\ref{subsec:isotropic}) & \ref{app:isotropic} \\
Theorem~\ref{thm:onestep_regimes} (one-step exactness regimes) & \ref{app:proof_thm1} \\
Theorem~\ref{thm:orthogonal} (orthogonal equivariance) & \ref{subsec:orth_equiv} \\
$Q=\sign(G)$ for diagonal $H$ and $X-X^\star$ with $d=m$ (Section~\ref{sec:curvature}) & \ref{app:diag_quadratic} \\
Alternating local winner (Figure~\ref{fig:alternating_winner}) & \ref{app:2x2_alternation} \\
Proposition~\ref{prop:precond:conditioning} and Theorem~\ref{prop:precond:progress} (preconditioner, progress coefficient) & \ref{app:precond_proofs:identities} \\
Adverse family and its trajectory-level rate (Section~\ref{subsec:precond:polar_precond}) & \ref{app:pessimistic_polar_bound}, \ref{subsec:precond:root_dynamics} \\
Corollary~\ref{cor:precond:commuting} and the fixed-step update $s_i^+=|s_i-\alpha\bar s_H|$ (Section~\ref{subsec:precond:polar_precond}) & \ref{subsec:precond:root_dynamics} \\
Active-subspace formula and rank monotonicity (Section~\ref{subsec:precond:polar_precond}) & \ref{app:rank_deficient_active_subspace} \\
Rank-truncated and damped polar directions (Section~\ref{subsec:precond:softened_updates}) & \ref{app:trunc_damp} \\
Finite \approxalg{Newton--Schulz} as damped polar (Section~\ref{subsec:precond:softened_updates}) & \ref{app:newton_schulz_filter} \\
Transformer diagnostic (Section~\ref{subsec:precond:softened_updates}) & \ref{app:transformer} \\
\bottomrule
\end{tabular}
\end{center}

\section{Experimental Details}
\label{app:exp_details}
Below we provide implementation details of the experiments behind the main-text figures. Figure~\ref{fig:alternating_winner} is detailed in App.~\ref{app:2x2_alternation}, Figure~\ref{fig:transformer} in App.~\ref{app:transformer}, and Figure~\ref{fig:ls_vs_fixed} in App.~\ref{app:exp_details:ls_vs_fixed}.
\subsection{Learning Rate Selection}
\label{app:exp_details:lr_tune}
Unless stated otherwise, whenever we report results for a ``best-tuned'' learning rate, the following two-stage grid search is applied. In the first stage, we evaluate $50$ learning rates drawn from an evenly-spaced logarithmic grid over $[10^{-8}, 10^2]$. We select the candidate $\gamma_c$ that minimizes the loss at the final iterate among runs that did not diverge. In the second stage, we evaluate a further 50 points on a finer logarithmic interval $[\frac{\gamma_c}{10}, 10\gamma_c ]$. The final reported learning rate is the one whose run first crosses a threshold of $\varepsilon$ (in the least number of iterates). If no run reaches this threshold, we select the run which minimizes the loss at the final iterate. The tuned rates are dual-adapted (Section~\ref{sec:setup}), $X_{k+1} = X_k - \gamma\norm{G_k}_{\nuc}Q_k$ for \algname{PolarGD} and $X_{k+1} = X_k - \gamma\norm{G_k}_1S_k$ for \algname{SignGD}, except in Figure~\ref{fig:lineSearchPol}, which uses LMO-normalized steps $X_{k+1} = X_k - \gamma\,\mathrm{LMO}(G_k)$. 
\subsection{Initialization-Hessian Regimes (Figure~\ref{fig:representative_plot})}
\label{app:exp_details:fig1}
\paragraph{Hessian Structures.}
We test three specific curvature profiles to isolate the effects of condition number and coordinate alignment. To model Hessians arising in ill-conditioned, multi-output regression problems, we draw random matrices from the Marchenko--Pastur (MP) ensemble:
\[
    H = \frac{1}{N} Z^\T Z, \qquad Z \in \R^{N\times d} , \quad Z_{ij}\sim \mathcal{N}(0, 1) 
\]
Here the aspect ratio $d/N \in (0, 1]$ (the ratio of input features to samples) dictates the conditioning. Using this framework, we construct the following Hessian regimes
\begin{itemize}
    \item \textbf{Isotropic Hessian:} $H = I_d$. This regime has condition number $\kappa=1$.
    \item \textbf{Dense:} $H=\frac{1}{N} Z^\T Z$ with $d/N=0.8$.
    \item \textbf{Diagonal:} $H = \diag(\lambda_1, \ldots, \lambda_d)$, where $\lambda_i$ are the exact eigenvalues of the dense MP matrix.
\end{itemize}
\paragraph{Initialization Strategies.}
\begin{itemize}
    \item \textbf{Gaussian:} $X_0 \sim \mathcal{N}(0, 1)$ element-wise. This is the standard baseline, as in neural-network initialization.
    \item \textbf{Orthogonal:} $X_0$ is a random orthogonal matrix. 
    \item \textbf{Vertex:} $X_0 \in \{-1, 1 \}$ element-wise. Each entry is drawn randomly.
\end{itemize}

Within each Hessian-Initialization setting, the best learning rate for each optimizer is selected using the two-stage learning-rate grid of App.~\ref{app:exp_details:lr_tune} with threshold $\varepsilon = 10^{-8}$. We choose $d=100$ and $150$ iterations for each setting. 

To empirically validate the full-rank assumption of Theorem~\ref{prop:precond:progress}, we track $\rank(HX_k)$ at each iterate, computed in float32 precision via PyTorch \citep{paszke2019pytorchimperativestylehighperformance}.
\subsection{Zipfian Language-Modeling Proxy (Figure~\ref{fig:zipf_rotate})}
\label{app:exp_details:zipf}
To evaluate how these optimizers perform under the heavy-tailed geometries typical of language modeling, we construct a multi-output linear regression proxy inspired by \citet{kunstner2025scalinglawsgradientdescent}. 
\paragraph{Linear Bigram Model.}
Consider fitting a linear next-token prediction model $X\in \R^{d\times d}$ on $n$ token pairs $z_i, y_i \in \{0,1\}^d$ representing one-hot encodings. The squared loss is given by 
\[
L(X) = \frac{1}{2n} \| ZX - Y \|_\F^2,
\]
where $X_{ij}$ is an estimate of $p(\text{token}_j \mid \text{token}_i)$ for each $i,j \in [d]$. To simulate the heavy-tailed nature of language modeling, token frequencies follow a Zipfian distribution $\pi_k \propto k^{-\zeta}$ where $\zeta>0$ controls the heaviness of the tail.  Following their framework, we consider the equivalent quadratic problem 
\[
f(X) = \frac{1}{2} \tr((X-X^\star)^{\T} H (X - X^\star)),
\]
initialized at $X_0=0$, with Hessian $H = \diag(\pi_1, \ldots, \pi_d)$ and target $X^\star_{ij}=p(\text{token}_j \mid \text{token}_i)$. We draw each row of $X^\star$ independently from a Dirichlet distribution with parameter $100\,\pi$, so that the next-token distributions fluctuate around the Zipf frequencies; $X^\star$ then has full rank almost surely. A full-rank target matters: for a rank-one target such as $\mathbf 1\pi^\T$ (every row equal to $\pi$), every gradient $H(X_k-X^\star)$ from $X_0=0$ is rank one, so dual-adapted \algname{PolarGD} coincides with \algname{GD}.
\paragraph{Rotated Hessians.}
To test the rotation dependence of the optimizers, we additionally evaluate a rotated setting: $H = R^\T \diag(\pi) R$, where $R \in \mathbb{R}^{d \times d}$ is a random orthogonal matrix. This rotation preserves the spectrum while removing axis alignment. This approach follows recent work by \citet{zhang2025understandingadamrequiresbetter}, which demonstrates that \algname{Adam} is highly sensitive to random rotations.
We rotate the Hessian in two ways (Figure~\ref{fig:zipf_3panel}). (i)~As in Figure~\ref{fig:zipf_rotate}, the target $X^\star$ is kept fixed in the standard basis. For \algname{GD} and \algname{PolarGD}, this is equivalent to keeping $H$ diagonal and replacing the target by $RX^\star$, so it changes the problem and is not covered by Theorem~\ref{thm:orthogonal}. (ii)~$H$ and $X^\star$ rotated together, $X^\star\mapsto R^\T X^\star$: the same regression written in rotated feature coordinates, as with input embeddings $ZR$ in place of one-hot vectors. By Theorem~\ref{thm:orthogonal}, the \algname{GD} and \algname{PolarGD} trajectories are then exactly those of the diagonal case; in our runs, \algname{PolarGD} needs the same number of iterations to reach $10^{-15}$ as without rotation for every seed. \algname{SignGD} degrades in both cases: after $1000$ iterations, its relative loss is between $10^{-8}$ and $10^{-2}$, whereas without rotation it reaches $10^{-15}$ in about $200$ iterations.

\begin{figure}[h]
    \centering
    \includegraphics[width=0.9\linewidth]{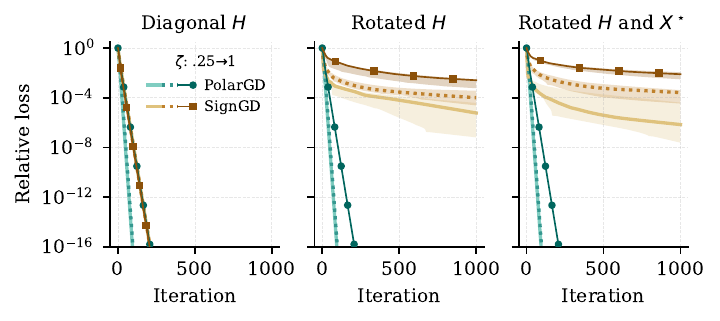}
    \caption{The Zipf quadratic of Figure~\ref{fig:zipf_rotate} with diagonal $H$, with $H$ rotated and $X^\star$ fixed (as in Figure~\ref{fig:zipf_rotate}), and with $H$ and $X^\star$ rotated together ($X^\star\mapsto R^\T X^\star$). Same tuning; median and range over 5 seeds. In the third panel, the \algname{PolarGD} curves coincide with those of the first by Theorem~\ref{thm:orthogonal}.}
    \label{fig:zipf_3panel}
\end{figure} 
We set $d=100$ and for each $\zeta \in \{0.25, 0.5, 1.0 \}$ and each of the three regimes, we tune the best dual-adapted learning rate with the procedure of App.~\ref{app:exp_details:lr_tune} with threshold $\varepsilon=10^{-15}$ on the relative loss $f(X_k)/f(X_0)$, separately for each of 5 seeds (each seed draws $X^\star$ and $R$), and run $1000$ iterations. The polar factor is computed by an SVD; along the reported trajectories, $\sigma_{\min}(G_k)/\sigma_{\max}(G_k)\ge10^{-8}$ until the relative loss reaches $10^{-15}$. 

\subsection{Spectral Recovery Near Rank Deficiency (Figure~\ref{fig:ns_abl})}
To illustrate that finite \approxalg{Newton--Schulz} behaves like damped polar, we compare it with exact polar normalization near rank deficiency.

\paragraph{Near-singular initialization.} 
We isolate the spectral recovery dynamics by setting the Hessian to the identity matrix $H=I$ in dimension $d=10$. We initialize the weights at a near rank-deficient state:
\[
X_0 = O^\T \diag(1, \delta, \ldots, \delta)O, \qquad O \in \R^{10\times 10} \ \text{random orthogonal}, \quad \delta = 10^{-4}
\]
\paragraph{Methods.}
We compare the \emph{exact} \algname{PolarGD} update against the practical \algname{Muon} update with $q\in\{3,4,5\}$ \approxalg{Newton--Schulz} steps to observe how tiny singular modes are handled. \approxalg{Newton--Schulz} uses \algname{Muon}'s coefficients $(a,b,c)=(3.4445,-4.7750,2.0315)$ and the Frobenius normalization of the reference implementation, all methods use exact line search, and we run $10$ iterations. We further track the progress coefficient $p(X_k)= \frac{\|HX_k\|_{\nuc}^2}{\tr(H)\tr(HX_kX_k^\top)}$ introduced in Section~\ref{subsec:precond:polar_precond}.
\begin{figure}
    \centering
    \includegraphics[width=0.9\linewidth]{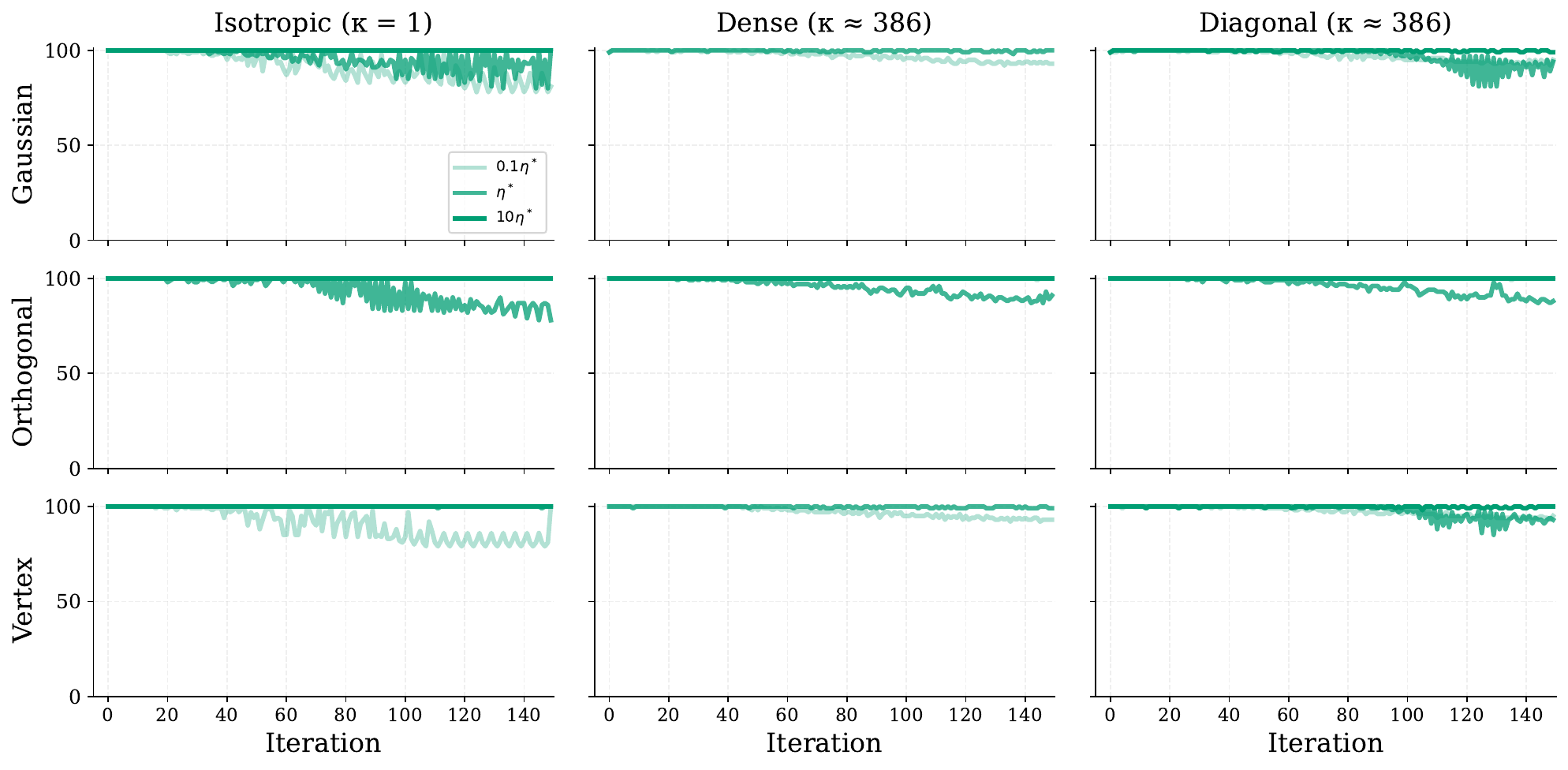}
    \caption{\textbf{Tracking the rank for \algname{PolarGD}.} For the experiments of App.~\ref{app:exp_details:fig1} (Figure~\ref{fig:representative_plot}), we track the rank of the gradient $G=HX_k$ for the best performing \algname{PolarGD} learning rate $\gamma_{\mathrm{best}}$ along with $0.1\gamma_{\mathrm{best}}$ and $10\gamma_{\mathrm{best}}$. The rank remains near full across the nine regimes, validating the full-rank assumption used in Theorem~\ref{prop:precond:progress}. }
    \label{fig:rank_track}
\end{figure}

\subsection{Line Search versus Tuned Dual-Adapted Steps (Figure~\ref{fig:ls_vs_fixed})}
\label{app:exp_details:ls_vs_fixed}
\algname{PolarGD} uses the dual-adapted step $X_{k+1}=X_k-(\alpha/\tr(H))\norm{G_k}_{\nuc}Q_k$, for which $\alpha=1$ is exact line search in full row rank (Theorem~\ref{prop:precond:progress}), and \algname{GD} uses either exact line search or the fixed step $X_{k+1}=X_k-(\eta/\lambda_{\max}(H))G_k$. The step sizes are tuned on the grids below instead of with the procedure of App.~\ref{app:exp_details:lr_tune}.

\paragraph{Panels (a) and (b).} We set $d=m=100$ and $H=Z^\T Z/N$ with $Z\in\R^{N\times d}$ standard Gaussian and $N=d/0.8$ (random seed $6$); $X_0$ has i.i.d.\ $\mathcal N(0,1)$ entries, and all methods run $150$ iterations. The \algname{PolarGD} rate is tuned over $\alpha\in\{0.2,0.3,\dots,1.0\}\cup\{1.05,1.10,\dots,1.95\}$ and the \algname{GD} step over $\eta$ on a $150$-point uniform grid in $[0.5,1.99]$, in both cases by the loss after $150$ iterations.

\paragraph{Panel (c).} We set $d=m=50$ and $H=O\diag(\lambda)O^\T$, where $O$ is the orthogonal factor of the QR decomposition of a Gaussian matrix and $\lambda$ is log-spaced from $1$ to $\kappa$, for $\kappa\in\{10,30,100,300,10^3,3\cdot10^3,10^4\}$; $X_0$ has i.i.d.\ $\mathcal N(0,1)$ entries, and we use $3$ seeds per $\kappa$. We report the number of iterations to reach $f(X_k)/f(X_0)\le10^{-6}$. The \algname{PolarGD} rate is tuned over $\alpha\in\{1.3,1.35,\dots,1.95\}$, with each run capped at $3\cdot10^4$ iterations, and the \algname{GD} fixed step $\eta/\lambda_{\max}(H)$ is scanned over $7$ evenly spaced values in $[0.85,1]\cdot2/(\lambda_{\min}(H)+\lambda_{\max}(H))$.

\section{Proofs for Sections~\getrefnumber{sec:setup} and~\getrefnumber{sec:one_step}}
\label{app:onestep_proofs}

\PropDecrease*
We prove a slightly stronger statement: if, in addition, $\norm{D}_M^2:=\ip{MD}{D}>0$, the exact step and decrease satisfy
\begin{equation}
\label{eq:exact_bracket}
  \frac{\ip{G}{D}}{\norm{D}_L^2} \le \gamma^\star(D) \le \frac{\ip{G}{D}}{\norm{D}_M^2},
  \qquad
  \frac{\ip{G}{D}^2}{2\norm{D}_L^2} \le \Delta(D) \le \frac{\ip{G}{D}^2}{2\norm{D}_M^2}.
\end{equation}
For $L=M$, the bounds coincide, which gives the equality case of Proposition~\ref{prop:model_decrease}.
\begin{proof}
Write $g:=\ip{G}{D}$, $c_L:=\norm{D}_L^2$, $c_M:=\norm{D}_M^2$, and $\phi(\gamma):=f(X-\gamma D)$ for $\gamma\ge0$. Applying~\eqref{eq:matrix_sandwich} at $X$ with direction $-\gamma D$ gives
\begin{equation}
\label{eq:app:phi_sandwich}
  f(X)-\gamma g+\tfrac{\gamma^2}{2}c_M \;\le\; \phi(\gamma) \;\le\; f(X)-\gamma g+\tfrac{\gamma^2}{2}c_L,
  \qquad \gamma\ge0.
\end{equation}

\emph{Step 1: the decrease~\eqref{eq:model_decrease}.} Evaluating the upper bound in~\eqref{eq:app:phi_sandwich} at $\gamma_L(D)=g/c_L$ gives $f(X^+)\le f(X)-g^2/c_L+g^2/(2c_L)=f(X)-g^2/(2c_L)$.

\emph{Step 2: derivative bounds.} We show that $-g+\gamma c_M\le\phi'(\gamma)\le-g+\gamma c_L$ for all $\gamma\ge0$; at $\gamma=0$ both hold with equality. For $\gamma>0$, let $Y:=X-\gamma D$, so that $\phi'(\gamma)=-\ip{\nabla f(Y)}{D}$. The lower bound in~\eqref{eq:matrix_sandwich}, applied at $X$ with direction $-\gamma D$ and at $Y$ with direction $+\gamma D$, gives
\[
  f(Y)\ge f(X)-\gamma g+\tfrac{\gamma^2}{2}c_M,
  \qquad
  f(X)\ge f(Y)+\gamma\ip{\nabla f(Y)}{D}+\tfrac{\gamma^2}{2}c_M,
\]
and adding the two yields $\phi'(\gamma)\ge-g+\gamma c_M$. The same argument with the upper bounds yields $\phi'(\gamma)\le-g+\gamma c_L$.

\emph{Step 3: the bracket~\eqref{eq:exact_bracket}.} Let $c_M>0$. By Step~2, $\phi'(\gamma)<0$ for $\gamma<g/c_L$ and $\phi'(\gamma)>0$ for $\gamma>g/c_M$, so every minimizer satisfies $g/c_L\le\gamma^\star(D)\le g/c_M$. Exact line search does at least as well as the step $\gamma_L(D)$, so $\Delta(D)\ge g^2/(2c_L)$ by Step~1; and minimizing the lower bound in~\eqref{eq:app:phi_sandwich} over $\gamma$ gives $\phi(\gamma)\ge f(X)-g^2/(2c_M)$ for all $\gamma\ge0$, hence $\Delta(D)\le g^2/(2c_M)$.
\end{proof}

\subsection{Isotropic Quadratics}
\label{app:isotropic}

Consider the isotropic setting $H=\lambda I_d$ with $\lambda>0$, so $f(X)=\frac{\lambda}{2}\norm{X}_\F^2$ and $G=\lambda X$.
\vspace{-2mm}
\begin{itemize}[leftmargin=1em,itemsep=0.2em]
    \item \textbf{\algname{GD}~(one-step exact).} From \eqref{eq:quad_gamma_star}, $\gamma^\star(G)=1/\lambda$ and $X^+=X-(1/\lambda)G=0$.
    \item \textbf{\algname{PolarGD} (additive shrinkage of singular values).} Let $r=\rank(X)=\rank(G)$ and write the thin SVD $X=U\Sigma_X V^\T$.
Then $Q=\polar(X)=UV^\T$ and $\norm{Q}_\F^2=r$. Exact line search yields
\[
\begin{aligned}
  \gamma^\star(Q) &= \frac{\ip{\lambda X}{Q}}{\lambda\ip{Q}{Q}}
  = \frac{\norm{X}_{\nuc}}{r},\\
  X^+ &= X - \frac{\norm{X}_{\nuc}}{r}\,\polar(X)\\
  &= U\Bigl(\Sigma_X - \frac{\norm{X}_{\nuc}}{r} I_r\Bigr)V^\T,
\end{aligned}
\]
so \algname{PolarGD} subtracts the \emph{same scalar from every nonzero singular value}.
The objective ratio is
\begin{equation}
\label{eq:isotropic_ratio_pol}
  \frac{f(X^+)}{f(X)} = 1 - \frac{\norm{X}_{\nuc}^2}{r\norm{X}_\F^2},
\end{equation}
which equals $0$ iff all nonzero singular values of $X$ are equal (i.e., $X$ is a scaled partial isometry).

\item \textbf{\algname{SignGD} (additive shrinkage of entries).} Let $S=\sign(X)$ and $s:=\norm{S}_\F^2$ (the number of nonzero entries, since $\sign(0)=0$).
Since $\ip{X}{\sign(X)}=\norm{X}_1$, line search yields
\[
  \gamma^\star(S)=\frac{\norm{X}_1}{s},
  \qquad
  X^+ = X - \frac{\norm{X}_1}{s}\,\sign(X),
\]
so \algname{SignGD} subtracts the \textit{same scalar from every nonzero entry}. The objective ratio is
\begin{equation}
\label{eq:isotropic_ratio_sign}
  \frac{f(X^+)}{f(X)} = 1 - \frac{\norm{X}_1^2}{s\norm{X}_\F^2},
\end{equation}
which equals $0$ iff all nonzero entries of $X$ have equal magnitude.
\end{itemize}
Table~\ref{tab:isotropic_summary} summarizes the gain, curvature, and one-step exactness condition of each direction.

\begin{table}[h]
\caption{Line search reveals distinct ``isotropy targets'' for different methods on isotropic quadratics.}
\centering
{\small
\begin{tabular}{@{}llll@{}}
\toprule
Direction $D$ & gain $\ip{G}{D}$ & curvature $\tr(D^\T H D)$ & one-step exactness for $H=\lambda I$ \\
\midrule
\algname{GD} ($D=G$)      & $\norm{G}_\F^2$ & $\lambda\norm{G}_\F^2$          & always (any $X$) \\
\algname{PolarGD} ($D=Q$) & $\norm{G}_{\nuc}$    & $\lambda\norm{Q}_\F^2=\lambda\rank(G)$ & iff nonzero singular values of $X$ are equal \\
\algname{SignGD} ($D=S$)  & $\norm{G}_1$    & $\lambda\norm{S}_\F^2$          & iff nonzero entries of $X$ have equal magnitude \\
\bottomrule
\end{tabular}
}
\label{tab:isotropic_summary}
\end{table}

\subsection{Exact Line-Search Quantities for Diagonal $H$}
\label{app:diag_quadratic}

We specialize the exact line-search identities to diagonal curvature
$H=\diag(\lambda_1,\dots,\lambda_d)$ with $\lambda_i\ge0$, the natural regime for coordinate methods such as \algname{SignGD}.
Work with the error $E:=X-X^\star\in\R^{d\times m}$ and write its rows as $E=[e_1^\T;\dots;e_d^\T]$ with $e_i\in\R^m$. Then
\[
f(E)=\frac12\sum_{i=1}^d \lambda_i\|e_i\|_2^2,
\qquad
G=HE,\quad g_i=\lambda_i e_i .
\]
All formulas below hold for any shape $d\times m$; shape or rank assumptions are stated where needed.

\paragraph{\algname{GD} (direction $D=G$).}
Using $\ip{G}{G}=\|G\|_\F^2$ and $\tr(G^\T H G)$, the diagonal structure yields
\[
\|G\|_\F^2
= \sum_{i=1}^d \|g_i\|_2^2
= \sum_{i=1}^d \lambda_i^2\|e_i\|_2^2,
\qquad
\tr(G^\T H G)
= \tr(E^\T H^3 E)
= \sum_{i=1}^d \lambda_i^3\|e_i\|_2^2.
\]
Therefore the exact line-search step size and one-step decrease are
\begin{equation}
\label{eq:diag_gd_step_delta}
\gamma^\star_{\mathrm{GD}}
= \frac{\|G\|_\F^2}{\tr(G^\T H G)}
= \frac{\sum_i \lambda_i^2\|e_i\|_2^2}{\sum_i \lambda_i^3\|e_i\|_2^2},
\qquad
\Delta_{\mathrm{GD}}
= \frac{\|G\|_\F^4}{2\,\tr(G^\T H G)}
= \frac{\Bigl(\sum_i \lambda_i^2\|e_i\|_2^2\Bigr)^2}{2\sum_i \lambda_i^3\|e_i\|_2^2}.
\end{equation}
Moreover, the \algname{GD} update remains rowwise-scaling (coupled only through the shared $\gamma^\star_{\mathrm{GD}}$):
\[
e_i^+ = (1-\gamma^\star_{\mathrm{GD}}\lambda_i)\,e_i,\qquad i=1,\dots,d.
\]

\paragraph{\algname{SignGD} (direction $D=S=\sign(G)$).}
Let $S=\sign(G)$ be the entrywise sign matrix with $\sign(0)=0$, and let $s_i^\T$ denote its $i$th row.
For diagonal $H$ with nonnegative diagonal, $S$ is coordinate-aligned with $E$ on the active rows:
if $\lambda_i>0$ then $\sign(g_i)=\sign(e_i)$ entrywise.
The gain term is
\[
\ip{G}{S}=\|G\|_1=\sum_{i=1}^d \lambda_i\|e_i\|_1,
\]
and the curvature term reduces to a weighted count of nonzeros:
\[
\tr(S^\T H S)=\sum_{i=1}^d \lambda_i\|s_i\|_2^2,
\qquad
\|s_i\|_2^2 = \#\{j:\ e_{ij}\neq 0\}\ \ (\lambda_i>0).
\]
Hence
\begin{equation}
\label{eq:diag_sign_step_delta}
\gamma^\star_{\mathrm{Sign}}
= \frac{\|G\|_1}{\tr(S^\T H S)}
= \frac{\sum_i \lambda_i\|e_i\|_1}{\sum_i \lambda_i\|s_i\|_2^2},
\qquad
\Delta_{\mathrm{Sign}}
= \frac{\|G\|_1^2}{2\,\tr(S^\T H S)}
= \frac{\Bigl(\sum_i \lambda_i\|e_i\|_1\Bigr)^2}{2\sum_i \lambda_i\|s_i\|_2^2}.
\end{equation}

\paragraph{Vertex regime (one-step exactness, diagonal $H$).}
If $E=cS$ for some $c>0$ and a sign pattern $S\in\{\pm 1,0\}^{d\times m}$, and $\lambda_i>0$ for every row $i$ in the support of $S$, then $G=HE=c(HS)$ and $\sign(HS)=S$ entrywise.
Consequently $\gamma^\star_{\mathrm{Sign}}=c$ and $E^+=0$ in one step, consistent with Theorem~\ref{thm:onestep_regimes}(ii).

\paragraph{Exact line-search quantities for \algname{PolarGD} (diagonal $H$).}

Let $G=U\Sigma_G V^\T$ be the thin SVD of $G$ with rank $r=\rank(G)$ and define $Q=\polar(G)=UV^\T$.
The gain remains $\ip{G}{Q}=\|G\|_{\nuc}$, while the diagonal structure makes the curvature term particularly interpretable:
\begin{equation}
\label{eq:diag_polar_curvature_weights}
\tr(Q^\T H Q)
= \tr(U^\T H U)
= \sum_{i=1}^d \lambda_i\, w_i,
\qquad
w_i := \|u_{i,:}\|_2^2,\quad \sum_{i=1}^d w_i = r,\quad 0\le w_i\le 1.
\end{equation}
Thus \algname{PolarGD} ``samples'' the diagonal spectrum $\{\lambda_i\}$ through the \emph{row-energy weights} of the left singular vectors of $G=HE$,
which depend on the iterate (and therefore on initialization).
The exact line-search step and one-step decrease are
\begin{equation}
\label{eq:diag_polar_step_delta}
\gamma^\star_{\mathrm{Pol}}
= \frac{\|G\|_{\nuc}}{\tr(Q^\T H Q)},
\qquad
\Delta_{\mathrm{Pol}}
= \frac{\|G\|_{\nuc}^2}{2\,\tr(Q^\T H Q)}
= \frac{\|G\|_{\nuc}^2}{2\sum_i \lambda_i w_i}.
\end{equation}

\paragraph{Full row rank simplification (shape condition).}
If $\rank(G)=d$ (full row rank), then necessarily $d\le m$ and $U\in\R^{d\times d}$ is orthogonal, so $QQ^\T=I_d$.
In this case \emph{for any} $H$ (in particular for diagonal $H$),
\[
\tr(Q^\T H Q) = \tr(H QQ^\T)=\tr(H)=\sum_{i=1}^d \lambda_i,
\]
so the \algname{PolarGD} curvature is constant (iterate-independent) and
\begin{equation}
\label{eq:diag_polar_fullrowrank}
\gamma^\star_{\mathrm{Pol}}=\frac{\|G\|_{\nuc}}{\tr(H)},\qquad
\Delta_{\mathrm{Pol}}=\frac{\|G\|_{\nuc}^2}{2\,\tr(H)}.
\end{equation}

\subsubsection{Local Comparison in the Diagonal Case (Explicit but Initialization-Dependent)}

In the diagonal setting, the exact local comparison $\Delta_{\mathrm{Pol}}>\Delta_{\mathrm{GD}}$ is equivalent to
\[
\frac{\|G\|_{\nuc}^2}{\tr(Q^\T H Q)} \;>\; \frac{\|G\|_\F^4}{\tr(G^\T H G)}
\quad\Longleftrightarrow\quad
\|G\|_{\nuc}^2\,\tr(G^\T H G)\;>\;\|G\|_\F^4\,\tr(Q^\T H Q).
\]
Using \eqref{eq:diag_gd_step_delta} and \eqref{eq:diag_polar_curvature_weights}, this becomes
\begin{equation}
\label{eq:diag_local_compare_explicit}
\|G\|_{\nuc}^2\Bigl(\sum_{i=1}^d \lambda_i^3\|e_i\|_2^2\Bigr)
\;>\;
\Bigl(\sum_{i=1}^d \lambda_i^2\|e_i\|_2^2\Bigr)^2
\Bigl(\sum_{i=1}^d \lambda_i w_i\Bigr),
\end{equation}
which makes explicit the two sources of iterate-dependence:
(i) the singular spectrum of $G=HE$ through $\|G\|_{\nuc}$, and
(ii) the alignment of the left singular subspace of $G$ with the coordinate axes through $(w_i)_i$.

\paragraph{Diagonal iterates (\algname{PolarGD} coincides with \algname{SignGD}).}
If $d=m$ and $E$ is diagonal, then $G=HE$ is also diagonal, and $Q=\polar(G)=\sign(G)$ (diagonal $\pm1/0$).
Thus \algname{PolarGD} and \algname{SignGD} coincide on this invariant subspace of diagonal matrices, and both act as an \emph{additive shrinkage} of the active diagonal entries with a shared step.
This regime underlies the explicit $2\times 2$ example of App.~\ref{app:2x2_alternation}.

\subsection{Proof of Theorem~\ref{thm:onestep_regimes}}
\label{app:proof_thm1}

\OneStepRegimes*
\begin{proof}
Let $G=HE$. Since $H\succ0$, $E=0$ is the unique minimizer, so $D$ is one-step exact at $X\ne X^\star$ if and only if $E=\gamma D$ for some $\gamma>0$; such a $\gamma$ is then the exact line-search step, since it attains the minimum value $0$ along the ray.

\emph{(i) \algname{GD}.} $E=\gamma HE$ with $\gamma>0$ if and only if $HE=\lambda E$ with $\lambda=1/\gamma>0$, which is the fixed-$X$ statement. If $H=\lambda I_d$, this holds for every $E$. Conversely, if it holds for every $E$, then every nonzero $v\in\R^d$ is an eigenvector of $H$ (take $E$ with a single nonzero column $v$). For linearly independent eigenvectors $u,v$ with eigenvalues $\lambda_u,\lambda_v$, the vector $u+v$ is also an eigenvector, say with eigenvalue $\mu$, so $(\lambda_u-\mu)u+(\lambda_v-\mu)v=0$ and $\lambda_u=\mu=\lambda_v$. Hence $H=\lambda I_d$, and $\lambda>0$ since $H\succ0$.

\emph{(ii) \algname{SignGD}.} $E=\gamma\sign(G)$ with $\gamma>0$ holds if and only if all nonzero entries of $E$ have the common magnitude $\gamma$ and $\sign(E)=\sign(G)=\sign(HE)$ entrywise, including zero entries: on the support of $E$ the signs must match, and off the support $HE$ must vanish. If $H$ is strictly diagonally dominant with positive diagonal and $s\in\{\pm1\}^d$, then for every $i$,
\[
  s_i(Hs)_i=H_{ii}+\sum_{j\neq i}H_{ij}s_is_j\ge H_{ii}-\sum_{j\neq i}|H_{ij}|>0,
\]
so $\sign(Hs)=s$. Applying this to each column of $S\in\{\pm1\}^{d\times m}$ gives $\sign(HS)=S$, so $E=cS$ satisfies the condition.

\emph{(iii) \algname{PolarGD}.} Since $\rank(G)=d$, we have $QQ^\T=I_d$. If $E=\gamma Q$, then $EE^\T=\gamma^2QQ^\T=\gamma^2I_d$. Conversely, suppose $EE^\T=cI_d$. Then $GG^\T=HEE^\T H=cH^2$, and the representation $Q=(GG^\top)^{-1/2}G$ in~\eqref{eq:precond:polar_whiten_compact} gives, since $H\succ0$,
\[
  Q=(GG^\T)^{-1/2}G=(cH^2)^{-1/2}HE=\frac{1}{\sqrt c}H^{-1}HE=\frac{1}{\sqrt c}E .
\]
Hence $E=\sqrt c\,Q$, and the step $\gamma=\sqrt c$ reaches $X^\star$. Finally, $EE^\T=cI_d$ means that all $d$ singular values of $E$ equal $\sqrt c$, i.e., $E=\sqrt c\,UV^\T$ for a thin SVD $E=U\Sigma_E V^\T$.
\end{proof}

\subsection{Orthogonal Equivariance and Coordinate Dependence}
\label{subsec:orth_equiv}

This subsection formalizes which of the three canonical direction rules
(\algname{GD}, \algname{PolarGD}, \algname{SignGD}) are equivariant under orthogonal changes
of coordinates. These equivariances justify working in a diagonal eigenbasis
for $H$ when analyzing \algname{GD} and \algname{PolarGD}, and clarify why the same reduction
is \emph{not} lossless for \algname{SignGD}.

\paragraph{Orthogonal actions on matrices.}
For $P\in\R^{d\times d}$ and $R\in\R^{m\times m}$ orthogonal, define the left--right
orthogonal action
\[
  \mathcal{T}_{P,R}(X) := P X R^\T .
\]
The Frobenius inner product and the Schatten norms are invariant under $\mathcal{T}_{P,R}$:
for any $A,B\in\R^{d\times m}$,
\[
  \ip{\mathcal{T}_{P,R}(A)}{\mathcal{T}_{P,R}(B)}=\ip{A}{B},
  \
  \norm{\mathcal{T}_{P,R}(A)}_\F=\norm{A}_\F,
  \
  \norm{\mathcal{T}_{P,R}(A)}_{\nuc}=\norm{A}_{\nuc},
  \
  \norm{\mathcal{T}_{P,R}(A)}_2=\norm{A}_2.
\]

\begin{lemma}[Bi-orthogonal equivariance of the polar factor]
\label{lem:polar_equiv}
Let $G\in\R^{d\times m}$ have thin SVD $G=U\Sigma_G V^\T$ and define its thin polar factor
$\polar(G):=UV^\T$.
Then for any orthogonal $P\in\R^{d\times d}$ and $R\in\R^{m\times m}$,
\[
  \polar(P G R^\T)= P\,\polar(G)\,R^\T.
\]
\end{lemma}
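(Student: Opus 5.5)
The plan is to transport a thin SVD of $G$ through the orthogonal action and then check that the result is again a valid thin SVD of $PGR^\T$. Start from $G=U\Sigma_G V^\T$, where $U\in\R^{d\times r}$ and $V\in\R^{m\times r}$ have orthonormal columns and $\Sigma_G$ is diagonal with positive entries. Then
\[
  PGR^\T=(PU)\,\Sigma_G\,(RV)^\T .
\]
Since $P$ and $R$ are orthogonal, $(PU)^\T(PU)=U^\T P^\T P U=I_r$ and likewise $(RV)^\T(RV)=I_r$. The middle factor is unchanged, so this display is a thin SVD of $PGR^\T$ with factors $\tilde U:=PU$ and $\tilde V:=RV$. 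Applying the definition of the polar factor then gives
\[
  \polar(PGR^\T)=\tilde U\tilde V^\T=PUV^\T R^\T=P\,\polar(G)\,R^\T .
\]

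The one genuine obstacle is well-definedness. The thin SVD is not unique: singular vectors can flip sign, and bases can rotate inside blocks of repeated singular values. The argument above computes $\polar(PGR^\T)$ from one particular thin SVD, so the identity is only meaningful if $\polar$ does not depend on that choice. I would settle this before the main computation by using the representation \eqref{eq:precond:polar_whiten_compact}, $\polar(G)=(GG^\T)^{\dagger/2}G$. Its right-hand side is defined without any SVD, because the pseudoinverse square root of a positive semidefinite matrix is unique. Substituting the thin SVD shows that
\[
  (GG^\T)^{\dagger/2}=U\Sigma_G^{-1}U^\T ,
\]
so $(GG^\T)^{\dagger/2}G=UV^\T$ for every choice of thin SVD. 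Hence $UV^\T$ does not depend on the choice.

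Once well-definedness is in place, there is an alternative verification that does not use the SVD directly. First,
\[
  (PGR^\T)(PGR^\T)^\T=P\,GG^\T P^\T .
\]
Functional calculus commutes with orthogonal conjugation, so $(PGG^\T P^\T)^{\dagger/2}=P\,(GG^\T)^{\dagger/2}P^\T$. Multiplying on the right by $PGR^\T$ gives $P(GG^\T)^{\dagger/2}GR^\T$, which is $P\,\polar(G)\,R^\T$ and confirms the claim. The convention $\sign(0)=0$ has no analogue to check here, since zero singular values are simply excluded from the thin SVD.
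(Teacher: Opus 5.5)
Your proposal is correct, and its main computation is the paper's proof: $PGR^\T=(PU)\Sigma_G(RV)^\T$ is again a thin SVD because $PU$ and $RV$ have orthonormal columns, which gives $\polar(PGR^\T)=PUV^\T R^\T$. Your two additions are also valid but go beyond what the paper writes: the check that $(GG^\T)^{\dagger/2}G=UV^\T$ makes the polar factor independent of the choice of SVD (the paper does not address this in the proof and only records the identity later, in Lemma~\ref{lem:active_projector_identities}), and the functional-calculus argument proves the lemma without any SVD.
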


\begin{proof}
Write $G=U\Sigma_G V^\T$. Then $PGR^\T=(PU)\Sigma_G(RV)^\T$ is a thin SVD because $PU$ and $RV$
have orthonormal columns. Hence
$\polar(PGR^\T)=(PU)(RV)^\T=P(UV^\T)R^\T=P\,\polar(G)\,R^\T$.
\end{proof}

\paragraph{Quadratic objectives and change of basis.}
Consider the (shifted) quadratic model
\[
  f_H(E)=\frac12\,\tr(E^\T H E),
  \qquad H=H^\T\succeq 0,
  \qquad E\in\R^{d\times m}.
\]
Under the change of variables $E=\mathcal{T}_{P,R}(Y)=PYR^\T$, the objective becomes
\[
  f_H(PYR^\T)=\frac12\,\tr\!\bigl(Y^\T (P^\T H P)\,Y\bigr)=:f_{\tilde H}(Y),
  \qquad \tilde H:=P^\T H P.
\]
Moreover, the gradients transform equivariantly:
\[
  \nabla_E f_H(PYR^\T)=H(PYR^\T)=P(\tilde H Y)R^\T
  =P\,\nabla_Y f_{\tilde H}(Y)\,R^\T.
\]

\begin{proposition}[Rotation equivariance of \algname{GD} and \algname{PolarGD} on quadratics]
\label{prop:gd_pol_equiv}
Fix $H=H^\T\succeq 0$. Let $\tilde H=P^\T H P$ for some orthogonal $P\in\R^{d\times d}$, and let
$R\in\R^{m\times m}$ be orthogonal. Consider corresponding iterates related by
$E_k=PY_kR^\T$.

\begin{enumerate}
\item[\textup{(i)}]
\textup{(\algname{GD} direction)} If $D_{\mathrm{GD}}(E):=\nabla f_H(E)=HE$, then
\[
  D_{\mathrm{GD}}(E_k)=P\,D_{\mathrm{GD}}^{(\tilde H)}(Y_k)\,R^\T,
  \qquad
  D_{\mathrm{GD}}^{(\tilde H)}(Y):=\nabla f_{\tilde H}(Y)=\tilde H Y.
\]

\item[\textup{(ii)}]
\textup{(\algname{PolarGD} direction)} If $D_{\mathrm{Pol}}(E):=\polar(\nabla f_H(E))$, then
\[
  D_{\mathrm{Pol}}(E_k)=P\,D_{\mathrm{Pol}}^{(\tilde H)}(Y_k)\,R^\T,
  \qquad
  D_{\mathrm{Pol}}^{(\tilde H)}(Y):=\polar(\nabla f_{\tilde H}(Y)),
\]
by Lemma~\ref{lem:polar_equiv}.

\item[\textup{(iii)}]
\textup{(Line search invariance)} For either direction choice
$D\in\{D_{\mathrm{GD}},D_{\mathrm{Pol}}\}$, the exact line-search step sizes
\[
  \gamma^\star_H(D;E):=\arg\min_{\gamma\ge 0} f_H(E-\gamma D(E)),
  \qquad
  \gamma^\star_{\tilde H}(D;Y):=\arg\min_{\gamma\ge 0} f_{\tilde H}(Y-\gamma D^{(\tilde H)}(Y)),
\]
where $D^{(\tilde H)}$ is the corresponding direction for $f_{\tilde H}$ from (i)--(ii), satisfy
\[
  \gamma^\star_H(D;E_k)=\gamma^\star_{\tilde H}(D;Y_k),
\]
and the one-step decrease in objective is preserved:
\[
  f_H(E_k)-f_H(E_{k+1}) = f_{\tilde H}(Y_k)-f_{\tilde H}(Y_{k+1}).
\]
\end{enumerate}
Consequently, the \algname{GD} and \algname{PolarGD} trajectories are identical up to orthogonal change of basis.
\end{proposition}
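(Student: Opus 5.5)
The plan is to prove the three items in order and then get the trajectory statement by induction on $k$. The only tool needed beyond linear algebra is the change of variables $E=PYR^\T$ recorded just before the proposition. That computation gives $f_H(PYR^\T)=f_{\tilde H}(Y)$ and $\nabla_E f_H(PYR^\T)=P\,\nabla_Y f_{\tilde H}(Y)\,R^\T$.

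For (i), we substitute $E_k=PY_kR^\T$ into $HE_k$. Writing $H=P\tilde HP^\T$ gives $HE_k=P\tilde HP^\T PY_kR^\T=P(\tilde HY_k)R^\T$, which is the claim. For (ii), we apply $\polar$ to this identity and use Lemma~\ref{lem:polar_equiv} with $G=\tilde HY_k$. The lemma covers the thin polar factor for any rank, including $G=0$ where both sides vanish, so no rank assumption is needed.

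For (iii), we define $\phi_H(\gamma):=f_H(E_k-\gamma D(E_k))$ and $\phi_{\tilde H}(\gamma):=f_{\tilde H}(Y_k-\gamma D^{(\tilde H)}(Y_k))$. By (i) or (ii), $E_k-\gamma D(E_k)=P\bigl(Y_k-\gamma D^{(\tilde H)}(Y_k)\bigr)R^\T$. The invariance $f_H(PYR^\T)=f_{\tilde H}(Y)$ then gives $\phi_H\equiv\phi_{\tilde H}$ on $\gamma\ge0$, so the two functions have the same $\arg\min$. To avoid worrying about non-uniqueness when $H$ is only positive semidefinite, I would read this as equality of argmin sets. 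We then select the same minimizer on both sides, or use the closed form \eqref{eq:quad_gamma_star}, whose numerator and denominator are invariant under $\mathcal{T}_{P,R}$ because the Frobenius inner product is. Equality of the decreases then follows from $\phi_H(0)=\phi_{\tilde H}(0)$ and $\phi_H(\gamma^\star)=\phi_{\tilde H}(\gamma^\star)$.

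For the consequence, we induct on $k$. Suppose $E_k=PY_kR^\T$, and suppose both runs use the same step $\gamma_k$: a common fixed step, or the common line-search step from (iii). Then
\[
E_{k+1}=E_k-\gamma_kD(E_k)=P\bigl(Y_k-\gamma_kD^{(\tilde H)}(Y_k)\bigr)R^\T=PY_{k+1}R^\T.
\]
Fixed dual-adapted steps also match, because $\norm{\cdot}_{\nuc}$ is invariant under $\mathcal{T}_{P,R}$, so $\norm{HE_k}_{\nuc}=\norm{\tilde HY_k}_{\nuc}$. For LMO-normalized steps the scalar factor is simply the same $\gamma$ in both runs. Losses coincide at every $k$ by $f_H(PYR^\T)=f_{\tilde H}(Y)$.

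I expect no real obstacle here. The only points needing care are the possible non-uniqueness of the line-search minimizer, which is handled by matching argmin sets or by the explicit formula, and checking that the normalizations in the dual-adapted steps are themselves invariant under $\mathcal{T}_{P,R}$.
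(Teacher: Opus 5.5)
Your proposal is correct and follows essentially the same route as the paper: (i) and (ii) come from the gradient transformation together with Lemma~\ref{lem:polar_equiv}, (iii) from invariance under $\mathcal{T}_{P,R}$, and the trajectory claim follows by induction as in the proof of Theorem~\ref{thm:orthogonal}. The one small difference is that you show the entire line function satisfies $\phi_H\equiv\phi_{\tilde H}$, whereas the paper only matches the gain $\ip{G}{D}$ and the curvature $\tr(D^\T HD)$ in the closed-form step; your version also covers, without extra work, the degenerate positive semidefinite case where the minimizer is not unique.
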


\begin{proof}
Parts (i) and (ii) follow from the gradient transformation and Lemma~\ref{lem:polar_equiv}.
For (iii), the exact line-search formulas on a quadratic depend only on
$\ip{G}{D}$ and $\tr(D^\T H D)$. Under $E=PYR^\T$ and $D=P\tilde D R^\T$, invariance of
$\ip{\cdot}{\cdot}$ and cyclicity of trace give
$\ip{G}{D}=\ip{\tilde G}{\tilde D}$ and $\tr(D^\T H D)=\tr(\tilde D^\T \tilde H \tilde D)$,
hence the minimizing $\gamma$ and the decrease coincide.
\end{proof}

\begin{corollary}[Diagonalization is lossless for \algname{GD} and \algname{PolarGD}]
\label{cor:diag_wlog}
Let $H=O\Lambda O^\T$ be an eigendecomposition with $O$ orthogonal and $\Lambda$ diagonal.
For \algname{GD} and \algname{PolarGD} (with fixed, LMO-normalized, dual-adapted, or exact line-search steps), analyzing
$f_H(E)$ is equivalent to analyzing $f_\Lambda(Y)$ under the change of variables $Y=O^\T E$
(up to the same objective values and the same one-step decreases).
\end{corollary}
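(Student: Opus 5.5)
The corollary follows from Proposition~\ref{prop:gd_pol_equiv} with the specific choice $P=O$ and $R=I_m$. Then $\tilde H=O^\T H O=\Lambda$, and the change of variables $E=OY$, i.e.\ $Y=O^\T E$, is a bijection of $\R^{d\times m}$. The plan is to show by induction on $k$ that if $E_0=OY_0$, then $E_k=OY_k$ for all $k$. Here $E_k$ is generated by the chosen method on $f_H$ and $Y_k$ by the same method on $f_\Lambda$. The identity $f_H(OY)=f_\Lambda(Y)$, already shown in the change-of-basis computation, then gives equal objective values. Equal one-step decreases follow by taking differences of consecutive values.

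For the inductive step, assume $E_k=OY_k$. By parts (i) and (ii) of Proposition~\ref{prop:gd_pol_equiv}, the direction satisfies $D(E_k)=O\,D^{(\Lambda)}(Y_k)$ for both \algname{GD} and \algname{PolarGD}. What remains is to show that the step sizes coincide in each of the four step-size rules.

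\begin{itemize}
\item \textbf{Fixed steps.} The rule uses the same scalar $\gamma_k$ on both problems, so the steps agree trivially.
\item \textbf{LMO-normalized steps.} The direction is $Q$ or $G$ itself with a fixed scalar, so this case also reduces to the direction identity.
\item \textbf{Dual-adapted steps.} The scalar multiplier is $\norm{G_k}_{\nuc}$ for \algname{PolarGD}. It is invariant because $\nabla f_H(E_k)=O\nabla f_\Lambda(Y_k)$ and Schatten norms are invariant under $\mathcal T_{O,I}$. For \algname{GD} the analogous factor is a Frobenius norm, which is invariant for the same reason.
\item \textbf{Exact line search.} This is part (iii) of Proposition~\ref{prop:gd_pol_equiv}.
\end{itemize}

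In every case $E_{k+1}=E_k-\gamma_kD(E_k)=O\,(Y_k-\gamma_kD^{(\Lambda)}(Y_k))=OY_{k+1}$, which closes the induction.

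There is no genuine obstacle here; the only care needed is bookkeeping. One point is that the line-search minimizer may be non-unique when $\tr(D^\T H D)=0$. In that case one selects the minimizer consistently on both sides, which is possible because the one-dimensional functions $\gamma\mapsto f_H(E_k-\gamma D(E_k))$ and $\gamma\mapsto f_\Lambda(Y_k-\gamma D^{(\Lambda)}(Y_k))$ are identical. The other point is that $\polar$ is well defined despite non-uniqueness of the SVD, which Lemma~\ref{lem:polar_equiv} implicitly uses. I would note that $\polar(G)=(GG^\T)^{\dagger/2}G$ by~\eqref{eq:precond:polar_whiten_compact}, which is basis-free, so this is not an issue.
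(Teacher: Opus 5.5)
Your proposal is correct and follows the paper's route: the corollary is Proposition~\ref{prop:gd_pol_equiv} with $P=O$ and $R=I_m$, closed by the same induction used in the proof of Theorem~\ref{thm:orthogonal}, where the line-search step and the step scalings $\norm{G}_\F$ and $\norm{G}_{\nuc}$ are shown to agree on the two problems. One minor labeling slip: in the paper's convention the LMO-normalized \algname{GD} direction is $G/\norm{G}_\F$ and the dual-adapted one is $G$ itself, so your two \algname{GD} cases are swapped, but both are covered by Frobenius-norm invariance and the argument is unaffected.
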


\begin{remark}[Why \algname{SignGD} is not rotation-equivariant]
\label{rem:sign_not_equiv}
\algname{SignGD} uses the entrywise direction $D_{\mathrm{Sign}}(E):=\sign(\nabla f_H(E))$.
In general, for a dense orthogonal $P$,
\[
  \sign(PG)\neq P\,\sign(G),
\]
so \algname{SignGD} is \emph{not} equivariant under arbitrary orthogonal changes of basis and therefore
diagonalizing $H$ changes the algorithm. A minimal example: take $d=2$, $m=1$, $H=I_2$ (so $\tilde H=H$ for every $P$) and $E_0=(2,1)^\T$. Then $S_0=(1,1)^\T$, exact line search gives $\gamma^\star=3/2$, and $E_1=(1/2,-1/2)^\T\neq0$. For the rotation $P$ mapping $(\sqrt5,0)^\T$ to $E_0$, the rotated initialization $\tilde E_0=P^\T E_0=(\sqrt5,0)^\T$ is solved in one step, so the trajectories are not related by $P$, whereas \algname{GD} and \algname{PolarGD} reach the minimizer in one step from both points.
\algname{SignGD} is equivariant under signed permutations
(coordinate relabelings and sign flips), i.e., orthogonal matrices with exactly one nonzero entry
$\pm 1$ per row and per column. Consequently, diagonal (or nearly diagonal) $H$ should be
interpreted as a \emph{coordinate-aligned regime} for \algname{SignGD} rather than a lossless reduction.
\end{remark}

\paragraph{Proof of Theorem~\ref{thm:orthogonal}.}
\label{app:proof_thm2}
\Equivariance*
\begin{proof}
Suppose $E_k=P\tilde E_kR^\T$, which holds for $k=0$ by definition. Then $HE_k=P(\tilde H\tilde E_k)R^\T$, and Lemma~\ref{lem:polar_equiv} gives $\polar(HE_k)=P\,\polar(\tilde H\tilde E_k)\,R^\T$. The inner product and the norms $\norm{\cdot}_\F$ and $\norm{\cdot}_{\nuc}$ are invariant under $Z\mapsto PZR^\T$, so the exact line-search steps (Proposition~\ref{prop:gd_pol_equiv}(iii)), and the scalings $\norm{G}_\F$ and $\norm{G}_{\nuc}$ of the LMO-normalized and dual-adapted steps, coincide for the two problems. Hence $E_{k+1}=P\tilde E_{k+1}R^\T$, and the claim follows by induction; the losses coincide because $f_H(PYR^\T)=f_{\tilde H}(Y)$. Remark~\ref{rem:sign_not_equiv} shows that \algname{SignGD} does not have this property.
\end{proof}

\paragraph{Experimental implication.}
Because of Proposition~\ref{prop:gd_pol_equiv}, ``rotating'' a diagonal Hessian by an orthogonal
change of basis should not change \algname{GD} or \algname{PolarGD} behavior (up to the corresponding rotation of the
initialization), while it can substantially change \algname{SignGD} behavior. This separation helps design
experiments that isolate truly spectral effects (\algname{PolarGD}) from coordinate effects (\algname{SignGD}).

\section{An Analytic $2\times 2$ Example Where the Local Winner Alternates}
\label{app:2x2_alternation}
\begin{figure}
    \centering
    \includegraphics[width=\linewidth]{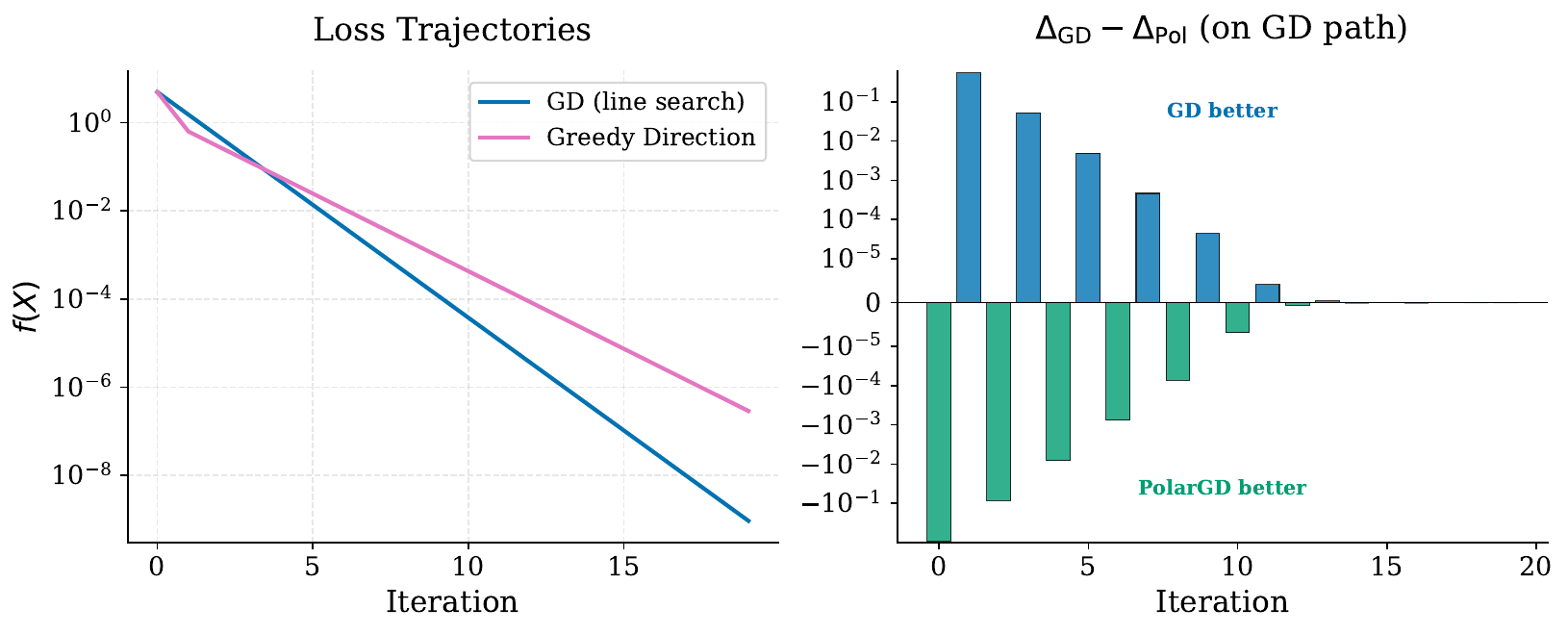}
    \caption{
    \textbf{Local advantage oscillates along the \algname{GD} path.}
    $2\times 2$ diagonal quadratic with $H = \mathrm{diag}(\kappa, 1)$, $\kappa=10$, initialized at $X_0 = \mathrm{diag}(1, \sqrt{\kappa})$.
    \textit{(Left)} Loss trajectories for \textcolor[HTML]{0072B2}{\textbf{\algname{GD} (exact line search)}} and a \textcolor[HTML]{e377c2}{\textbf{Greedy}} comparison. At each iterate $X_k$, Greedy selects the direction $D_k \in \{G_k, Q_k\}$ with the larger one-step decrease. Despite always choosing the locally better step, Greedy underperforms \algname{GD} globally. \textit{(Right)} The one-step advantage $\Delta_{\mathrm{GD}} - \Delta_{\mathrm{Pol}}$ along the \algname{GD} trajectory.}
    \label{fig:alternating}
\end{figure}
This section gives a closed-form example showing that the local one-step preference
between \algname{GD} and \algname{PolarGD} can \emph{swap repeatedly} along a trajectory, even on a convex quadratic.

Consider $d=m=2$, $X^\star=0$, and a diagonal Hessian
\[
  H=\begin{pmatrix}\kappa & 0\\ 0 & 1\end{pmatrix},
  \qquad \kappa>1.
\]
Restrict attention to diagonal iterates $X=\mathrm{diag}(x_1,x_2)$ with $x_1x_2\neq 0$.
Then $G=HX=\mathrm{diag}(\kappa x_1,x_2)$ and the exact one-step decreases under line search are
\begin{equation}
\label{eq:deltas_diag_2x2}
  \Delta_{\mathrm{GD}}(X)
  = \frac{(\kappa^2 x_1^2+x_2^2)^2}{2(\kappa^3 x_1^2+x_2^2)},
  \qquad
  \Delta_{\mathrm{Pol}}(X)
  = \frac{(\kappa|x_1|+|x_2|)^2}{2(\kappa+1)},
\end{equation}
where we used that for full-rank $2\times 2$ gradients the polar factor $Q=\polar(G)$ is orthogonal and
$\tr(Q^\T H Q)=\tr(H)=\kappa+1$.

Define the (scale-invariant) anisotropy parameter
\[
  \theta(X) := \frac{\kappa|x_1|}{|x_2|}\,.
\]
Substituting $|x_1|=\theta|x_2|/\kappa$ into \eqref{eq:deltas_diag_2x2} and simplifying gives
\begin{equation}
\label{eq:g_poly}
  \Delta_{\mathrm{Pol}}(X)-\Delta_{\mathrm{GD}}(X)
  = \frac{x_2^2\,g_\kappa(\theta)}{2(\kappa+1)(\kappa \theta^2+1)},
  \qquad
  g_\kappa(\theta)
  := (\theta+1)^2(\kappa \theta^2+1) - (\kappa+1)(\theta^2+1)^2,
\end{equation}
with $\theta=\theta(X)$. Hence $\Delta_{\mathrm{Pol}}(X)>\Delta_{\mathrm{GD}}(X)$ if and only if $g_\kappa(\theta(X))>0$.

\paragraph{A two-cycle under \algname{GD} with exact line search.}
Now run \emph{\algname{GD} with exact line search} from $X_0=\diag(1,\sqrt\kappa)$, the initialization used in Figures~\ref{fig:alternating_winner} and~\ref{fig:alternating}.
Since $H$ and $X_0$ are diagonal, \algname{GD} preserves diagonality:
$X_k=\mathrm{diag}(x_{1,k},x_{2,k})$ for all $k$.
With the exact step $\gamma_k^\star=\norm{G_k}_\F^2/\tr(G_k^\T H G_k)=(\kappa^2x_{1,k}^2+x_{2,k}^2)/(\kappa^3x_{1,k}^2+x_{2,k}^2)$, a direct computation gives
$x_{1,k+1}=(1-\kappa\gamma_k^\star)x_{1,k}$, $x_{2,k+1}=(1-\gamma_k^\star)x_{2,k}$ and
\begin{equation}
\label{eq:s_recurrence}
  \theta(X_{k+1}) = \frac{1}{\theta(X_k)}.
\end{equation}
Since $\theta(X_0)=\sqrt\kappa$, we get $\theta(X_{2t})=\sqrt\kappa$ and $\theta(X_{2t+1})=1/\sqrt\kappa$ for all $t\ge0$. Evaluating $g_\kappa$ at these two values,
\[
  g_\kappa(\sqrt\kappa) = 2\sqrt\kappa\,(\sqrt\kappa-1)(\kappa^{3/2}-1) > 0,
  \qquad
  g_\kappa(1/\sqrt\kappa)
  = -\frac{\kappa^3+\kappa^2+\kappa+1-4\kappa^{3/2}}{\kappa^2} < 0
\]
for every $\kappa>1$, where the second inequality is the AM--GM inequality $\kappa^3+\kappa^2+\kappa+1\ge 4\kappa^{3/2}$, with equality only at $\kappa=1$. Therefore, along the \algname{GD} trajectory,
\[
  \Delta_{\mathrm{Pol}}(X_{2t}) > \Delta_{\mathrm{GD}}(X_{2t}),
  \qquad
  \Delta_{\mathrm{GD}}(X_{2t+1}) > \Delta_{\mathrm{Pol}}(X_{2t+1}),
  \qquad
  \forall\,t\ge 0.
\]
The same argument applies to any diagonal initialization with $g_\kappa(\theta_0)>0>g_\kappa(1/\theta_0)$, where $\theta_0:=\theta(X_0)$; for example, $X_0=I_2$ gives $\theta_0=\kappa$, $g_\kappa(\kappa)=\kappa(\kappa+1)(\kappa-1)^2>0$ and $g_\kappa(1/\kappa)=-(\kappa+1)(\kappa-1)^2(\kappa^2+\kappa+1)/\kappa^4<0$.
Thus, even on a convex quadratic, the local one-step preference between \algname{GD} and \algname{PolarGD}
swaps \emph{every iteration} along a natural trajectory, so local comparisons should be read as diagnostics at a point, not as a global ranking.

\section{Line Search versus Tuned Fixed Steps}
\label{app:dim_ret}
\begin{figure}
    \centering
    \includegraphics[width=\linewidth]{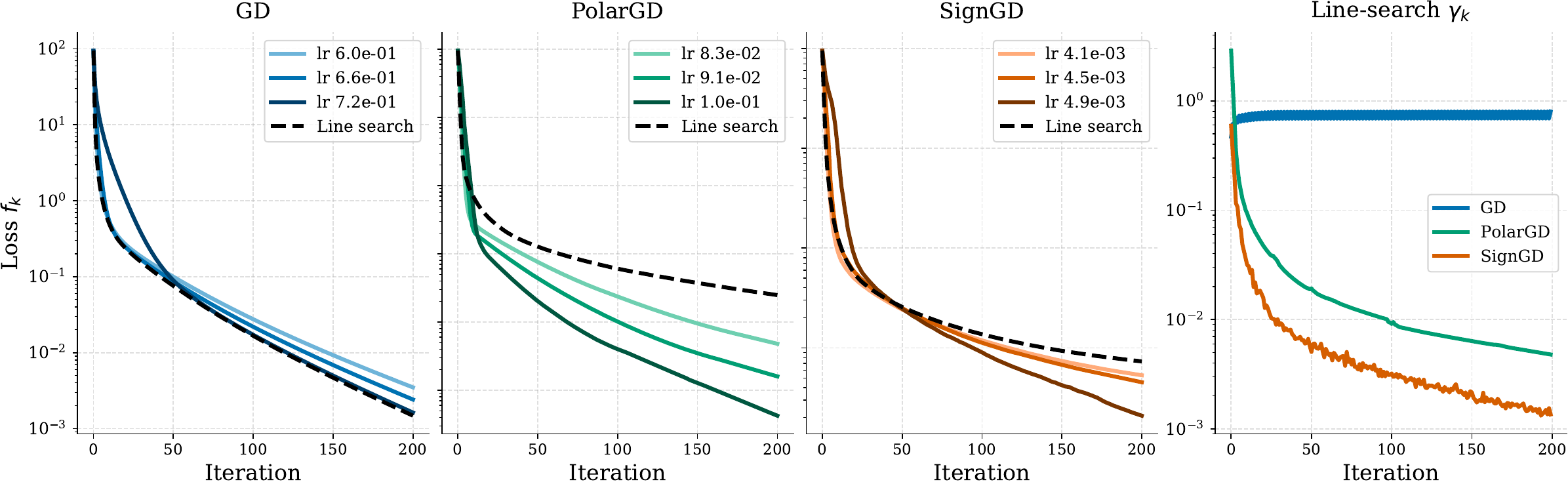}
    \caption{\textbf{Diminishing returns of line search.} Comparing three fixed LMO-normalized learning rates, $X_{k+1}=X_k-\gamma\,\mathrm{LMO}(G_k)$ (best from the grid of App.~\ref{app:exp_details:lr_tune}), to line search for \algname{GD}, \algname{PolarGD}, and \algname{SignGD} on a quadratic with dense MP Hessian ($d=20$, aspect ratio $d/N=0.8$), and Gaussian initialization ($X_0 \sim \mathcal{N}(0, 1)$).  We see that while line search typically outperforms a tuned learning rate for \algname{GD}, this is not true for \algname{PolarGD} and \algname{SignGD}, where a fixed learning rate can significantly outperform line search. The rightmost panel displays the step sizes selected by each line search method. For \algname{PolarGD} and \algname{SignGD} this step size, measured in units of the normalized direction, is positively homogeneous of degree one in $X_k$ (Proposition~\ref{prop:scale_invariance}). This reflects only the unit scale of $\polar(G)$ and $\sign(G)$: in dual-adapted units the \algname{PolarGD} line-search step is constant when $G_k$ has full row rank (Theorem~\ref{prop:precond:progress}), and the gap to the tuned rate comes from the trajectory.
    }
    \label{fig:lineSearchPol}
\end{figure}

\begin{figure}
    \centering
    \includegraphics[width=\linewidth]{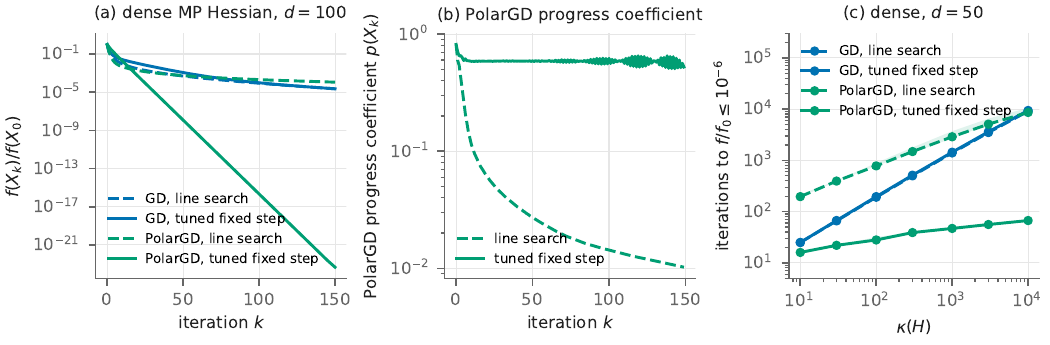}
    \caption{\textbf{Line search vs.\ tuned dual-adapted steps.} Dense Marchenko--Pastur Hessian (aspect ratio $d/N=0.8$, $d=m=100$, $\kappa(H)\approx192$; a separate draw from Figure~\ref{fig:representative_plot}), Gaussian initialization, 150 iterations; details in App.~\ref{app:exp_details:ls_vs_fixed}. \textit{(a)} Relative loss. \textit{(b)} Progress coefficient $p(X_k)$ along the two \algname{PolarGD} runs. Exact line search is the dual-adapted rate $\alpha=1$ in~\eqref{eq:precond:fixed_step}; the tuned rate $\alpha\approx1.7$ keeps $p(X_k)$ near $0.6$, whereas line search drives the iterate toward shapes with small $p(X_k)$. For \algname{GD}, line search and the tuned fixed step perform similarly. \textit{(c)} Iterations to reach $f(X_k)/f(X_0)\le10^{-6}$ versus $\kappa(H)$ for a dense Hessian with log-spaced spectrum ($d=m=50$, Gaussian initialization, median over 3 seeds). \algname{GD}'s iteration count grows roughly linearly with $\kappa(H)$ and tuned \algname{PolarGD}'s only slowly, so the gap exceeds two orders of magnitude at $\kappa(H)=10^4$; \algname{PolarGD} with line search is slower than its tuned fixed step throughout.}
    \label{fig:ls_vs_fixed}
\end{figure} 
Figure~\ref{fig:lineSearchPol} uses LMO-normalized fixed steps and Figure~\ref{fig:ls_vs_fixed} dual-adapted ones. The gap is not caused by a vanishing step: the LMO-normalized line-search step shrinks proportionally to $\norm{X}$ only because $Q$ has unit singular values (Proposition~\ref{prop:scale_invariance} below), while in the dual-adapted parameterization the line-search step of \algname{PolarGD} is the constant $1/\tr(H)$ when $G_k$ has full row rank (\citet{lau2025polargrad} observe that Armijo backtracking on the LMO-normalized step recovers this nuclear-norm scaling approximately). The mechanism of Section~\ref{sec:precond} is visible in the singular values: along the run of Figure~\ref{fig:ls_vs_fixed}(a), the share of singular values of $X_k$ below $10^{-2}\sigma_{\max}(X_k)$ grows to about $90\%$ under line search but stays mostly between $10\%$ and $20\%$ for the tuned step. The same comparison for dual-adapted steps on a full-batch one-hidden-layer MLP is in App.~\ref{app:cifar5k_ls}. We first record how the LMO-normalized line-search step scales with the iterate (Proposition~\ref{prop:scale_invariance}).
\subsection{Proof of Proposition \ref{prop:scale_invariance}}

\begin{proposition}
\label{prop:scale_invariance}
For $f(X)= \frac{1}{2} \tr(X^\T H X)$ with $H\succ0$ and $c>0$, the exact line-search step sizes satisfy
\[
\gamma^\star_{\mathrm{GD}}(cX) = \gamma^\star_{\mathrm{GD}}(X), \qquad \gamma^\star_{\mathrm{Pol}}(cX) = c\,\gamma^\star_{\mathrm{Pol}}(X).
\]
\end{proposition}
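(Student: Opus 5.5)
The plan is to read off both scalings from the closed-form exact line-search step \eqref{eq:quad_gamma_star}, $\gamma^\star(D)=\ip{G}{D}/\tr(D^\T HD)$. This formula applies because $f$ is the quadratic model \eqref{eq:quad_model} with $X^\star=0$ and $H\succ0$. The gain and curvature are positive whenever $X\neq0$, since $\ip{G}{G}=\norm{G}_\F^2>0$ and $\ip{G}{Q}=\norm{G}_{\nuc}>0$, and since $H\succ0$ with $D\neq0$. For $X=0$ both sides vanish or are undefined in the same way, so it suffices to treat $X\neq0$. The first step is to record how the gradient and the two directions transform under $X\mapsto cX$ with $c>0$. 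By linearity, $G(cX)=H(cX)=c\,G(X)$. The \algname{GD} direction is $D_{\rm GD}(cX)=c\,G(X)$. The polar factor is invariant under positive scaling: if $G=U\Sigma_GV^\T$ is a thin SVD, then $cG=U(c\Sigma_G)V^\T$ is also one, so $\polar(cG)=UV^\T=\polar(G)$. Hence $Q(cX)=Q(X)$.

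Substituting these into the step formula gives both claims. For \algname{GD},
\[
\gamma^\star_{\rm GD}(cX)=\frac{\norm{cG}_\F^2}{\tr((cG)^\T H(cG))}=\frac{c^2\norm{G}_\F^2}{c^2\tr(G^\T HG)}=\gamma^\star_{\rm GD}(X),
\]
so the step is homogeneous of degree zero. For \algname{PolarGD}, the numerator is $\ip{cG}{Q}=c\norm{G}_{\nuc}$, while the denominator $\tr(Q^\T HQ)$ is unchanged because $Q$ is unchanged. Therefore $\gamma^\star_{\rm Pol}(cX)=c\,\gamma^\star_{\rm Pol}(X)$.

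I expect no real obstacle here. The only point to state explicitly is that $\polar$ is well defined, i.e.\ independent of the choice of thin SVD. This follows from the representation $Q=(GG^\T)^{\dagger/2}G$ in \eqref{eq:precond:polar_whiten_compact}. It immediately gives $\polar(cG)=c^{-1}(GG^\T)^{\dagger/2}\,cG=Q$, which I would use instead of the SVD argument to avoid non-uniqueness issues. The same argument shows the analogous scaling $\gamma^\star_{\rm Sign}(cX)=c\,\gamma^\star_{\rm Sign}(X)$, since $\sign(cG)=\sign(G)$. That fact is not needed for the stated claims.
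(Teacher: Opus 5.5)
Your proposal is correct and follows the paper's proof: both substitute $G(cX)=c\,G(X)$ into the closed-form step $\ip{G}{D}/\tr(D^\T HD)$. The \algname{GD} ratio scales by $c^2/c^2$, while the \algname{PolarGD} numerator scales by $c$ and the denominator is unchanged because $\polar(cG)=\polar(G)$. Your use of $Q=(GG^\T)^{\dagger/2}G$ to avoid SVD non-uniqueness is a small extra point of care beyond what the paper writes.
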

\begin{proof}
Consider the quadratic objective $f(X) = \frac{1}{2} \tr(X^\T H X)$ with $H \succ 0$. For any search direction $D$, the exact step size $\gamma^\star$ minimizes $f(X - \gamma D)$.
\paragraph{Gradient Descent.} For \algname{GD}, the direction is the gradient $G = \nabla f(X) = HX$. The step size is:
\[
    \gamma^\star_{\mathrm{GD}}(X) = \frac{\tr((HX)^\top HX)}{\tr((HX)^\top H (HX))} = \frac{\|HX\|_\F^2}{\tr(X^\top H^3 X)}.
\]
Replacing $X$ by $cX$ multiplies both numerator and denominator by $c^2$, leaving the ratio unchanged.

\paragraph{\algname{PolarGD}.} For \algname{PolarGD}, the direction is $Q = \polar(HX)$. By positive homogeneity of the polar factor, $\polar(c HX) = \polar(HX) = Q$ for any $c > 0$. The step size is:
\[
    \gamma^\star_{\mathrm{Pol}}(X) = \frac{\tr((HX)^\top Q)}{\tr(Q^\top H Q)}.
\]
Since $\|H(cX)\|_{\nuc} = c\|HX\|_{\nuc}$ while $\tr(Q^\top H Q)$ does not depend on $c$, we obtain $\gamma^\star_{\mathrm{Pol}}(cX) = c\,\gamma^\star_{\mathrm{Pol}}(X)$.
\end{proof}
\begin{remark}[\algname{SignGD}]
The same logic applies to \algname{SignGD}, where the direction is $D = \sign(HX)$. Since $\sign(c \cdot HX) = \sign(HX)$ for $c > 0$, the direction remains unchanged under scaling. Consequently, $\gamma^\star_{\mathrm{Sign}}(cX) = c\,\gamma^\star_{\mathrm{Sign}}(X)$, and the exact line search step size vanishes as $X\to 0$.
\end{remark}
\subsection{Line Search on a CIFAR-5K MLP}
\label{app:cifar5k_ls}
We evaluate on a two-layer fully-connected network with ReLU activations,
following the CIFAR-5K testbed of \citet{cohen2022gradientdescentneuralnetworks}, a subsampled collection of the CIFAR-10 dataset \citep{krizhevsky2009cifar} consisting of $5$ classes, each with $1,000$ samples.
The architecture has input dimension $d_{\mathrm{in}} = 3{,}072$ (flattened $32\times32\times3$ images),
hidden dimension $h = 4{,}608$ ($= 1.5\,d_{\mathrm{in}}$), output dimension $5$, and no bias terms;
weights are initialized via Kaiming normal initialization \citep{he2015delvingdeeprectifierssurpassing}.
We train with cross-entropy loss.

We compare standard gradient descent (\algname{GD}) and \algname{PolarGD} (the core \algname{Muon} update without momentum); \algname{PolarGD} applies the polar direction to both weight matrices. Since we are in the full batch setting with no momentum, the only hyperparameter is the step size.
For each optimizer we evaluate (i) a grid of fixed learning rates and (ii) an exact line search implemented via Brent's method, which searches for the step size in the wide interval $[0,4]$ with tolerance $10^{-5}$; each line-search step requires approximately $10$--$15$ forward passes.
All runs share a fixed random seed and are trained for $100$ iterations.
\begin{figure}
    \centering
    \includegraphics[width=\linewidth]{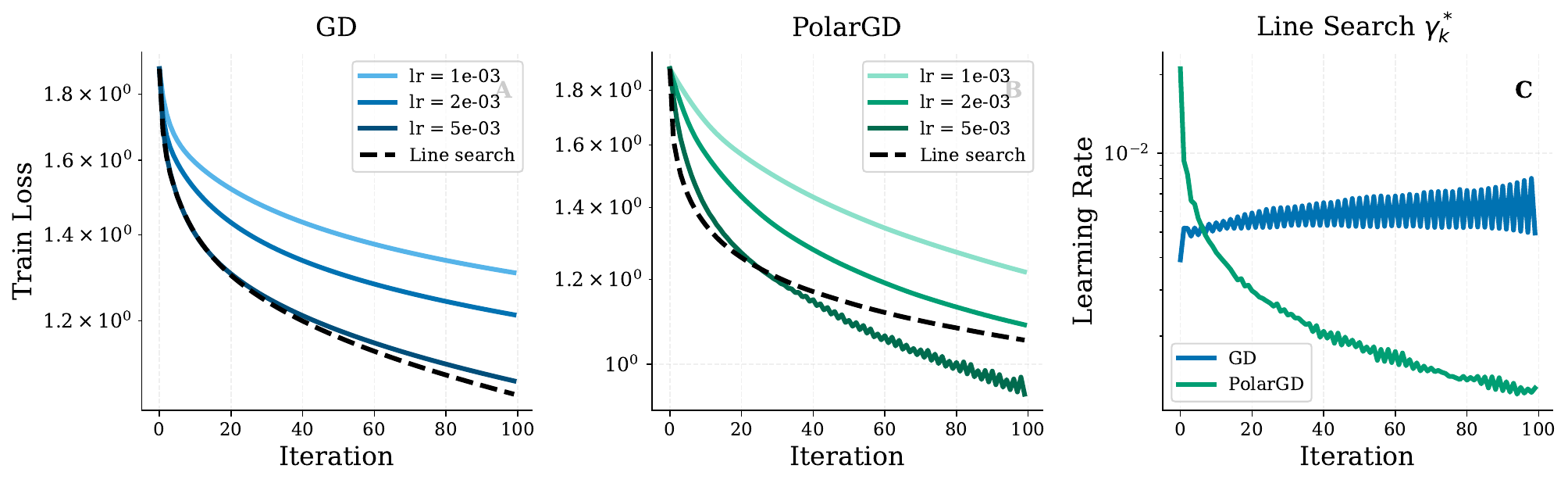}
    \caption{\textbf{Line search on an MLP.}
    Training loss on CIFAR-5K under fixed dual-adapted learning rates and line search (dashed). The three fixed rates shown are the best-performing from a grid search. For \algname{GD}, line search matches or slightly beats the best fixed rate; for \algname{PolarGD}, a fixed learning rate outperforms it. This is consistent with the local-global gap observed in the quadratic case (Section~\ref{sec:alternations}). The rightmost panel plots the line-search step size: for \algname{GD} it oscillates, whereas for \algname{PolarGD} it decays by about an order of magnitude over 100 iterations.
    }
    \label{fig:cifar5k_ls}
\end{figure}

\section{Proofs and Details for Section~\getrefnumber{sec:precond}}
\label{app:precond_proofs}

\subsection{Proofs for the Implicit-Preconditioning Identities}
\label{app:precond_proofs:identities}

\PropConditioning*
\begin{proof}
Since $H\succ0$ and $\rank(X)=d$, we have $\Sigma\succ0$ and $H\Sigma H\succ0$, so all inverse square roots are ordinary ones and $A=(H\Sigma H)^{-1/2}H$. First,
\[
  A\Sigma A^\T=(H\Sigma H)^{-1/2}\,(H\Sigma H)\,(H\Sigma H)^{-1/2}=I_d .
\]
Second, using $(H\Sigma H)^{-1}=H^{-1}\Sigma^{-1}H^{-1}$,
\[
  A^\T A=H(H\Sigma H)^{-1}H=H\,H^{-1}\Sigma^{-1}H^{-1}\,H=\Sigma^{-1}.
\]
The squared singular values of $A$ are therefore the eigenvalues of $\Sigma^{-1}$, namely $\sigma_i(X)^{-2}$, so the singular values of $A$ are $1/\sigma_i(X)$ and
\[
  \kappa(A)=\frac{\sigma_{\max}(X)}{\sigma_{\min}(X)}=\kappa(X)=\sqrt{\kappa(\Sigma)}.
\]
\end{proof}

\PropProgress*
\begin{proof}
Let $G=HX$, $Q=\polar(G)$, and $\Sigma=XX^\T$. Since $H\succ0$ and $\rank(X)=d$, also $\rank(G)=d$, so $QQ^\T=I_d$ and $\tr(Q^\T HQ)=\tr(H)$; moreover $\ip{G}{Q}=\norm{G}_{\nuc}$ and $f(X)=\tfrac12\tr(H\Sigma)$.

\emph{Step 1: line search and fixed steps.} For every $\beta\ge0$,
\[
  f(X-\beta Q)=f(X)-\beta\norm{G}_{\nuc}+\tfrac{\beta^2}{2}\tr(H).
\]
Minimizing over $\beta$ gives $\gamma^\star(Q)=\norm{G}_{\nuc}/\tr(H)$ and $f(X^+)=f(X)-\norm{G}_{\nuc}^2/(2\tr(H))$, i.e., $f(X^+)/f(X)=1-p(X)$. The dual-adapted step $\beta=\gamma\norm{G}_{\nuc}$ with $\alpha=\gamma\tr(H)$ gives $f(X^+)=f(X)-\alpha(2-\alpha)\norm{G}_{\nuc}^2/(2\tr(H))$, which is~\eqref{eq:precond:fixed_step}.

\emph{Step 2: upper bound and its equality case.} Write $G=H^{1/2}(H^{1/2}X)$. The H\"older inequality $\norm{AB}_{\nuc}\le\norm{A}_\F\norm{B}_\F$ gives
\[
  \norm{G}_{\nuc}^2\le\norm{H^{1/2}}_\F^2\,\norm{H^{1/2}X}_\F^2=\tr(H)\,\tr(H\Sigma),
\]
i.e., $p(X)\le1$. If $\Sigma=cI_d$, then $H\Sigma H=cH^2$, so
\[
  \norm{G}_{\nuc}=\tr\bigl((H\Sigma H)^{1/2}\bigr)=\sqrt c\,\tr(H),
  \qquad
  \tr(H\Sigma)=c\,\tr(H),
\]
and $p(X)=1$. Conversely, $p(X)=1$ means $f(X^+)=0$, i.e., $X^+=0$ since $H\succ0$; \algname{PolarGD} is then one-step exact, and Theorem~\ref{thm:onestep_regimes}(iii) gives $\Sigma=cI_d$.

\emph{Step 3: first lower bound.} Using $\norm{G}_{\nuc}\ge\norm{G}_\F$,
\[
  \norm{G}_{\nuc}^2\ \ge\ \norm{G}_\F^2=\tr(X^\T H^2X)=\tr(H^2\Sigma)\ \ge\ \lambda_{\min}(H)\,\tr(H\Sigma),
\]
and dividing by $\tr(H)\tr(H\Sigma)$ gives $p(X)\ge\lambda_{\min}(H)/\tr(H)$.

\emph{Step 4: second lower bound.} Let $a:=\sigma_{\min}(X)$, $b:=\sigma_{\max}(X)$, $R:=(H\Sigma H)^{1/2}$, $T:=H^{-1/2}RH^{-1/2}$, and $\rho:=H/\tr(H)$. From $a^2H^2\preceq H\Sigma H\preceq b^2H^2$ and operator monotonicity of the matrix square root, $aH\preceq R\preceq bH$, i.e., $aI_d\preceq T\preceq bI_d$. Since $\norm{G}_{\nuc}=\tr(R)=\tr(HT)$ and $\tr(H\Sigma)=\tr(RH^{-1}R)=\tr(HT^2)$,
\begin{equation}
\label{eq:app:p_via_T}
  p(X)=\frac{\tr(HT)^2}{\tr(H)\tr(HT^2)}=\frac{\tr(\rho T)^2}{\tr(\rho T^2)}.
\end{equation}
Since the spectrum of $T$ lies in $[a,b]$, the matrix $(T-aI_d)(bI_d-T)$ is positive semidefinite; as $\rho\succeq0$ and $\tr(\rho)=1$, with $t:=\tr(\rho T)$,
\[
  0\le\tr\bigl(\rho(T-aI_d)(bI_d-T)\bigr)=(a+b)\,t-ab-\tr(\rho T^2).
\]
Thus $0<\tr(\rho T^2)\le(a+b)t-ab$, and in particular $t>ab/(a+b)$. Hence
\[
  p(X)\ge\frac{t^2}{(a+b)t-ab}\ge\min_{\tau>ab/(a+b)}\frac{\tau^2}{(a+b)\tau-ab}=\frac{4ab}{(a+b)^2}=\frac{4\kappa(X)}{(1+\kappa(X))^2},
\]
where the minimum is attained at $\tau=2ab/(a+b)$.

\emph{Step 5: tightness.} The first lower bound is approached by the family of Appendix~\ref{app:pessimistic_polar_bound}. The second is attained by $d=m=2$, $H=\diag(b,a)$, and $X=\diag(a,b)$ with $0<a<b$: then $G=ab\,I_2$, $\norm{G}_{\nuc}=2ab$, $\tr(H)=a+b$, and $\tr(H\Sigma)=ab(a+b)$, so $p(X)=4ab/(a+b)^2$ with $\kappa(X)=b/a$.
\end{proof}

\begin{remark}[Progress coefficients as coefficients of variation]
\label{rem:cv}
By~\eqref{eq:app:p_via_T}, \(p(X)=1/(1+\mathrm{CV}^2)\), where
\(\mathrm{CV}^2:=\bigl(\tr(\rho T^2)-\tr(\rho T)^2\bigr)/\tr(\rho T)^2\) is the squared coefficient of
variation of \(T\) in the state \(\rho=H/\tr(H)\). If \(H\Sigma=\Sigma H\), then \(T=\Sigma^{1/2}\) and,
in a common eigenbasis with \(H=U\diag(\lambda_i)U^\top\) and \(\Sigma=U\diag(s_i^2)U^\top\),
\[
  p(X)=\frac{\bigl(\sum_i\pi_is_i\bigr)^2}{\sum_i\pi_is_i^2},
  \qquad
  \pi_i:=\frac{\lambda_i}{\tr(H)},
\]
so \(p(X)\) is determined by the spread of the singular values of \(X\), weighted by curvature. \algname{GD} with
exact line search has the dual form: writing \(e_i:=\|u_i^\top X\|_2^2\) for the energy of \(X\) along
the \(i\)th eigenvector of \(H\) and \(w_i:=\lambda_ie_i/\sum_j\lambda_je_j\), its progress coefficient is
\[
  p_{\mathrm{GD}}(X):=\frac{\Delta_{\mathrm{GD}}(X)}{f(X)}=\frac{\|G\|_\F^4}{\tr(G^\top HG)\tr(H\Sigma)}
  =\frac{\bigl(\sum_iw_i\lambda_i\bigr)^2}{\sum_iw_i\lambda_i^2},
\]
determined by the spread of the eigenvalues of \(H\), weighted by the iterate. The same argument as above
gives the classical bound \(p_{\mathrm{GD}}(X)\ge4\kappa(H)/(1+\kappa(H))^2\); for \algname{PolarGD} the role of
\(\kappa(H)\) is played by \(\kappa(X)\).
\end{remark}

\subsection{A Near-Rank-Deficient Instance Approaching the Pessimistic PolarGD Bound}
\label{app:pessimistic_polar_bound}

Theorem~\ref{prop:precond:progress} gives the lower bound
\[
  p(X)\ge \frac{\lambda_{\min}(H)}{\tr(H)}.
\]
We now show that this bound is essentially tight. The example also illustrates why exact polar
normalization can behave poorly near rank deficiency.

Consider the square case \(m=d\). Let
\[
  H=\diag(\mu,L,\dots,L),
  \qquad
  0<\mu\ll L,
  \qquad
  \kappa:=\frac{L}{\mu},
\]
and take the full-rank diagonal iterate
\[
  X_\varepsilon=\diag(1,\varepsilon,\dots,\varepsilon),
  \qquad
  \varepsilon>0.
\]
Then
\[
  \Sigma_\varepsilon=X_\varepsilon X_\varepsilon^\top
  =
  \diag(1,\varepsilon^2,\dots,\varepsilon^2),
  \qquad
  G_\varepsilon=HX_\varepsilon
  =
  \diag(\mu,L\varepsilon,\dots,L\varepsilon).
\]
Since \(G_\varepsilon\) is diagonal with nonnegative entries, its nuclear norm is
\(\|G_\varepsilon\|_{\nuc}=\mu+(d-1)L\varepsilon\). Moreover,
\[
  \tr(H)=\mu+(d-1)L,
  \qquad
  \tr(H\Sigma_\varepsilon)=\mu+(d-1)L\varepsilon^2.
\]
Therefore the exact \algname{PolarGD} progress coefficient is
\[
  p(X_\varepsilon)
  =
  \frac{\bigl(\mu+(d-1)L\varepsilon\bigr)^2}
       {\bigl(\mu+(d-1)L\bigr)\bigl(\mu+(d-1)L\varepsilon^2\bigr)}.
\]
Taking \(\varepsilon\downarrow0\) gives
\[
  \lim_{\varepsilon\downarrow0}p(X_\varepsilon)
  =
  \frac{\mu}{\mu+(d-1)L}
  =
  \frac{1}{1+(d-1)\kappa}
  \approx
  \frac{1}{\kappa d}.
\]
Thus the lower bound \(\lambda_{\min}(H)/\tr(H)\) is attained in the near-rank-deficient limit.

\paragraph{Comparison with \algname{GD}.}
On the same family, \algname{GD} with exact line search is nearly one-step exact. Its progress coefficient (Remark~\ref{rem:cv}) is
\[
  p_{\mathrm{GD}}(X)
  =
  \frac{\Delta_{\mathrm{GD}}(X)}{f(X)}
  =
  \frac{\|G\|_\F^4}{\tr(G^\top H G)\tr(H\Sigma)}.
\]
For \(X=X_\varepsilon\), we have
\[
  \|G_\varepsilon\|_\F^2
  =
  \mu^2+(d-1)L^2\varepsilon^2,
  \qquad
  \tr(G_\varepsilon^\top H G_\varepsilon)
  =
  \mu^3+(d-1)L^3\varepsilon^2.
\]
Hence
\[
  p_{\mathrm{GD}}(X_\varepsilon)
  =
  \frac{\bigl(\mu^2+(d-1)L^2\varepsilon^2\bigr)^2}
       {\bigl(\mu^3+(d-1)L^3\varepsilon^2\bigr)
        \bigl(\mu+(d-1)L\varepsilon^2\bigr)}
  \longrightarrow 1
  \qquad
  \text{as }\varepsilon\downarrow0.
\]
Thus this family separates the two methods: \algname{GD} is nearly one-step exact, while exact \algname{PolarGD}
decreases only a \(1/(1+(d-1)\kappa)\)-fraction of the loss in its first step.

\paragraph{The slowdown persists along the trajectory.}
Since \(H\) and \(X_\varepsilon\) are diagonal, \(H\Sigma_\varepsilon=\Sigma_\varepsilon H\), and
\(X_\varepsilon\) has two distinct singular values, \(1\) on the first coordinate and \(\varepsilon\) on
the others. Corollary~\ref{cor:two_cluster} applies with curvature mass
\(w=\mu/(\mu+(d-1)L)=1/(1+(d-1)\kappa)\) on the first coordinate: for every \(k\ge1\),
\[
  p(X_k)=4w(1-w),
  \qquad
  f(X_{k+1})=(1-2w)^2f(X_k)=\left(\frac{(d-1)\kappa-1}{(d-1)\kappa+1}\right)^2f(X_k),
\]
for every \(\varepsilon\in(0,1)\). This is the worst-case rate of \algname{GD} with exact line search with
\(\kappa\) replaced by \((d-1)\kappa\). By Theorem~\ref{thm:orthogonal}, the same holds for every
rotation of this instance.

\paragraph{Interpretation.}
For every \(\varepsilon>0\), the iterate is full rank, so full-row-rank \algname{PolarGD} pays curvature
\(\tr(H)=\mu+(d-1)L\). However, as \(\varepsilon\downarrow0\), almost all gradient gain is
concentrated in the first, low-curvature direction. Exact polar normalization still maps the tiny
singular modes \(L\varepsilon\) to unit-size directions, so the line-search denominator pays for all
directions even though most of the gain comes from only one.

At the limiting point \(\varepsilon=0\), the gradient is rank one. Then \algname{PolarGD} and \algname{GD} are collinear,
and the active-subspace formula from Appendix~\ref{app:rank_deficient_active_subspace} uses the
active curvature \(\tr(H\Pi_G)=\mu\) instead of \(\tr(H)\). In that rank-one limit, \algname{PolarGD} again reaches
the minimizer in one step. Hence the pathology is a near-rank-deficient full-rank effect: exact polar
is discontinuous at rank drops because it hard-normalizes every nonzero singular mode. This is the
failure mode targeted by rank truncation, damping, and finite \approxalg{Newton--Schulz} filters.

\subsection{Generic Multiplicative Recursion Under Exact Line Search}
\label{app:ls_recursion}

Consider a quadratic objective $f(X)=\frac{1}{2}\tr(X^\T H X)$ with $H\succeq 0$.
Let a generic method produce updates
\[
  X_{k+1}=X_k-\gamma_k D_k,
\]
where $\gamma_k$ is chosen by exact line search along $D_k$.
For any $D_k\neq 0$ with $\tr(D_k^\T H D_k)>0$, exact line search yields
\[
  \gamma_k = \frac{\ip{G_k}{D_k}}{\tr(D_k^\T H D_k)},
  \qquad
  G_k=\nabla f(X_k)=H X_k,
\]
and the one-step decrease equals
\[
  \Delta_k := f(X_k)-f(X_{k+1})
  =\frac{\ip{G_k}{D_k}^2}{2\,\tr(D_k^\T H D_k)}.
\]
Whenever $f(X_k)>0$, define the progress coefficient
\[
  p_k := \frac{\Delta_k}{f(X_k)} \in [0,1],
\]
so that
\begin{equation}
  \label{eq:app:multiplicative}
  f(X_{k+1}) = (1-p_k)\,f(X_k),
  \qquad
  f(X_T)=f(X_0)\prod_{k=0}^{T-1}(1-p_k)
  \le f(X_0)\exp\left(-\sum_{k=0}^{T-1}p_k\right).
\end{equation}
Thus any \emph{trajectory-wise} lower bound on $p_k$ implies a global linear (geometric) rate.

\subsection{GD with Exact Line Search: Classical Rate}
\label{app:gd_ls_rate}

For \algname{GD}, $D_k=G_k$, hence
\[
  \Delta_k^{\mathrm{GD}}=\frac{\norm{G_k}_\F^4}{2\,\tr(G_k^\T H G_k)}.
\]
For $H\succ 0$, it is classical \citep[Theorem~3.3]{nocedal2006numerical} that exact line search yields the sharp contraction
\begin{equation}
  \label{eq:app:gd_rate}
  f(X_{k+1})\le \left(\frac{\kappa(H)-1}{\kappa(H)+1}\right)^2 f(X_k),
  \qquad
  \kappa(H)=\frac{\lambda_{\max}(H)}{\lambda_{\min}(H)}.
\end{equation}
Equivalently, the progress coefficient satisfies $p_k^{\mathrm{GD}}\ge 1-\left(\frac{\kappa(H)-1}{\kappa(H)+1}\right)^2=\frac{4\kappa(H)}{(\kappa(H)+1)^2}$.

\subsection{\algname{PolarGD} with Exact Line Search: Trajectory Dependence Through $\Sigma_k$}
\label{app:pol_ls_rate}

In the full-row-rank regime ($d\le m$ and $\mathrm{rank}(X_k)=d$), Theorem~\ref{prop:precond:progress}
gives an explicit $p(X_k)$:
\[
  p(X_k)=\frac{\norm{H X_k}_{\nuc}^2}{\tr(H)\,\tr(H X_k X_k^\T)}.
\]
Combining this with \eqref{eq:app:multiplicative} shows that global behavior depends on how
$p(X_k)$ evolves along the trajectory. In particular, the bound
$p(X_k)\ge 4\kappa(X_k)/(1+\kappa(X_k))^2$ implies that if $\kappa(X_k)\le K$ uniformly along the path, then
\[
  f(X_k)\le \left(\frac{K-1}{K+1}\right)^{2k} f(X_0),
\]
which is the classical rate of \algname{GD} with exact line search~\eqref{eq:app:gd_rate}, with $\kappa(H)$ replaced by $K$.
This makes precise the intuition from Section~\ref{sec:precond}: \algname{PolarGD} can be fast if the iterate
geometry stays close to row-isotropic, but can slow down if $\Sigma_k=X_k X_k^\T$ becomes highly anisotropic.

\newpage

\subsection{Rank-Deficient and Active-Subspace Extensions}
\label{app:rank_deficient_active_subspace}

The main text focuses on the full-row-rank regime, where $QQ^\T=I_d$ and $\tr(Q^\T HQ)=\tr(H)$. Outside it, the same identities hold with $I_d$ replaced by the active projector $\Pi_G:=QQ^\T$. Throughout this subsection, $f(X)=\tfrac12\tr(X^\T HX)$ with $H\succ0$, $G=HX$, $\Sigma=XX^\T$, and $G=U\Sigma_GV^\T$ is a thin SVD with $U\in\R^{d\times r}$, $V\in\R^{m\times r}$, $\Sigma_G=\diag(\sigma_1,\dots,\sigma_r)\succ0$, and $r=\rank(G)=\rank(X)$; we write $R:=(GG^\T)^{1/2}=U\Sigma_GU^\T$, so that $G=RQ$.

\begin{lemma}[Active projector identities]
\label{lem:active_projector_identities}
$Q=\polar(G)=UV^\T=(GG^\T)^{\dagger/2}G$, $\ QQ^\T=UU^\T=\Pi_G=\Pi_{\range(G)}$, $\ Q^\T Q=VV^\T=\Pi_{\range(G^\T)}$, and $\tr(Q^\T HQ)=\tr(H\Pi_G)$.
\end{lemma}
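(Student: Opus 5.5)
Every identity in Lemma~\ref{lem:active_projector_identities} will be read off the thin SVD $G=U\Sigma_GV^\T$, with $U^\T U=V^\T V=I_r$ and $\Sigma_G\succ0$. The plan has three steps.

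First, I establish the pseudoinverse representation of $Q$. Since $GG^\T=U\Sigma_G^2U^\T$ is a compact eigendecomposition on $\range(U)$, its pseudoinverse is $(GG^\T)^\dagger=U\Sigma_G^{-2}U^\T$. This matrix is positive semidefinite, and its unique PSD square root is
\[
(GG^\T)^{\dagger/2}=U\Sigma_G^{-1}U^\T .
\]
Uniqueness holds because this matrix is PSD and squares to the right thing. Multiplying, $(GG^\T)^{\dagger/2}G=U\Sigma_G^{-1}U^\T U\Sigma_GV^\T=UV^\T=Q$. The same computation gives $R=(GG^\T)^{1/2}=U\Sigma_GU^\T$ and $RQ=U\Sigma_GV^\T=G$.

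Second, I verify the two projector identities. We have $QQ^\T=UV^\T VU^\T=UU^\T$ and $Q^\T Q=VU^\T UV^\T=VV^\T$. Because $U$ has orthonormal columns, $UU^\T$ is the orthogonal projector onto $\range(U)$. Moreover, $\range(G)=\range(U)$, since $\Sigma_GV^\T$ has full row rank $r$, so $\range(U\Sigma_GV^\T)=\range(U)$. The analogous argument for $G^\T=V\Sigma_GU^\T$ gives $VV^\T=\Pi_{\range(G^\T)}$.

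Third, the trace identity follows from cyclicity: $\tr(Q^\T HQ)=\tr(HQQ^\T)=\tr(H\Pi_G)$.

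As a side remark, the standing claim $\rank(G)=\rank(X)$ follows from the invertibility of $H\succ0$, although the lemma itself does not need it.

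None of these steps is a real obstacle. The only point needing care is that the polar factor is well defined independently of the choice of thin SVD, since singular vectors are not unique when singular values repeat. I would deal with this by noting that $Q=(GG^\T)^{\dagger/2}G$ is expressed without reference to any SVD. Hence every thin SVD yields the same $UV^\T$, and $UU^\T=\Pi_{\range(G)}$ is likewise basis-free.
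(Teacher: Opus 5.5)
Your proposal is correct and follows essentially the same route as the paper: both compute $(GG^\T)^{\dagger/2}=U\Sigma_G^{-1}U^\T$ from the thin SVD, multiply by $G$ to get $UV^\T$, expand $QQ^\T$ and $Q^\T Q$ directly, and obtain the trace identity by cyclicity. The two justifications you add, that $\range(G)=\range(U)$ and that $UV^\T$ does not depend on the chosen thin SVD, are details the paper leaves implicit.
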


\begin{proof}
Since $GG^\T=U\Sigma_G^2U^\T$, we have $(GG^\T)^{\dagger/2}=U\Sigma_G^{-1}U^\T$, and therefore
\[
  (GG^\T)^{\dagger/2}G=U\Sigma_G^{-1}U^\T\,U\Sigma_GV^\T=UV^\T=Q.
\]
Moreover, $U^\T U=V^\T V=I_r$ gives
\[
  QQ^\T=UV^\T VU^\T=UU^\T,
  \qquad
  Q^\T Q=VU^\T UV^\T=VV^\T,
\]
and the trace identity follows from cyclicity: $\tr(Q^\T HQ)=\tr(HQQ^\T)=\tr(H\Pi_G)$.
\end{proof}

\subsubsection{Active-Subspace Preconditioning}

\begin{proposition}[Active-subspace preconditioner identities]
\label{prop:active_subspace_whitening}
With $A(X):=(H\Sigma H)^{\dagger/2}H$,
\[
  Q=A(X)X,\qquad A(X)\Sigma A(X)^\T=\Pi_G,\qquad A(X)^\T A(X)=H(H\Sigma H)^\dagger H .
\]
\end{proposition}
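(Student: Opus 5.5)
The plan is to reduce everything to the thin SVD of $G=HX$ and to Lemma~\ref{lem:active_projector_identities}. The first identity is immediate. Since $H\Sigma H=HXX^\T H=GG^\T$, we have $A(X)X=(GG^\T)^{\dagger/2}HX=(GG^\T)^{\dagger/2}G=Q$ by the lemma. For the second, note that $A(X)\Sigma A(X)^\T=(A(X)X)(A(X)X)^\T=QQ^\T$. This equals $\Pi_G$, again by the lemma; no invertibility is needed.

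For the third identity, write $P:=(GG^\T)^{\dagger/2}=U\Sigma_G^{-1}U^\T$, as computed in the lemma. Then $A^\T A=HP^2H=H\,U\Sigma_G^{-2}U^\T H$. It therefore suffices to check that $P^2=(GG^\T)^\dagger$. This holds because $GG^\T=U\Sigma_G^2U^\T$ has pseudoinverse $U\Sigma_G^{-2}U^\T$: one verifies the Moore--Penrose conditions directly, using $U^\T U=I_r$. Substituting $H\Sigma H=GG^\T$ then gives $A^\T A=H(H\Sigma H)^\dagger H$.

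The only point requiring care is the convention $A^{\dagger/2}:=(A^\dagger)^{1/2}$. With this convention, $(GG^\T)^{\dagger/2}$ squared is $(GG^\T)^\dagger$ by definition, provided the principal square root of the PSD matrix $(GG^\T)^\dagger=U\Sigma_G^{-2}U^\T$ is $U\Sigma_G^{-1}U^\T$. That holds by uniqueness of PSD square roots. I expect no real obstacle here; the argument is bookkeeping with the SVD.

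As a sanity check, one can also note that $A^\T A$ is not $\Sigma^\dagger$ in general. The factors of $H$ do not cancel against the pseudoinverse when $\range(X)\neq\R^d$. So, unlike Proposition~\ref{prop:precond:conditioning}, the statement should be left in this form. In full row rank it reduces to $\Sigma^{-1}$ via $(H\Sigma H)^{-1}=H^{-1}\Sigma^{-1}H^{-1}$.
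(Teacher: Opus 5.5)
Your proposal is correct and follows the paper's proof essentially line by line: you rewrite $H\Sigma H=GG^\top$, invoke Lemma~\ref{lem:active_projector_identities} for $Q=A(X)X$, and use $(H\Sigma H)^{\dagger/2}(H\Sigma H)^{\dagger/2}=(H\Sigma H)^\dagger$ for the third identity. Your SVD check of that last step is redundant under the convention $A^{\dagger/2}:=(A^\dagger)^{1/2}$, as you note. The only cosmetic difference is the second identity: you write $A\Sigma A^\top=(AX)(AX)^\top=QQ^\top$ and reuse the lemma, whereas the paper directly identifies $(GG^\top)^{\dagger/2}(GG^\top)(GG^\top)^{\dagger/2}$ as the orthogonal projector onto $\range(GG^\top)=\range(G)$.
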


\begin{proof}
Since $H\Sigma H=HXX^\T H=GG^\T$, Lemma~\ref{lem:active_projector_identities} gives
\[
  A(X)X=(H\Sigma H)^{\dagger/2}HX=(GG^\T)^{\dagger/2}G=Q .
\]
Next,
\[
  A\Sigma A^\T=(GG^\T)^{\dagger/2}\,(GG^\T)\,(GG^\T)^{\dagger/2}
\]
is the orthogonal projector onto $\range(GG^\T)=\range(G)$, i.e., $\Pi_G$. Finally, $A^\T A=H(H\Sigma H)^{\dagger/2}(H\Sigma H)^{\dagger/2}H=H(H\Sigma H)^\dagger H$.
\end{proof}

In the full-row-rank case, $(H\Sigma H)^{-1}=H^{-1}\Sigma^{-1}H^{-1}$ gives $A^\T A=\Sigma^{-1}$ and hence $\kappa(A)=\kappa(X)$ (Proposition~\ref{prop:precond:conditioning}). For singular $\Sigma$ this factorization fails in general, and the conditioning of $A$ also depends on how the active subspace is aligned with $H$. We write $\sigma^+_{\min}(\cdot)$ and $\kappa_+(\cdot)$ for the smallest positive singular value and the condition number restricted to the positive singular values.

\begin{proposition}[Conditioning of the active preconditioner]
\label{prop:active_preconditioner_conditioning_bound}
$\sigma_{\max}(A)\le\lambda_{\max}(H)/\sigma^+_{\min}(G)$ and $\sigma^+_{\min}(A)\ge\lambda_{\min}(H)/\sigma_{\max}(G)$. Consequently, $\kappa_+(A)\le\kappa(H)\,\kappa_+(G)\le\kappa(H)^2\,\kappa_+(X)$.
\end{proposition}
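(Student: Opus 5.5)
The plan is to work directly with the thin SVD of $G$, as in Lemma~\ref{lem:active_projector_identities}. There $(GG^\T)^{\dagger/2}=U\Sigma_G^{-1}U^\T$, so
\[
  A=U\Sigma_G^{-1}U^\T H=U B,\qquad B:=\Sigma_G^{-1}U^\T H\in\R^{r\times d}.
\]
Since $U$ has orthonormal columns, $A^\T A=B^\T B$. Hence the singular values of $A$ are exactly those of $B$, padded with zeros. Because $H\succ0$ and $U^\T$ has full row rank $r$, the matrix $B$ has full row rank $r$. So the positive singular values of $A$ are the $r$ singular values of $B$, and $\sigma^+_{\min}(A)=\sigma_{\min}(B)=\min_{\norm{y}_2=1}\norm{B^\T y}_2$ over $y\in\R^r$.

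For the upper bound I will use submultiplicativity:
\[
  \sigma_{\max}(A)\le\norm{U\Sigma_G^{-1}U^\T}_2\,\norm{H}_2=\frac{\lambda_{\max}(H)}{\sigma^+_{\min}(G)}.
\]
For the lower bound, take a unit $y\in\R^r$ and write $B^\T y=HU\Sigma_G^{-1}y$. Then
\[
  \norm{B^\T y}_2\ge\lambda_{\min}(H)\,\norm{U\Sigma_G^{-1}y}_2=\lambda_{\min}(H)\,\norm{\Sigma_G^{-1}y}_2\ge\frac{\lambda_{\min}(H)}{\sigma_{\max}(G)}.
\]
Dividing the two bounds gives $\kappa_+(A)\le\kappa(H)\,\kappa_+(G)$.

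It remains to show $\kappa_+(G)\le\kappa(H)\,\kappa_+(X)$ for $G=HX$ with $H$ invertible, which gives the final bound. The top singular value is immediate: $\sigma_{\max}(G)\le\lambda_{\max}(H)\,\sigma_{\max}(X)$. For the bottom one, recall that $\rank(G)=\rank(X)=r$. Apply the same submultiplicativity to $X=H^{-1}G$ through the Courant--Fischer characterization of the $r$th singular value: $\sigma_r(H^{-1}G)\le\norm{H^{-1}}_2\,\sigma_r(G)$. This gives $\sigma^+_{\min}(G)\ge\lambda_{\min}(H)\,\sigma^+_{\min}(X)$. Taking the ratio yields $\kappa_+(G)\le\kappa(H)\,\kappa_+(X)$, and therefore $\kappa_+(A)\le\kappa(H)^2\,\kappa_+(X)$.

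The only step needing care is the identification of the positive singular values of $A$ with those of $B$, in particular that $B$ has full row rank $r$. Without this, the minimum over unit $y$ would not equal $\sigma^+_{\min}(A)$. The full row rank uses $H\succ0$, which is assumed throughout this subsection.
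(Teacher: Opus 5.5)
Your proof is correct and is essentially the paper's argument. The paper gets the two singular-value bounds from the Loewner sandwich $\lambda_{\min}(H)^2(GG^\T)^\dagger\preceq AA^\T\preceq\lambda_{\max}(H)^2(GG^\T)^\dagger$ on $\range(G)$, which is the same estimate you extract from the factorization $A=U\Sigma_G^{-1}U^\T H$ with $A^\T A=B^\T B$; for the last inequality it bounds $\norm{Gv}_2\ge\lambda_{\min}(H)\norm{Xv}_2$ over unit $v\in\range(X^\T)$, which is equivalent to your $\sigma_r(H^{-1}G)\le\norm{H^{-1}}_2\,\sigma_r(G)$.
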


\begin{proof}
Let $B:=GG^\T=H\Sigma H$, so that $A=B^{\dagger/2}H$. The positive singular values of $A$ are the square roots of the positive eigenvalues of $AA^\T=B^{\dagger/2}H^2B^{\dagger/2}$, whose range is $\range(B)$. Since $\lambda_{\min}(H)^2I_d\preceq H^2\preceq\lambda_{\max}(H)^2I_d$, congruence by $B^{\dagger/2}$ gives
\[
  \lambda_{\min}(H)^2\,B^\dagger\ \preceq\ AA^\T\ \preceq\ \lambda_{\max}(H)^2\,B^\dagger .
\]
The positive eigenvalues of $B^\dagger$ are $\sigma_i(G)^{-2}$, $i=1,\dots,r$, which gives
\[
  \sigma_{\max}(A)\le\frac{\lambda_{\max}(H)}{\sigma^+_{\min}(G)},
  \qquad
  \sigma^+_{\min}(A)\ge\frac{\lambda_{\min}(H)}{\sigma_{\max}(G)},
\]
and hence $\kappa_+(A)\le\kappa(H)\kappa_+(G)$. Finally, $\sigma_{\max}(G)\le\lambda_{\max}(H)\sigma_{\max}(X)$, and for every unit vector $v\in\range(G^\T)=\range(X^\T)$,
\[
  \norm{Gv}_2=\norm{HXv}_2\ge\lambda_{\min}(H)\norm{Xv}_2\ge\lambda_{\min}(H)\,\sigma^+_{\min}(X),
\]
so $\sigma^+_{\min}(G)\ge\lambda_{\min}(H)\sigma^+_{\min}(X)$ and $\kappa_+(G)\le\kappa(H)\kappa_+(X)$.
\end{proof}

\begin{proposition}[Invariant active subspaces]
\label{prop:invariant_subspace_conditioning}
If $\mathcal S:=\range(X)$ is $H$-invariant, then $\range(G)=\mathcal S$, $\Pi_G=\Pi_{\mathcal S}$, $A\Sigma A^\T=\Pi_G$, $A^\T A=\Sigma^\dagger$, and hence $\kappa_+(A)=\kappa_+(X)$.
\end{proposition}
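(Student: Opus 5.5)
The plan is to use the $H$-invariance of $\mathcal S=\range(X)$ to reduce everything to a block computation on $\mathcal S$, where $H$ restricts to an invertible operator. There the full-row-rank argument of Proposition~\ref{prop:precond:conditioning} goes through verbatim.

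\emph{Range of $G$.} Since $H\succ0$ and $H\mathcal S\subseteq\mathcal S$, the restriction $H|_{\mathcal S}$ is an injective map of $\mathcal S$ into itself. It is therefore a bijection, so $H\mathcal S=\mathcal S$, and
\[
\range(G)=H\,\range(X)=\mathcal S .
\]
Hence $\Pi_G=\Pi_{\mathcal S}$. The identity $A\Sigma A^\T=\Pi_G$ was already proved in Proposition~\ref{prop:active_subspace_whitening} and needs nothing new.

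\emph{Block form.} Because $H$ is symmetric, $\mathcal S^\perp$ is also $H$-invariant. I would pick an orthonormal basis adapted to $\mathcal S\oplus\mathcal S^\perp$, writing $W=[W_1\ W_2]$ with $\range(W_1)=\mathcal S$. In this basis
\[
H=\diag(H_1,H_2),\qquad H_1\succ0,
\]
and
\[
\Sigma=\diag(\Sigma_1,0),\qquad \Sigma_1\succ0,
\]
where $\Sigma_1\succ0$ because $\range(\Sigma)=\range(X)=\mathcal S$. It follows that
\[
H\Sigma H=\diag(H_1\Sigma_1H_1,0),
\qquad
(H\Sigma H)^\dagger=\diag\bigl((H_1\Sigma_1H_1)^{-1},0\bigr).
\]
Using $(H_1\Sigma_1H_1)^{-1}=H_1^{-1}\Sigma_1^{-1}H_1^{-1}$, the formula $A^\T A=H(H\Sigma H)^\dagger H$ from Proposition~\ref{prop:active_subspace_whitening} gives
\[
A^\T A=\diag(\Sigma_1^{-1},0)=\Sigma^\dagger .
\]

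\emph{Conditioning.} The positive singular values of $A$ are the square roots of the positive eigenvalues of $A^\T A=\Sigma^\dagger$. These eigenvalues are $\sigma_i(X)^{-2}$ for the $r$ positive singular values of $X$, so the positive singular values of $A$ are $1/\sigma_i(X)$, and therefore $\kappa_+(A)=\kappa_+(X)$.

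The main obstacle I expect is the careful handling of the pseudoinverse. The inversion rule $(H\Sigma H)^\dagger=H^{-1}\Sigma^\dagger H^{-1}$ fails in general, and invariance is exactly what makes it valid. The block-diagonal reduction sidesteps the issue, since on $\mathcal S$ every matrix involved is an ordinary inverse. It remains to check one thing: that $\range(\Sigma)=\mathcal S$, so that the blocks of $\Sigma$ and $H$ align. This holds because $\range(XX^\T)=\range(X)$.
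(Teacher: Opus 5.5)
Your proof is correct and follows essentially the same route as the paper: both get $\range(G)=H\mathcal S=\mathcal S$ from the invertibility of $H$, block-diagonalize $H$ and $\Sigma$ in an orthonormal basis adapted to $\mathcal S\oplus\mathcal S^\perp$, compute $A^\T A=H(H\Sigma H)^\dagger H=\Sigma^\dagger$ blockwise, and cite Proposition~\ref{prop:active_subspace_whitening} for $A\Sigma A^\T=\Pi_G$. You also justify $\Sigma_1\succ0$ and spell out the final step (the positive singular values of $A$ are $1/\sigma_i(X)$), both of which the paper leaves implicit.
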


\begin{proof}
Since $H$ is invertible and maps $\mathcal S$ into itself, $\range(G)=H\mathcal S=\mathcal S$, so $\Pi_G=\Pi_{\mathcal S}$. Choose an orthonormal basis adapted to $\R^d=\mathcal S\oplus\mathcal S^\perp$. Because $H$ is symmetric and $\mathcal S$ is $H$-invariant, both $H$ and $\Sigma$ are block diagonal in this basis:
\[
  H=\begin{pmatrix}H_{\mathcal S}&0\\0&H_{\mathcal S^\perp}\end{pmatrix},
  \qquad
  \Sigma=\begin{pmatrix}\Sigma_{\mathcal S}&0\\0&0\end{pmatrix},
  \qquad
  \Sigma_{\mathcal S}\succ0 .
\]
Hence
\[
  A^\T A=H(H\Sigma H)^\dagger H
  =\begin{pmatrix}H_{\mathcal S}(H_{\mathcal S}\Sigma_{\mathcal S}H_{\mathcal S})^{-1}H_{\mathcal S}&0\\0&0\end{pmatrix}
  =\begin{pmatrix}\Sigma_{\mathcal S}^{-1}&0\\0&0\end{pmatrix}
  =\Sigma^\dagger .
\]
The identity $A\Sigma A^\T=\Pi_G$ is Proposition~\ref{prop:active_subspace_whitening}.
\end{proof}

\subsubsection{Exact Line Search and Progress in the Rank-Deficient Case}

\begin{proposition}[Active-subspace line search]
\label{prop:active_subspace_progress}
If $G\neq0$, then $f(X-\gamma Q)=f(X)-\gamma\norm{G}_{\nuc}+\frac{\gamma^2}{2}\tr(H\Pi_G)$ for every $\gamma\ge0$. Hence $\gamma^\star(Q)=\norm{G}_{\nuc}/\tr(H\Pi_G)$, $\ f(X-\gamma^\star(Q)Q)=f(X)-\norm{G}_{\nuc}^2/(2\tr(H\Pi_G))$, and, whenever $f(X)>0$,
\[
  \frac{f(X-\gamma^\star(Q)Q)}{f(X)}=1-\widetilde p(X),
  \qquad
  \widetilde p(X):=\frac{\norm{G}_{\nuc}^2}{\tr(H\Pi_G)\tr(H\Sigma)}.
\]
\end{proposition}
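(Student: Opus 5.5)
The plan is to expand the quadratic exactly along the ray $X-\gamma Q$ and then use Lemma~\ref{lem:active_projector_identities} to evaluate the two coefficients. Since $f(X)=\tfrac12\tr(X^\T HX)$ and $\nabla f(X)=G=HX$, for every $\gamma$ we have
\[
  f(X-\gamma Q)=f(X)-\gamma\ip{G}{Q}+\tfrac{\gamma^2}{2}\tr(Q^\T HQ).
\]
This is just the quadratic case of~\eqref{eq:matrix_sandwich} with $L=M=H$, i.e.\ the equality case of Proposition~\ref{prop:model_decrease}.

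The linear coefficient comes from the thin SVD $G=U\Sigma_GV^\T$. It gives
\[
  \ip{G}{Q}=\tr(V\Sigma_GU^\T UV^\T)=\tr(\Sigma_G)=\norm{G}_{\nuc}.
\]
For the quadratic coefficient, Lemma~\ref{lem:active_projector_identities} gives $\tr(Q^\T HQ)=\tr(H\Pi_G)$. Substituting both yields the claimed expansion for every $\gamma\ge0$.

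Next we minimize over $\gamma\ge0$. This requires the curvature $\tr(H\Pi_G)$ to be strictly positive, which is the only point needing care. Since $G\ne0$, we have $r\ge1$, so $\Pi_G=UU^\T\ne0$. Since $H\succ0$,
\[
  \tr(H\Pi_G)=\tr(U^\T HU)\ge r\,\lambda_{\min}(H)>0.
\]
Also $\norm{G}_{\nuc}>0$. The function of $\gamma$ is therefore a strictly convex parabola with unconstrained minimizer $\norm{G}_{\nuc}/\tr(H\Pi_G)>0$, which lies in the feasible region $\gamma\ge0$. Substituting it gives the decrease $\norm{G}_{\nuc}^2/(2\tr(H\Pi_G))$.

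For the ratio, note that $f(X)=\tfrac12\tr(X^\T HX)=\tfrac12\tr(H\Sigma)$ by cyclicity of the trace. When $f(X)>0$, dividing the decrease by $f(X)$ gives $1-\widetilde p(X)$ with $\widetilde p(X)=\norm{G}_{\nuc}^2/\bigl(\tr(H\Pi_G)\tr(H\Sigma)\bigr)$. As a consistency check, in full row rank $\Pi_G=I_d$ and this reduces to Theorem~\ref{prop:precond:progress}.
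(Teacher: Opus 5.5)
Your proposal is correct and follows the paper's proof essentially step for step: you expand the quadratic along $Q$, use $\ip{G}{Q}=\norm{G}_{\nuc}$ and $\tr(Q^\T HQ)=\tr(H\Pi_G)$ from Lemma~\ref{lem:active_projector_identities}, minimize the resulting parabola over $\gamma\ge0$, and divide by $f(X)=\tfrac12\tr(H\Sigma)$. Your explicit check that $\tr(H\Pi_G)\ge r\,\lambda_{\min}(H)>0$, which guarantees a strictly convex parabola with a positive, feasible minimizer, is a detail the paper leaves implicit.
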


\begin{proof}
The quadratic expansion along $Q$, together with $\ip{G}{Q}=\norm{G}_{\nuc}$ and $\tr(Q^\T HQ)=\tr(H\Pi_G)$ (Lemma~\ref{lem:active_projector_identities}), gives
\[
  f(X-\gamma Q)=f(X)-\gamma\ip{G}{Q}+\frac{\gamma^2}{2}\tr(Q^\T HQ)=f(X)-\gamma\norm{G}_{\nuc}+\frac{\gamma^2}{2}\tr(H\Pi_G).
\]
Minimizing over $\gamma\ge0$ gives the step and the decrease, and dividing by $f(X)=\tfrac12\tr(H\Sigma)$ gives the ratio.
\end{proof}

\begin{corollary}[Rank-dependent pessimistic bound]
\label{cor:rank_dependent_pessimistic_bound}
$\widetilde p(X)\ge\lambda_{\min}(H)/\tr(H\Pi_G)\ge1/(r\,\kappa(H))$.
\end{corollary}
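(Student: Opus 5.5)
The plan is to mirror Step~3 of the proof of Theorem~\ref{prop:precond:progress}, replacing the full-rank curvature $\tr(H)$ by the active curvature $\tr(H\Pi_G)$ from Proposition~\ref{prop:active_subspace_progress}.

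The first inequality should follow from the numerator. The nuclear norm dominates the Frobenius norm, so $\norm{G}_{\nuc}^2\ge\norm{G}_\F^2=\tr(X^\T H^2X)=\tr(H^2\Sigma)$. Write $\Sigma=XX^\T\succeq0$ and $H^2=H^{1/2}HH^{1/2}\succeq\lambda_{\min}(H)H$. Then
\[
  \tr(H^2\Sigma)=\tr\bigl(\Sigma^{1/2}H^2\Sigma^{1/2}\bigr)\ge\lambda_{\min}(H)\tr(H\Sigma).
\]
Dividing by $\tr(H\Pi_G)\tr(H\Sigma)$, which is positive because $H\succ0$ and $f(X)>0$, gives $\widetilde p(X)\ge\lambda_{\min}(H)/\tr(H\Pi_G)$.

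For the second inequality I would bound the active curvature. By Lemma~\ref{lem:active_projector_identities}, $\Pi_G=UU^\T$ with $U\in\R^{d\times r}$ having orthonormal columns. Hence $\tr(H\Pi_G)=\tr(U^\T HU)\le r\,\lambda_{\max}(H)$, since each diagonal entry $u_i^\T Hu_i$ is at most $\lambda_{\max}(H)$. It follows that $\lambda_{\min}(H)/\tr(H\Pi_G)\ge\lambda_{\min}(H)/(r\lambda_{\max}(H))=1/(r\kappa(H))$.

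There is no real obstacle. The only points needing care are that $r\ge1$ (guaranteed since $G\ne0$ whenever $f(X)>0$) and that the inequality $\tr(H^2\Sigma)\ge\lambda_{\min}(H)\tr(H\Sigma)$ is justified by congruence rather than by assuming that $H$ and $\Sigma$ commute.
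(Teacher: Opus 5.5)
Your proposal is correct and is essentially the paper's proof: it uses $\norm{G}_{\nuc}^2\ge\norm{G}_\F^2=\tr(H^2\Sigma)\ge\lambda_{\min}(H)\tr(H\Sigma)$ for the first inequality and $\tr(H\Pi_G)\le r\,\lambda_{\max}(H)$ for the second. Your congruence argument $H^2-\lambda_{\min}(H)H=H^{1/2}(H-\lambda_{\min}(H)I_d)H^{1/2}\succeq0$ is a valid justification of the middle step, which the paper states without proof.
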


\begin{proof}
As in Step~3 of the proof of Theorem~\ref{prop:precond:progress},
\[
  \norm{G}_{\nuc}^2\ \ge\ \norm{G}_\F^2=\tr(H^2\Sigma)\ \ge\ \lambda_{\min}(H)\,\tr(H\Sigma),
\]
which gives the first inequality. The second follows from $\tr(H\Pi_G)\le r\lambda_{\max}(H)$, since $\Pi_G$ is a rank-$r$ orthogonal projector.
\end{proof}

Compared with the full-row-rank bound $\lambda_{\min}(H)/\tr(H)\ge1/(d\kappa(H))$, the dimension factor becomes the rank $r$: rank deficiency is not automatically pessimistic, since the curvature paid drops from $\tr(H)$ to the active curvature $\tr(H\Pi_G)$ (see also the comparison at the end of this subsection).

\subsubsection{Rank Monotonicity and Rank Drops}

\begin{proposition}[Rank is non-increasing under exact \algname{PolarGD} directions]
\label{prop:rank_nonincreasing}
For every $\gamma\ge0$, $X^+:=X-\gamma Q$ satisfies $\rank(X^+)=\rank(HX^+)\le\rank(G)=\rank(X)$. In particular, exact \algname{PolarGD} cannot create new active gradient directions.
\end{proposition}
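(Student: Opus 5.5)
The plan is to use the factorization $Q=\polar(G)$ with $G=HX=U\Sigma_GV^\T$, so that $Q=UV^\T$. This lets us rewrite $X^+$ as a left multiple of $X$ whose range stays inside a fixed subspace. Two preliminary facts are needed. First, since $H\succ0$ is invertible, $\rank(HY)=\rank(Y)$ for every $Y$; this gives both the equality $\rank(X^+)=\rank(HX^+)$ and $\rank(G)=\rank(X)$. The statement therefore reduces to showing $\rank(X^+)\le r:=\rank(G)$. Second, by the representation \eqref{eq:precond:A_def_compact}, or directly from Lemma~\ref{lem:active_projector_identities}, we have $Q=(GG^\T)^{\dagger/2}G=(H\Sigma H)^{\dagger/2}HX=A(X)X$.

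With these in hand, the main step is
\[
  X^+ = X-\gamma A(X)X=(I_d-\gamma A(X))X .
\]
From this, $\rank(X^+)\le\rank(X)=r$ follows immediately, because left multiplication cannot increase rank. This already gives the inequality for every $\gamma\ge0$, with no sign condition on $\gamma$ actually used.

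As a consistency check, and to support the interpretation ``cannot create new active gradient directions'', I would also record the column-space inclusions. The range of $X^+$ lies in $\range(X)+\range(Q)$, and $\range(Q)=\range(U)=\range(G)=H\,\range(X)$. So $HX^+=G-\gamma HQ$ has range contained in $\range(G)+H\range(G)$, which is not obviously contained in $\range(G)$. For that reason I would phrase the ``no new active directions'' claim through the row space instead. Every row combination satisfies $\range((X^+)^\T)\subseteq\range(X^\T)+\range(Q^\T)$, and $\range(Q^\T)=\range(V)=\range(G^\T)=\range(X^\T H)=\range(X^\T)$. Hence $\range((X^+)^\T)\subseteq\range(X^\T)$ and $\range((HX^+)^\T)=\range((X^+)^\T)\subseteq\range(G^\T)$. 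So the right singular subspace of the new gradient is contained in the old one, which is exactly the sense in which new active directions cannot appear.

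The main obstacle is not the rank inequality, which is immediate from the factorization, but stating precisely which ``active directions'' are preserved. The left ranges need not nest when $H$ does not leave $\range(X)$ invariant, so I would commit to the right (row) subspace inclusion as the rigorous content of the ``in particular'' clause. In the $H$-invariant case of Proposition~\ref{prop:invariant_subspace_conditioning}, the left ranges nest as well.
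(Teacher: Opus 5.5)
Your proof is correct and is essentially the paper's argument: the paper factors $HX^+=(R-\gamma H)Q$ with $R=(GG^\T)^{1/2}$ and bounds the rank by $\rank(Q)$, whereas you factor $X^+=(I_d-\gamma A(X))X$; both are the same left-multiplication rank bound combined with the invertibility of $H$. Your row-space reading of the ``in particular'' clause, $\range((X^+)^\T)\subseteq\range(X^\T)$, is also correct, and it is a sound way to make that clause precise because the left ranges need not nest unless $\range(X)$ is $H$-invariant.
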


\begin{proof}
Since $G=RQ$,
\[
  HX^+=H(X-\gamma Q)=G-\gamma HQ=(R-\gamma H)Q,
\]
so $\rank(HX^+)\le\rank(Q)=\rank(G)$. As $H$ is invertible, $\rank(X^+)=\rank(HX^+)$ and $\rank(G)=\rank(X)$.
\end{proof}

\begin{proposition}[Characterizing rank drops]
\label{prop:rank_drop_characterization}
For $X^+=X-\gamma Q$, $\rank(X^+)=\rank\bigl((R-\gamma H)U\bigr)$. Hence the rank drops if and only if $(R-\gamma H)u=0$ for some nonzero $u\in\range(G)$. In the full-row-rank case, this happens if and only if $\det(R-\gamma H)=0$, i.e., $\gamma\in\spec(H^{-1/2}RH^{-1/2})$.
\end{proposition}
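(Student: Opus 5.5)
The plan is to reuse the factorization from the proof of Proposition~\ref{prop:rank_nonincreasing}. With $R=(GG^\T)^{1/2}=U\Sigma_GU^\T$ we have $G=RQ$. Hence
\[
  HX^+=(R-\gamma H)Q=(R-\gamma H)UV^\T .
\]
Since $V$ has orthonormal columns, $V^\T$ is surjective onto $\R^r$, so right-multiplication by it preserves rank. This gives $\rank(HX^+)=\rank\bigl((R-\gamma H)U\bigr)$. Because $H$ is invertible, $\rank(X^+)=\rank(HX^+)$, which proves the first identity.

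\textbf{Rank drop criterion.} By Proposition~\ref{prop:rank_nonincreasing}, $\rank(X)=\rank(G)=r$, and $U\in\R^{d\times r}$ has rank $r$. So $\rank\bigl((R-\gamma H)U\bigr)<r$ if and only if the $d\times r$ matrix $(R-\gamma H)U$ has a nontrivial kernel. That means $(R-\gamma H)Uc=0$ for some $c\neq0$. Setting $u:=Uc$ gives a nonzero vector, and $U$ parametrizes $\range(G)$ bijectively, so this is exactly the condition $(R-\gamma H)u=0$ for some nonzero $u\in\range(G)$.

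\textbf{Full-row-rank case.} Here $r=d$, so $U$ is square orthogonal and $\range(G)=\R^d$. The condition therefore becomes singularity of $R-\gamma H$, i.e.\ $\det(R-\gamma H)=0$. To turn this into a spectral condition, factor
\[
  R-\gamma H=H^{1/2}\bigl(H^{-1/2}RH^{-1/2}-\gamma I_d\bigr)H^{1/2},
\]
using $H\succ0$. Since $H^{1/2}$ is invertible, the determinant vanishes if and only if $\gamma$ is an eigenvalue of the symmetric matrix $T=H^{-1/2}RH^{-1/2}$. This $T$ is the same matrix that appeared in Step~4 of the proof of Theorem~\ref{prop:precond:progress}.

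\textbf{Main obstacle.} The only delicate point is the rank-preservation step: right-multiplying by $V^\T$ preserves rank because $V^\T V=I_r$, and left-multiplying by $U$ injects $\R^r$ onto $\range(G)$. Everything else is direct bookkeeping.
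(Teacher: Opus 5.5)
Your proof is correct and matches the paper's argument step for step. You factor $HX^+=(R-\gamma H)UV^\T$ and drop $V^\T$ because it has full row rank. You then read off a loss of column rank of $(R-\gamma H)U$ as a kernel vector $u=Ua\in\range(G)$, and in the full-row-rank case you use the congruence $R-\gamma H=H^{1/2}\bigl(H^{-1/2}RH^{-1/2}-\gamma I_d\bigr)H^{1/2}$.
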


\begin{proof}
By the previous proof, $HX^+=(R-\gamma H)Q=(R-\gamma H)UV^\T$. Since $V^\T$ has full row rank $r$,
\[
  \rank(X^+)=\rank(HX^+)=\rank\bigl((R-\gamma H)U\bigr).
\]
The $d\times r$ matrix $(R-\gamma H)U$ loses column rank if and only if $(R-\gamma H)Ua=0$ for some $a\neq0$; setting $u=Ua\in\range(G)$ gives the condition. If $r=d$, then $U$ is orthogonal, so the rank drops if and only if $R-\gamma H$ is singular, and
\[
  R-\gamma H=H^{1/2}\bigl(H^{-1/2}RH^{-1/2}-\gamma I_d\bigr)H^{1/2}
\]
shows that this happens exactly when $\gamma\in\spec(H^{-1/2}RH^{-1/2})$.
\end{proof}

\subsubsection{A Rank-$r$ One-Step Exactness Regime}

\begin{proposition}[One-step exactness on an invariant active subspace]
\label{prop:rank_r_onestep_exactness}
If $X=\sqrt c\,UV^\T$ with $c>0$ and $U\in\R^{d\times r}$, $V\in\R^{m\times r}$ having orthonormal columns, and $\range(U)$ is $H$-invariant, then one exact-line-search \algname{PolarGD} step reaches the minimizer. Conversely, if a nonzero rank-$r$ point $X$ is one-step exact under \algname{PolarGD}, then $X=\sqrt c\,UV^\T$ for some $c>0$ and such $U,V$, with $\range(U)$ $H$-invariant.
\end{proposition}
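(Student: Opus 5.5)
The plan mirrors part (iii) of Theorem~\ref{thm:onestep_regimes}, with the identity $I_d$ replaced by the active projector. Here $X^\star=0$ and $E=X$. As in the proof of that theorem, $Q$ is one-step exact at $X\neq0$ if and only if $X=\gamma Q$ for some $\gamma>0$, because $H\succ0$ makes $0$ the unique minimizer and any such $\gamma$ is then the exact line-search step. Both directions therefore reduce to deciding when $X$ is a positive multiple of $\polar(HX)$.

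\emph{Sufficiency.} Let $X=\sqrt c\,UV^\T$ and set $\mathcal S:=\range(U)$, which is $H$-invariant. Since $H$ is symmetric and invertible, it maps $\mathcal S$ bijectively onto itself, so $H\Pi_{\mathcal S}=\Pi_{\mathcal S}H$. Hence $HU=UH_{\mathcal S}$ with $H_{\mathcal S}:=U^\T HU\succ0$, an $r\times r$ matrix. Then
\[
G=HX=\sqrt c\,U H_{\mathcal S}V^\T .
\]
Take an eigendecomposition $H_{\mathcal S}=W\Lambda W^\T$. Then $G=\sqrt c\,(UW)\Lambda(VW)^\T$ is a thin SVD, because $UW$ and $VW$ have orthonormal columns and $\Lambda\succ0$. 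It follows that
\[
Q=UWW^\T V^\T=UV^\T=X/\sqrt c ,
\]
so the step $\gamma=\sqrt c$ reaches $0$. The active-subspace line-search formula of Proposition~\ref{prop:active_subspace_progress} confirms this value: $\norm{G}_\nuc=\sqrt c\,\tr(H_{\mathcal S})=\sqrt c\,\tr(H\Pi_G)$.

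\emph{Necessity.} Suppose $X=\gamma Q$ with $\gamma>0$. Write $Q=U_GV_G^\T$ from the thin SVD of $G$, of rank $r$. Then $X=\gamma\,U_GV_G^\T$ already has the required form, with $c=\gamma^2$, $U=U_G$ and $V=V_G$. It remains to show that $\range(U_G)$ is $H$-invariant.

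The key identity comes from Lemma~\ref{lem:active_projector_identities}: $G=RQ$ with $R=(GG^\T)^{1/2}$. On the other hand, $G=HX=\gamma HQ$. Subtracting gives $(R-\gamma H)Q=0$. Multiplying on the right by $V_G$ yields $RU_G=\gamma HU_G$. The columns of $RU_G$ lie in $\range(G)=\range(U_G)$, so $H$ maps $\range(U_G)$ into itself, which is the required invariance.

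The only mildly delicate point is the sufficiency step: I must check that the invariance of $\range(U)$ really makes $U^\T HU$ act as the restriction of $H$, that is, $HU=U(U^\T HU)$. This follows because $HU$ has its columns in $\range(U)$ and $UU^\T$ acts as the identity there. Once that holds, the polar factor of $UBV^\T$ with $B\succ0$ is $UV^\T$, and both directions close.
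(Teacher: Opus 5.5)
Your proposal is correct and follows essentially the same route as the paper. For sufficiency you use $HU=U(U^\T HU)$ with $U^\T HU\succ0$ to read off $Q=UV^\T$, and you also spell out, via an eigendecomposition, why the polar factor of $U(U^\T HU)V^\T$ is $UV^\T$, a step the paper leaves implicit. For necessity, your identity $RU_G=\gamma HU_G$ is exactly the paper's $HU=U\Sigma_G/\sqrt c$, since $RU_G=U_G\Sigma_G$.
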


\begin{proof}
Suppose first that $\range(U)$ is $H$-invariant. Then $HU=UB$ with $B:=U^\T HU\succ0$, and
\[
  G=HX=\sqrt c\,HUV^\T=\sqrt c\,UBV^\T .
\]
Since $B\succ0$, the polar factor of $UBV^\T$ is $UV^\T$, so $Q=UV^\T$ and
\[
  \gamma^\star(Q)=\frac{\ip{G}{Q}}{\tr(Q^\T HQ)}=\frac{\sqrt c\,\tr(B)}{\tr(B)}=\sqrt c,
  \qquad
  X^+=\sqrt c\,UV^\T-\sqrt c\,UV^\T=0 .
\]
Conversely, suppose $X-\gamma^\star Q=0$, where $Q=UV^\T$ comes from the thin SVD $G=U\Sigma_GV^\T$, and set $\sqrt c:=\gamma^\star$. Then $X=\sqrt c\,UV^\T$ and
\[
  \sqrt c\,HUV^\T=HX=G=U\Sigma_GV^\T .
\]
Multiplying on the right by $V$ gives $HU=U\Sigma_G/\sqrt c$, so $\range(U)$ is $H$-invariant.
\end{proof}

For $r=d$ the invariance condition is automatic, and Proposition~\ref{prop:rank_r_onestep_exactness} reduces to Theorem~\ref{thm:onestep_regimes}(iii): the general statement is \emph{isotropy on an invariant active subspace}.

\subsubsection{Active-Subspace Root Dynamics}

\begin{proposition}[Root dynamics with an active projector]
\label{prop:root_dynamics_active_projector}
For $X(\gamma):=X-\gamma Q$ and $\Sigma(\gamma):=X(\gamma)X(\gamma)^\T$, $\ H\Sigma(\gamma)H=(R-\gamma H)\Pi_G(R-\gamma H)$. In the full-row-rank case $\Pi_G=I_d$, so $(H\Sigma(\gamma)H)^{1/2}=|R-\gamma H|$ (cf.\ Proposition~\ref{prop:precond:root_dynamics}).
\end{proposition}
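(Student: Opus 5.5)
The plan is to expand $X(\gamma)=X-\gamma Q$ and use the factorization $G=RQ$ with $R=(GG^\T)^{1/2}=U\Sigma_GU^\T$, exactly as in the proof of Proposition~\ref{prop:rank_nonincreasing}. Since $G=HX$, we have
\[
HX(\gamma)=HX-\gamma HQ=RQ-\gamma HQ=(R-\gamma H)Q .
\]
Hence
\[
H\Sigma(\gamma)H=\bigl(HX(\gamma)\bigr)\bigl(HX(\gamma)\bigr)^\T=(R-\gamma H)QQ^\T(R-\gamma H)^\T .
\]
By Lemma~\ref{lem:active_projector_identities}, $QQ^\T=\Pi_G$. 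Since $R$ and $H$ are symmetric, $(R-\gamma H)^\T=R-\gamma H$, and this gives the first identity.

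For the full-row-rank case, $\rank(G)=d$, so $U$ is orthogonal and $\Pi_G=UU^\T=I_d$. The identity then reduces to
\[
H\Sigma(\gamma)H=(R-\gamma H)^2 .
\]
The matrix $R-\gamma H$ is symmetric, so it has a spectral decomposition $W\diag(\nu_i)W^\T$. Its square is therefore $W\diag(\nu_i^2)W^\T\succeq0$. By uniqueness of the positive semidefinite square root, the root is $W\diag(|\nu_i|)W^\T=|R-\gamma H|$, where $|\cdot|$ is the matrix absolute value.

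No step is a real obstacle. The only points needing care are the symmetry of $R-\gamma H$, which lets the transpose disappear and makes $|R-\gamma H|$ well defined through the spectral calculus, and the fact that $(\cdot)^{1/2}$ means the unique PSD root. In general $R-\gamma H$ is indefinite, because $\gamma$ can exceed eigenvalues of $H^{-1/2}RH^{-1/2}$. This is why the absolute value appears rather than $R-\gamma H$ itself, consistent with the rank-drop characterization in Proposition~\ref{prop:rank_drop_characterization}.
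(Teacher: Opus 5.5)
Your proof is correct and follows the paper's argument: the factorization $G=RQ$ gives $HX(\gamma)=(R-\gamma H)Q$, and $QQ^\T=\Pi_G$ yields the identity. You also justify $(H\Sigma(\gamma)H)^{1/2}=|R-\gamma H|$ through the spectral decomposition of the symmetric matrix $R-\gamma H$ and uniqueness of the PSD square root, a step the paper leaves implicit.
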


\begin{proof}
Since $G=RQ$, we have $HX(\gamma)=G-\gamma HQ=(R-\gamma H)Q$, and therefore
\[
  H\Sigma(\gamma)H=HX(\gamma)X(\gamma)^\T H=(R-\gamma H)\,QQ^\T\,(R-\gamma H)=(R-\gamma H)\Pi_G(R-\gamma H).
\]
\end{proof}

\subsubsection{Local Comparison with Active Curvature}

The exact local comparison between \algname{PolarGD} and \algname{GD} remains valid in the rank-deficient case with the \algname{PolarGD} curvature written as $\tr(H\Pi_G)$:
\[
  \Delta_{\mathrm{Pol}}>\Delta_{\mathrm{GD}}
  \;\Longleftrightarrow\;
  \frac{\tr(G^\T HG)}{\tr(H\Pi_G)}>\left(\frac{\norm{G}_\F^2}{\norm{G}_{\nuc}}\right)^2.
\]
Rank deficiency affects this comparison in two competing ways: the active curvature $\tr(H\Pi_G)$ can be much smaller than $\tr(H)$, which helps \algname{PolarGD}, while lower rank reduces the gap between $\norm{G}_{\nuc}$ and $\norm{G}_\F$. At rank one, $Q$ is collinear with $G$, and \algname{PolarGD} coincides with \algname{GD} under exact line search.

\newpage

\section{Rank-Truncated and Damped Polar Directions}
\label{app:trunc_damp}

This appendix formalizes simple modifications of the polar direction that address the
``tiny singular values'' sensitivity suggested by the representation
$Q=(G G^\T)^{\dagger/2}G$.

\subsection{Rank-Truncated Polar Direction}

Let $G\in\R^{d\times m}$ with thin SVD $G=U\Sigma_GV^\T$,
$\Sigma_G=\mathrm{diag}(\sigma_1,\dots,\sigma_r)$, $\sigma_1\ge\cdots\ge\sigma_r>0$.
For an integer $1\le s\le r$, define the rank-$s$ truncation
\begin{equation}
\label{eq:rank_s_polar}
  Q^{(s)} := U_{:,1:s} V_{:,1:s}^\T.
\end{equation}
Then $\mathrm{rank}(Q^{(s)})=s$, $\|Q^{(s)}\|_2=1$, and
\begin{equation}
\label{eq:rank_s_projector}
  Q^{(s)} (Q^{(s)})^\T = U_{:,1:s}U_{:,1:s}^\T
\end{equation}
is the projector onto the span of the top-$s$ left singular vectors.

\paragraph{LMO interpretation.}
$Q^{(s)}$ is the maximizer of $\ip{G}{Q}$ over spectral-norm-bounded matrices with rank at most $s$:
\begin{equation}
\label{eq:rank_s_lmo}
  Q^{(s)} \in \arg\max\Bigl\{\ip{G}{Q} : \|Q\|_2 \le 1,\ \mathrm{rank}(Q)\le s\Bigr\}.
\end{equation}
Moreover, the attained gain is the Ky Fan $s$-norm
\begin{equation}
\label{eq:kyfan_gain}
  \ip{G}{Q^{(s)}} = \sum_{i=1}^s \sigma_i =: \|G\|_{\nuc,s}.
\end{equation}

\paragraph{Effect on exact line search for quadratics.}
For the quadratic $f(X)=\tfrac12\tr(X^\T H X)$ with gradient $G=HX$,
the exact line-search decrease along $Q^{(s)}$ is
\begin{equation}
\label{eq:delta_rank_s}
  \Delta_{\mathrm{Pol}}^{(s)}(X)
  = \frac{\ip{G}{Q^{(s)}}^2}{2\,\tr\bigl((Q^{(s)})^\T H Q^{(s)}\bigr)}
  = \frac{\|G\|_{\nuc,s}^2}{2\,\tr\bigl((Q^{(s)})^\T H Q^{(s)}\bigr)}.
\end{equation}
In the isotropic case $H=\lambda I_d$, we have
$\tr((Q^{(s)})^\T H Q^{(s)})=\lambda\|Q^{(s)}\|_\F^2=\lambda s$, hence
\begin{equation}
\label{eq:isotropic_rank_s_progress}
  f(X^+) = f(X) - \frac{\lambda\,\|X\|_{\nuc,s}^2}{2s},
  \qquad
  \frac{f(X^+)}{f(X)} = 1 - \frac{\|X\|_{\nuc,s}^2}{s\,\|X\|_\F^2}.
\end{equation}
When $X$ has rapidly decaying singular values, choosing $s\ll d$ can significantly improve the
fractional decrease by ignoring tiny singular modes that otherwise degrade the isotropic
coefficient $p(X)=\|X\|_{\nuc}^2/(r\|X\|_\F^2)$ of~\eqref{eq:isotropic_ratio_pol}, with $r=\rank(X)$.

\subsection{Damped Polar Direction}

A soft alternative is Tikhonov damping:
for $\varepsilon>0$, define
\begin{equation}
\label{eq:damped_polar}
  Q_\varepsilon := (G G^\T + \varepsilon I_d)^{-1/2} G.
\end{equation}
If $G=U\Sigma_GV^\T$, then
\begin{equation}
\label{eq:damped_svd}
  Q_\varepsilon = U\,\mathrm{diag}\left(\frac{\sigma_i}{\sqrt{\sigma_i^2+\varepsilon}}\right)V^\T,
\end{equation}
so each singular value is mapped to $\sigma_i/\sqrt{\sigma_i^2+\varepsilon}\in(0,1)$.
The preconditioner $(G G^\T+\varepsilon I_d)^{-1/2}$ is uniformly bounded:
$\|(G G^\T + \varepsilon I_d)^{-1/2}\|_2 \le 1/\sqrt{\varepsilon}$, with equality when $\rank(G)<d$.

Both truncation~\eqref{eq:rank_s_polar} and damping~\eqref{eq:damped_polar} keep the singular vectors of the polar direction and limit the weight given to tiny singular values.

\section{Finite Newton--Schulz as Damped Polar}
\label{app:newton_schulz_filter}

This appendix relates finite \approxalg{Newton--Schulz} orthogonalization to exact polar, damped polar, and rank-truncated polar directions. Finite \approxalg{Newton--Schulz} is a polynomial map of the singular values rather than an exact damping or truncation, but it behaves like damped polar: unlike the exact polar map, it does not apply an unbounded multiplier $1/\sigma_i$ to tiny nonzero singular values.

\subsection{A General Polynomial Newton--Schulz Filter}

Consider iterations of the form
\begin{equation}
\label{eq:ns_general_iteration}
  Z_{t+1}=a_tZ_t+b_tZ_tZ_t^\T Z_t+c_t(Z_tZ_t^\T)^2Z_t,
  \qquad t=0,\dots,q-1.
\end{equation}
This includes the quintic \approxalg{Newton--Schulz} iteration of \algname{Muon} \citep{jordan2024muon}, which uses the constant coefficients $(a_t,b_t,c_t)=(3.4445,-4.7750,2.0315)$, and step-dependent variants such as Polar Express \citep{amsel2025polar}.

\begin{proposition}[Finite \approxalg{Newton--Schulz} preserves singular vectors]
\label{prop:ns_preserves_singular_vectors}
If $Z_0=U\diag(s_i)V^\T$ is a thin SVD, then $Z_t=U\diag(\phi_t(s_i))V^\T$ for all $t$, where $\phi_0(s)=s$ and $\phi_{t+1}(s)=a_t\phi_t(s)+b_t\phi_t(s)^3+c_t\phi_t(s)^5$.
\end{proposition}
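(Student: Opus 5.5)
The plan is induction on $t$. The base case $t=0$ holds by the convention $\phi_0(s)=s$. For the inductive step, suppose $Z_t=U\diag(\phi_t(s_i))V^\T$ with $U\in\R^{d\times r}$ and $V\in\R^{m\times r}$ having orthonormal columns. Write $D_t:=\diag(\phi_t(s_i))$. The orthonormality relations $V^\T V=I_r$ and $U^\T U=I_r$ give
\[
  Z_tZ_t^\T = U D_t V^\T V D_t U^\T = U D_t^2 U^\T,
\]
and the same cancellations then yield
\[
  Z_tZ_t^\T Z_t = U D_t^3 V^\T,
  \qquad
  (Z_tZ_t^\T)^2 Z_t = U D_t^5 V^\T.
\]
Substituting these into \eqref{eq:ns_general_iteration} and collecting terms gives
\[
  Z_{t+1}=U\bigl(a_tD_t+b_tD_t^3+c_tD_t^5\bigr)V^\T
  = U\diag\bigl(\phi_{t+1}(s_i)\bigr)V^\T,
\]
because diagonal matrices act entrywise under polynomials. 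This completes the induction.

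One subtlety is worth noting. The entries $\phi_t(s_i)$ may be negative, zero, or unordered, so $Z_t=U\diag(\phi_t(s_i))V^\T$ need not be a thin SVD in the strict sense. The claim only asserts this factorization with the \emph{same} $U,V$, and the induction uses nothing beyond the orthonormality of the columns of $U$ and $V$. I would therefore state the inductive hypothesis as ``$Z_t=UD_tV^\T$ with $D_t$ diagonal,'' not as an SVD. If a genuine SVD is wanted afterward, one can absorb signs into $U$ and reorder the entries.

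There is no real obstacle here. The only care needed is this distinction between a factorization with fixed orthonormal $U,V$ and an SVD; the remainder is routine algebra.
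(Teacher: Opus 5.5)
Your proposal is correct and matches the paper's proof: induction on $t$, using only $U^\T U=V^\T V=I_r$ to obtain $Z_tZ_t^\T Z_t=UD_t^3V^\T$ and $(Z_tZ_t^\T)^2Z_t=UD_t^5V^\T$, then substituting into the iteration. Your remark that the inductive hypothesis only needs a factorization with the fixed orthonormal $U,V$, not a genuine SVD, is a sensible clarification. The paper's argument relies on this implicitly.
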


\begin{proof}
We argue by induction on $t$; the case $t=0$ is the assumption. If $Z_t=U\diag(\phi_t(s_i))V^\T$, then $Z_tZ_t^\T=U\diag(\phi_t(s_i)^2)U^\T$, so
\[
  Z_tZ_t^\T Z_t=U\diag\bigl(\phi_t(s_i)^3\bigr)V^\T,
  \qquad
  (Z_tZ_t^\T)^2Z_t=U\diag\bigl(\phi_t(s_i)^5\bigr)V^\T .
\]
Substituting into~\eqref{eq:ns_general_iteration} gives $Z_{t+1}=U\diag(\phi_{t+1}(s_i))V^\T$.
\end{proof}

\subsection{Comparison with Exact Polar, Damped Polar, and Truncation}

For $G=U\diag(\sigma_i)V^\T$, each direction considered here has the form $U\diag(\phi(\sigma_i))V^\T$, i.e., it applies a scalar filter $\phi$ to the singular values, or equivalently a multiplier $m(\sigma)=\phi(\sigma)/\sigma$ to each singular mode of $G$:
\begin{itemize}[leftmargin=1.2em,itemsep=0.1em]
\item exact polar, $Q=(GG^\T)^{\dagger/2}G$: $\phi(\sigma)=1$ and $m(\sigma)=1/\sigma$, unbounded as $\sigma\downarrow0$;
\item damped polar, $Q_\varepsilon=(GG^\T+\varepsilon I_d)^{-1/2}G$: $\phi_\varepsilon(\sigma)=\sigma/\sqrt{\sigma^2+\varepsilon}$, with $\sup_\sigma m_\varepsilon(\sigma)=\varepsilon^{-1/2}$ and $\phi_\varepsilon(\sigma)=\varepsilon^{-1/2}\sigma+O(\sigma^3)$;
\item rank-$s$ truncation $Q^{(s)}$: the hard filter $\phi(\sigma_i)=1$ for $i\le s$ and $0$ otherwise;
\item $q$ \approxalg{Newton--Schulz} steps: the polynomial filter $\phi_q$ of Proposition~\ref{prop:ns_preserves_singular_vectors}, applied to the normalized singular values.
\end{itemize}

\subsection{Near-Zero Behavior and Damping Equivalence}

\begin{proposition}[Finite \approxalg{Newton--Schulz} is damping-like near zero]
\label{prop:ns_near_zero_damping}
$\phi_q(\sigma)=\ell_q\sigma+O(\sigma^3)$ as $\sigma\to0$, with $\ell_q=\prod_{t=0}^{q-1}a_t$. Near zero, finite \approxalg{Newton--Schulz} therefore matches a damped polar filter with $\varepsilon_q=\ell_q^{-2}$ in normalized singular-value units.
\end{proposition}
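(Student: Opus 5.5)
I would prove the statement by induction on $t$, using the scalar recursion of Proposition~\ref{prop:ns_preserves_singular_vectors}: $\phi_0(s)=s$ and $\phi_{t+1}(s)=a_t\phi_t(s)+b_t\phi_t(s)^3+c_t\phi_t(s)^5$. The inductive claim is that each $\phi_t$ is an odd polynomial with linear coefficient $\prod_{j<t}a_j$. Concretely, $\phi_t(s)=\ell_t s+s^3 r_t(s)$ with $\ell_t:=\prod_{j=0}^{t-1}a_j$ and $r_t$ a polynomial. The base case $t=0$ holds with $\ell_0=1$ and $r_0=0$.

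For the inductive step, I would substitute $\phi_t(s)=\ell_t s+O(s^3)$ into the recursion. The cube satisfies $\phi_t(s)^3=\ell_t^3s^3+O(s^5)=O(s^3)$, and likewise $\phi_t(s)^5=O(s^5)$. Hence
\[
\phi_{t+1}(s)=a_t\ell_t s+O(s^3).
\]
Oddness is preserved, because odd powers of an odd polynomial are odd. Therefore the remainder contains only powers $s^3,s^5,\dots$, and $\ell_{t+1}=a_t\ell_t$. Taking $t=q$ gives $\phi_q(\sigma)=\ell_q\sigma+O(\sigma^3)$ with $\ell_q=\prod_{t=0}^{q-1}a_t$. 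The $O(\sigma^3)$ is uniform on bounded sets because $\phi_q$ is a polynomial. For \algname{Muon}'s constant coefficients this yields $\ell_q=a^q$.

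For the damping equivalence, I would compare with the damped polar filter from App.~\ref{app:trunc_damp}. Its expansion is $\phi_\varepsilon(\sigma)=\sigma/\sqrt{\sigma^2+\varepsilon}=\varepsilon^{-1/2}\sigma+O(\sigma^3)$, obtained by expanding $(1+\sigma^2/\varepsilon)^{-1/2}$. Matching the linear terms $\varepsilon^{-1/2}=\ell_q$ gives $\varepsilon_q=\ell_q^{-2}$. With this choice the two filters agree to first order at zero, and both are odd with $O(\sigma^3)$ corrections.

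The phrase ``in normalized singular-value units'' refers to applying $\phi_q$ to $s_i=\sigma_i/\norm{G}_\F$. The statement is therefore about $s$, and no further rescaling is needed.

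No step here is a real obstacle; the argument is a routine Taylor or induction computation. The only care needed is to track that odd powers keep the remainder at $O(s^3)$ and not $O(s^2)$. It is also worth noting that the statement is asymptotic only. It makes no claim that $\phi_q$ tracks $\phi_{\varepsilon_q}$ away from zero; that matching is the empirical content of Figure~\ref{fig:transformer}(a).
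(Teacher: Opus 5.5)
Your proposal is correct and follows the paper's proof: the same induction shows $\phi_{t+1}(\sigma)=a_t\ell_t\sigma+O(\sigma^3)$, since the cubic and quintic terms are $O(\sigma^3)$, and matching against the damped polar slope $\varepsilon^{-1/2}$ then gives $\varepsilon_q=\ell_q^{-2}$. Tracking oddness is harmless but unnecessary, because the inductive hypothesis $\phi_t(\sigma)=\ell_t\sigma+O(\sigma^3)$ already forces the remainder to be $O(\sigma^3)$.
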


\begin{proof}
We argue by induction on $t$, starting from $\phi_0(\sigma)=\sigma$. If $\phi_t(\sigma)=\ell_t\sigma+O(\sigma^3)$, then $\phi_t(\sigma)^3$ and $\phi_t(\sigma)^5$ are $O(\sigma^3)$, so
\[
  \phi_{t+1}(\sigma)=a_t\phi_t(\sigma)+b_t\phi_t(\sigma)^3+c_t\phi_t(\sigma)^5=a_t\ell_t\,\sigma+O(\sigma^3).
\]
Hence $\ell_{t+1}=a_t\ell_t$ and $\ell_q=\prod_{t=0}^{q-1}a_t$. Since the damped polar filter satisfies $\phi_\varepsilon(\sigma)=\varepsilon^{-1/2}\sigma+O(\sigma^3)$, matching slopes gives $\varepsilon_q=\ell_q^{-2}$.
\end{proof}

For \algname{Muon}'s constant coefficients, $\ell_q=3.4445^q$; e.g., $\ell_2\approx11.9$, $\ell_3\approx40.9$, and $\ell_5\approx485$. For these coefficients the slope at zero is also the largest amplification.

\begin{proposition}[\algname{Muon}'s \approxalg{Newton--Schulz} filter]
\label{prop:ns_muon_bound}
Let \((a,b,c)=(3.4445,-4.7750,2.0315)\) and \(\phi_q\) as in
Proposition~\ref{prop:ns_preserves_singular_vectors} with constant coefficients. Then for every
\(q\ge1\) and \(s\in(0,1]\),
\[
  0<\phi_q(s)\le \bar x:=1.2024,
  \qquad
  \frac{\phi_q(s)}{s}\le a^q,
  \qquad
  \lim_{s\downarrow0}\frac{\phi_q(s)}{s}=a^q .
\]
Hence the constant \(L_q\) defined in Proposition~\ref{prop:ns_bounded_amplification} below equals \(\ell_q=a^q\).
\end{proposition}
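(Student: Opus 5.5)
The statement reduces to properties of the scalar quintic $p(x):=ax+bx^3+cx^5$, since $\phi_q=p\circ\cdots\circ p$ ($q$ times) by Proposition~\ref{prop:ns_preserves_singular_vectors}. The limit claim is immediate from Proposition~\ref{prop:ns_near_zero_damping} with constant coefficients, $\ell_q=a^q$. For the other two bounds I would establish an invariant interval for $p$ and a slope bound for $p$ on that interval, and then iterate.

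\emph{Step 1 (invariant interval).} Let $I:=(0,\bar x]$. I will show $p(I)\subseteq I$. Positivity on $I$ amounts to $a+bx^2+cx^4>0$ for $x\in(0,\bar x]$. As a quadratic in $y=x^2$, its discriminant is $b^2-4ac=22.80-27.99<0$, so it is positive for every $x$, giving $p(x)>0$ for $x>0$.

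For the upper bound, I compute the critical points of $p$ from $p'(x)=a+3bx^2+5cx^4=0$, i.e. $y=x^2=\bigl(14.325\pm\sqrt{205.2-139.99}\bigr)/20.315$. This gives $y_1\approx0.3077$ and $y_2\approx1.1026$, so $x_1\approx0.5547$ is a local maximum and $x_2\approx1.050$ is a local minimum. The maximum of $p$ on $[0,\bar x]$ is then $\max\{p(x_1),p(\bar x)\}$. Numerically $p(x_1)\approx 1.2024$, which is presumably how $\bar x$ was chosen, and $p(\bar x)$ is slightly below $1.2$. Both should be checked with interval-style arithmetic. This step is the main obstacle: $\bar x$ is essentially the local-max value rounded up, so the margin is tiny, and the computation has to show $p(x_1)\le\bar x$ rigorously. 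I would bound $p$ on a small interval around $x_1$ using $|p'|$ small there (a second-order Taylor bound with $|p''|$ bounded). Since $s\in(0,1]\subset I$, induction gives $\phi_q(s)\in I$ for all $q$.

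\emph{Step 2 (slope bound).} I claim $p(x)/x=a+bx^2+cx^4\le a$ on $I$. This is equivalent to $x^2(b+cx^2)\le0$, i.e. $x^2\le -b/c\approx 2.3505$. That holds because $\bar x^2\approx1.4458$.

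\emph{Step 3 (iterate).} Write $\phi_q(s)/s=\prod_{t=0}^{q-1}\phi_{t+1}(s)/\phi_t(s)$. Each factor equals $p(\phi_t(s))/\phi_t(s)\le a$, since $\phi_t(s)\in I$ by Step 1 and then Step 2 applies. This yields $\phi_q(s)/s\le a^q$. Together with the limit, the supremum of $\phi_q(s)/s$ over $(0,1]$ is exactly $a^q$, which identifies $L_q=\ell_q$ once the definition in Proposition~\ref{prop:ns_bounded_amplification} (the supremal amplification) is used.
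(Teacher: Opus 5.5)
Your proposal is correct and follows the paper's proof essentially step for step. Both arguments use positivity from $b^2-4ac<0$, invariance of $(0,\bar x]$ from the two critical points ($x_1\approx0.5545$, a local maximum with value $\approx1.20237$, and $x_2\approx1.0501$, a local minimum) together with the endpoint value, the slope bound from $\bar x^2<-b/c$, the telescoping product, and the limit from Proposition~\ref{prop:ns_near_zero_damping}.

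There is one minor numerical slip. $20ac\approx139.95$, not $139.99$, and your $p(\bar x)\approx0.947$ rather than ``slightly below $1.2$'', so the endpoint check has ample margin. Only $p(x_1)\le\bar x$ is tight, with margin about $3\cdot10^{-5}$, and the paper also just verifies it numerically.
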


\begin{proof}
Write one step as \(g(x)=x\,h(x^2)\) with \(h(y)=a+by+cy^2\). Since \(b^2-4ac<0\), \(h>0\), so \(g(x)>0\)
for \(x>0\). On \([0,\bar x]\), \(g'\) vanishes at \(x^2=(-3b\pm\sqrt{9b^2-20ac})/(10c)\), i.e.\ at
\(x\approx0.5545\) (local maximum, \(g\approx1.20237\)) and \(x\approx1.0501\) (local minimum,
\(g\approx0.6818\)), and \(g(\bar x)\approx0.9466\); hence \(g([0,\bar x])\subseteq[0,\bar x]\).
For \(x\in[0,\bar x]\) we have \(x^2\le\bar x^2<1.446<-b/c\approx2.350\), so
\(h(x^2)-a=x^2(b+cx^2)\le0\). Starting from \(\phi_0(s)=s\in(0,1]\subset(0,\bar x]\), induction gives
\(\phi_t(s)\in(0,\bar x]\) and \(\phi_{t+1}(s)/\phi_t(s)=h(\phi_t(s)^2)\in(0,a]\) for all \(t\).
Multiplying over \(t=0,\dots,q-1\) gives \(\phi_q(s)/s\le a^q\). The limit follows from
Proposition~\ref{prop:ns_near_zero_damping}.
\end{proof}

The proposition also locates the transition between the two regimes. Since $\phi_q(s)\approx\ell_q s$ for small $s$
and $\phi_q(s)\approx1$ for larger $s$, the two regimes meet at the damping scale $\tau_q:=1/\ell_q$, where the linear
part reaches $1$; equivalently, $\tau_q=\sqrt{\varepsilon_q}$ is the scale at which damped polar with
$\varepsilon_q=\ell_q^{-2}$ switches from $\sigma/\sqrt{\varepsilon_q}$ to $1$. For \algname{Muon}'s coefficients,
\(\tau_q\approx0.29,\ 8.4\cdot10^{-2},\ 2.4\cdot10^{-2},\ 2.1\cdot10^{-3},\ 4.3\cdot10^{-6}\) for \(q=1,2,3,5,10\).
Above $\tau_q$ the map behaves like exact polar; below it, it is close to the linear map \(s\mapsto\ell_qs\), i.e.,
to a \algname{GD} step on those singular values. Other conventions give similar scales: the point where $\phi_5$ first
reaches $\tfrac12$ is $1.1\cdot10^{-3}$, and the point where it first reaches $1$ is $2.5\cdot10^{-3}$. Since $\tau_q$
decreases with $q$, more steps make the map closer to exact polar, which helps far from rank deficiency and can hurt
near it (Figure~\ref{fig:ns_abl}).

\subsection{Bounded Amplification}

\begin{proposition}[Finite \approxalg{Newton--Schulz} has bounded singular-mode amplification]
\label{prop:ns_bounded_amplification}
Let $Z_0=G/\nu$ with $\nu>0$ chosen so that the singular values $s_i=\sigma_i(G)/\nu$ lie in $[0,1]$, and let $L_q:=\sup_{s\in(0,1]}|\phi_q(s)/s|$. Then $Z_q=\psi_q(GG^\T)G$ for the polynomial $\psi_q(t):=\phi_q(\sqrt t/\nu)/\sqrt t$ (extended continuously at $t=0$), and $\norm{\psi_q(GG^\T)}_2\le L_q/\nu<\infty$. By contrast, exact polar applies the multiplier $1/\sigma_i(G)$ to the $i$th singular mode, which is unbounded as $\sigma_i(G)\downarrow0$.
\end{proposition}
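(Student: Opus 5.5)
The plan is to work entirely through the singular value decomposition, using Proposition~\ref{prop:ns_preserves_singular_vectors} to reduce every matrix statement to a statement about the scalar polynomial $\phi_q$. Write $G=U\diag(\sigma_i)V^\T$ as a thin SVD. Then $Z_0=G/\nu=U\diag(s_i)V^\T$ with $s_i=\sigma_i/\nu\in(0,1]$, and Proposition~\ref{prop:ns_preserves_singular_vectors} gives $Z_q=U\diag(\phi_q(s_i))V^\T$. The first task is to show that $\psi_q(t):=\phi_q(\sqrt t/\nu)/\sqrt t$ is a genuine polynomial in $t$.

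By the recursion $\phi_{t+1}=a_t\phi_t+b_t\phi_t^3+c_t\phi_t^5$, each $\phi_t$ is an \emph{odd} polynomial, so $\phi_t(s)=s\,\chi_t(s^2)$ for a polynomial $\chi_t$. This follows by induction: write $\phi_t=s\chi_t(s^2)$; then $\phi_t^3=s^3\chi_t^3$ and $\phi_t^5=s^5\chi_t^5$ are again of that form. Consequently $\psi_q(t)=\nu^{-1}\chi_q(t/\nu^2)$ is a polynomial. Its value at $t=0$ is $\ell_q/\nu$, which matches the continuous extension given by Proposition~\ref{prop:ns_near_zero_damping}.

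Next I verify the identity $Z_q=\psi_q(GG^\T)G$. Since $GG^\T=U\diag(\sigma_i^2)U^\T$, on the range of $U$ we have $\psi_q(GG^\T)U=U\diag(\psi_q(\sigma_i^2))$. Hence
\[
\psi_q(GG^\T)G=U\diag\bigl(\psi_q(\sigma_i^2)\sigma_i\bigr)V^\T=U\diag\bigl(\phi_q(s_i)\bigr)V^\T=Z_q .
\]

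For the norm bound, $\psi_q(GG^\T)$ is symmetric. Its eigenvalues are $\psi_q(\sigma_i^2)=\phi_q(s_i)/(\nu s_i)$ on $\range(U)$ and $\psi_q(0)=\ell_q/\nu$ on the orthogonal complement. Each of these is bounded in absolute value by $L_q/\nu$: for the first kind this is the definition of $L_q$, and for the second $|\ell_q|=\lim_{s\downarrow0}|\phi_q(s)/s|\le L_q$. Finiteness of $L_q$ holds because $\phi_q(s)/s=\chi_q(s^2)$ is a polynomial, hence continuous and bounded on $[0,1]$. For \algname{Muon}'s coefficients, Proposition~\ref{prop:ns_muon_bound} gives $L_q=\ell_q$ explicitly. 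The contrast with exact polar is immediate from Lemma~\ref{lem:active_projector_identities}: there $(GG^\T)^{\dagger/2}$ has eigenvalues $1/\sigma_i$, which are unbounded as $\sigma_i\downarrow0$.

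The only mildly delicate point is the complement of $\range(G)$ when $\rank(G)<d$. The spectral norm there is governed by $\psi_q(0)$ rather than by any singular value, so the bound needs the inequality $|\ell_q|\le L_q$ rather than the supremum over actual $s_i$. Everything else is routine bookkeeping of the odd-polynomial structure.
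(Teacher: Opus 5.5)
Your proposal is correct and follows the paper's proof essentially step for step. The shared argument is: thin SVD plus Proposition~\ref{prop:ns_preserves_singular_vectors}, oddness of $\phi_q$ so that $\psi_q$ is a polynomial, and eigenvalue bounds on $\range(G)$ and on its orthogonal complement via $\psi_q(0)=\ell_q/\nu$. You also spell out two points the paper leaves implicit: that $|\ell_q|\le L_q$, and that $L_q<\infty$ because $\phi_q(s)/s=\chi_q(s^2)$ is a polynomial.
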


\begin{proof}
Since $\phi_q$ is an odd polynomial, $\psi_q$ is a polynomial in $t$. Write $G=U\diag(\sigma_i)V^\T$. By Proposition~\ref{prop:ns_preserves_singular_vectors},
\[
  Z_q=U\diag\bigl(\phi_q(\sigma_i/\nu)\bigr)V^\T
  =U\diag\Bigl(\frac{\phi_q(\sigma_i/\nu)}{\sigma_i}\Bigr)U^\T\,G
  =\psi_q(GG^\T)\,G .
\]
The eigenvalues of $\psi_q(GG^\T)$ are $\phi_q(s_i)/(\nu s_i)$ on $\range(G)$ and $\psi_q(0)=\ell_q/\nu$ on its orthogonal complement, all bounded in absolute value by $L_q/\nu$.
\end{proof}

Bounded amplification explains why finite \approxalg{Newton--Schulz} can converge faster than exact polar near rank deficiency: tiny nonzero singular values are not mapped to unit size but multiplied by at most $L_q$ (equal to $\ell_q$ for \algname{Muon}'s coefficients), as by damped polar with $\varepsilon_q=\ell_q^{-2}$. Finite \approxalg{Newton--Schulz} does not create new singular directions, however: exact zero singular values remain zero, so it softens near-rank-deficiency but does not undo exact low rank.

\section{Transformer Diagnostic}
\label{app:transformer}

\paragraph{Setup.} We trained a decoder-only transformer with \algname{Muon} and saved, at 25\%, 50\%, 75\%, and 100\% of
training, the weights, the momentum matrices $M$, and the raw minibatch gradients $G$ of all 50 matrices trained by
\algname{Muon}. Table~\ref{tab:transformer_setup} lists the configuration.

\begin{table}[h]
\centering
\caption{Configuration of the transformer run.}
\label{tab:transformer_setup}
\begin{tabular}{@{}lp{0.8\linewidth}@{}}
\toprule
Data & Nemotron-CC \citep{su2025nemotroncctransformingcommoncrawl}, 1.8B tokens; GPT-2 BPE tokenizer, vocabulary $50{,}304$; sequence length $2048$ \\
Model & decoder-only, $89.8$M parameters ($31.9$M non-embedding); $8$ layers, width $576$, $9$ heads of dimension $64$; RoPE ($\theta=5\cdot10^5$), pre-RMSNorm, fused SwiGLU MLP with hidden width $1536$; untied embedding and LM head \\
Training & $3433$ steps, batch $524{,}288$ tokens ($32$ sequences $\times$ $8$ accumulation micro-steps); one NVIDIA A100; bfloat16 mixed precision (fp32 master weights and gradients); gradient clipping at $1.0$ \\
Schedule (\algname{Muon}) & linear warmup over $10\%$ of the steps to $8\cdot10^{-3}$, then cosine decay to $10^{-6}$ \\
\algname{Muon} & 50 matrices: query, key, value, and output projections and both MLP matrices of each layer, embedding, LM head; momentum $0.9$ without Nesterov; weight decay $0$; per-matrix scaling $0.2\sqrt{\max(d_{\mathrm{out}},d_{\mathrm{in}})}$ \\
\approxalg{Newton--Schulz} & $5$ steps, coefficients $(a,b,c)=(3.4445,-4.7750,2.0315)$, applied to $M/(\norm{M}_\F+10^{-7})$ \\
Other parameters & normalization gains, \algname{Adam} with $\beta=(0.95,0.95)$ \\
\bottomrule
\end{tabular}
\end{table}

\paragraph{Definitions.} Let $s_i:=\sigma_i(M)/(\norm{M}_\F+10^{-7})$ be the singular values of the matrix that
\approxalg{Newton--Schulz} receives. We call the singular values with $s_i<\tau_5=1/\ell_5\approx2.1\cdot10^{-3}$ the
\emph{\algname{GD}-like range} (Appendix~\ref{app:newton_schulz_filter}). For an update $O=\sum_i o_iu_iv_i^\T$ ($o_i=1$ for
exact polar, $o_i=\phi_5(s_i)$ for five \approxalg{Newton--Schulz} steps), we report the share $\sum_{s_i<\tau_5}o_i^2/\sum_io_i^2$
of $\norm{O}_\F^2$ in the \algname{GD}-like range; for exact polar it equals the fraction of singular values in that range.

\paragraph{Where the \algname{GD}-like range occurs.} All singular values in the \algname{GD}-like range belong to attention matrices,
where they make up about a quarter of all singular values; the MLP matrices, the embedding, and the LM head have none.
The fraction is largest in the first layer (89\% for the query matrix and 60\% for the output projection at the final
checkpoint) and decreases with depth (Figure~\ref{fig:transformer}(b)). The momentum spectra are nearly stationary over
training (Figure~\ref{fig:app_evolution}), whereas the raw gradients have many more singular values in the \algname{GD}-like
range early in training (about 40\% for attention at 25--50\% of training) and about as many as the momentum at the
end: momentum averaging stabilizes the spectrum that \approxalg{Newton--Schulz} receives. Other cutoffs than $1/\ell_5$ change the
percentages (between $5\%$ and $28\%$ for attention at the final checkpoint) but not this pattern.

\begin{figure}[h]
    \centering
    \includegraphics[width=\linewidth]{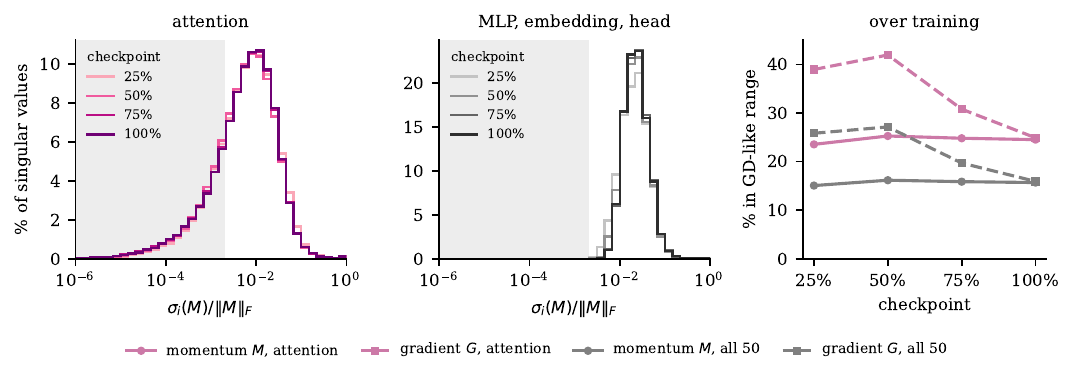}
    \caption{\textit{(Left, middle)} Distribution of the normalized singular values of the momentum matrices at four checkpoints, for the attention matrices (left) and for the MLP, embedding, and LM-head matrices (middle); the shaded band is the \algname{GD}-like range. \textit{(Right)} Percentage of singular values in the \algname{GD}-like range over training for the momentum $M$ and the raw gradient $G$, for the attention matrices and for all 50 matrices.}
    \label{fig:app_evolution}
\end{figure}

\paragraph{Relation to the weights.} In the quadratic model, tiny singular values of the gradient come from tiny
singular values of the iterate. Here the attention weights are themselves ill-conditioned, but the \algname{GD}-like
directions of $M$ are not aligned with the bottom singular directions of the weights: for the span of the $k$ left (right)
singular vectors of $M$ with $s_i<1.1\cdot10^{-3}$ (where $\phi_5(s_i)<\tfrac12$) and the span of the $k$ bottom left
(right) singular vectors of the weight matrix, the mean squared cosine of the principal angles is, in the median over the
$32$ matrices with such directions at the final checkpoint, $1.05$ ($1.02$) times its value $k/n$ for a random
$k$-dimensional subspace of $\R^n$.
transfer literally.

\paragraph{Extreme directions.} At the final checkpoint, five attention matrices have smallest normalized singular
values $s_{\min}=\sigma_{\min}/\norm{M}_\F$ between $5.6\cdot10^{-9}$ and $2.7\cdot10^{-7}$. Exact polar multiplies these
directions by $1/s_{\min}$, that is, by $1.8\cdot10^{8}$ to $3.7\cdot10^{6}$, whereas two,
three, and five \approxalg{Newton--Schulz} steps multiply them by $\ell_q=a^q\approx11.9$, $40.9$, and $485$
(Proposition~\ref{prop:ns_muon_bound}). \approxalg{Newton--Schulz} becomes polar-like on all directions of a matrix once
$\ell_q s_{\min}\ge1$, i.e., for $q\ge\lceil\log(1/s_{\min})/\log a\rceil$; over the four checkpoints, this gives $9$--$18$ steps
for the attention matrices and $4$--$5$ for the MLP matrices. The training implementation reproduces
$\mathrm{NS}_5(M)=U\diag(\phi_5(s_i))V^\T$ on all momentum matrices except the embedding and the LM head, which we did not
check, to within $10^{-4}$ (maximal absolute deviation of the diagonal entries $u_i^\T\mathrm{NS}_5(M)v_i$ from
$\phi_5(s_i)$; the off-diagonal part of $U^\T\mathrm{NS}_5(M)V$ is below $10^{-5}$ in relative Frobenius norm).

\section{Exact Dynamics through the Square-Rooted Gradient Covariance}
\label{subsec:precond:root_dynamics}

In the full-row-rank regime, the exact \algname{PolarGD} trajectory is most transparent in terms of the \emph{square-rooted gradient covariance} $R:=(H\Sigma H)^{1/2}=(GG^\T)^{1/2}$, where $\Sigma=XX^\T$ and $G=HX$.

\begin{proposition}[Exact state recursion for \algname{PolarGD} in the full-row-rank regime]
\label{prop:precond:root_dynamics}
Assume $H\succ0$ and $\rank(X)=d$. For $\gamma\ge0$, let $X(\gamma):=X-\gamma Q$ and $\Sigma(\gamma):=X(\gamma)X(\gamma)^\T$. Then
\begin{align}
  \Sigma(\gamma)&=\Sigma-\gamma\bigl(\Sigma H(H\Sigma H)^{-1/2}+(H\Sigma H)^{-1/2}H\Sigma\bigr)+\gamma^2 I_d,
  \label{eq:precond:sigma_recursion}\\
  H\Sigma(\gamma)H&=(R-\gamma H)^2,
  \label{eq:precond:root_recursion_square}\\
  f(X(\gamma))&=\tfrac12\tr\bigl(H^{-1}(R-\gamma H)^2\bigr)=\tfrac12\bigl[\tr(H^{-1}R^2)-2\gamma\tr(R)+\gamma^2\tr(H)\bigr].
  \label{eq:precond:f_along_polar_via_root}
\end{align}
Consequently, $\gamma^\star(Q)=\tr(R)/\tr(H)=\norm{G}_{\nuc}/\tr(H)$, and for $X^+:=X-\gamma^\star(Q)Q$,
\begin{equation}
\label{eq:precond:root_update_abs}
  R^+:=(H\Sigma^+H)^{1/2}=\bigl|R-\gamma^\star(Q)H\bigr|,
  \qquad
  \Sigma^+=H^{-1}\bigl(R-\gamma^\star(Q)H\bigr)^2H^{-1},
\end{equation}
where $|M|:=(M^2)^{1/2}$ for symmetric $M$.
\end{proposition}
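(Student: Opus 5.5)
The whole argument rests on one factorization. With $G=HX$ and $R:=(GG^\T)^{1/2}=(H\Sigma H)^{1/2}\succ0$ (full row rank), we have $Q=R^{-1}G$ by~\eqref{eq:precond:polar_whiten_compact} and $QQ^\T=I_d$. The step is then
\[
  HX(\gamma)=G-\gamma HQ=(R-\gamma H)Q,
\]
exactly as in the proofs of Propositions~\ref{prop:rank_nonincreasing} and~\ref{prop:root_dynamics_active_projector}. I would establish this identity first and derive the three displayed formulas from it in turn.

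For~\eqref{eq:precond:root_recursion_square}, multiply the identity by its transpose. Using $QQ^\T=I_d$ this gives
\[
  H\Sigma(\gamma)H=(R-\gamma H)QQ^\T(R-\gamma H)=(R-\gamma H)^2,
\]
since $R-\gamma H$ is symmetric. For~\eqref{eq:precond:sigma_recursion}, I would instead expand $X(\gamma)X(\gamma)^\T=\Sigma-\gamma(XQ^\T+QX^\T)+\gamma^2QQ^\T$. Then $QQ^\T=I_d$, and
\[
  XQ^\T=XX^\T H R^{-1}=\Sigma H(H\Sigma H)^{-1/2},
\]
with $QX^\T$ its transpose. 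An alternative check is to conjugate~\eqref{eq:precond:root_recursion_square} by $H^{-1}$ and expand $(R-\gamma H)^2$, using $H^{-1}R^2H^{-1}=\Sigma$.

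For~\eqref{eq:precond:f_along_polar_via_root}, I would write $f(X(\gamma))=\tfrac12\tr(H\Sigma(\gamma))=\tfrac12\tr(H^{-1}\,H\Sigma(\gamma)H)$ and substitute~\eqref{eq:precond:root_recursion_square}. Expanding $(R-\gamma H)^2$ and using cyclicity, the cross terms give $\tr(H^{-1}RH)+\tr(H^{-1}HR)=2\tr(R)$ and the last term gives $\tr(H)$. The result is a convex quadratic in $\gamma$, so its minimizer is $\gamma^\star=\tr(R)/\tr(H)$. This equals $\norm{G}_{\nuc}/\tr(H)$ because the eigenvalues of $R$ are the singular values of $G$, which agrees with Theorem~\ref{prop:precond:progress}.

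Finally,~\eqref{eq:precond:root_update_abs} follows from~\eqref{eq:precond:root_recursion_square} at $\gamma=\gamma^\star$. Since $H\Sigma^+H=(R-\gamma^\star H)^2\succeq0$, its unique PSD square root is $|R-\gamma^\star H|$ by the definition of $|\cdot|$, and $\Sigma^+=H^{-1}(R-\gamma^\star H)^2H^{-1}$ is obtained by conjugating with $H^{-1}$. The main point needing care is the sign issue here: $R-\gamma^\star H$ is symmetric but generally indefinite, so the square root must be the absolute value, not $R-\gamma^\star H$ itself. It is also worth noting that $X^+$ may lose rank (Proposition~\ref{prop:rank_drop_characterization}), but the formula does not require $R^+$ to be invertible. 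Everything else is routine algebra.
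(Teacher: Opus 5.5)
Your proposal is correct and follows essentially the same route as the paper: the factorization $HX(\gamma)=(R-\gamma H)Q$ with $QQ^\T=I_d$, a direct expansion for the covariance recursion, and evaluation of the root recursion at $\gamma^\star$ to get $R^+$ and $\Sigma^+$. The only cosmetic difference is the loss formula: you substitute the root recursion into $\tfrac12\tr(H^{-1}\,H\Sigma(\gamma)H)$, whereas the paper writes $X(\gamma)=H^{-1}(R-\gamma H)Q$ and computes $\tfrac12\tr(X(\gamma)^\T HX(\gamma))$ directly.
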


\begin{proof}
Since $\rank(G)=d$, we have $QQ^\T=I_d$ and
\[
  Q=(GG^\T)^{-1/2}G=R^{-1}G=(H\Sigma H)^{-1/2}HX .
\]
\emph{The covariance recursion.} Expanding,
\[
  \Sigma(\gamma)=(X-\gamma Q)(X-\gamma Q)^\T=\Sigma-\gamma(XQ^\T+QX^\T)+\gamma^2QQ^\T,
\]
and substituting $Q$ and $QQ^\T=I_d$ gives~\eqref{eq:precond:sigma_recursion}.

\emph{The root recursion.} Since $G=RQ$, we have $HX(\gamma)=G-\gamma HQ=(R-\gamma H)Q$, and therefore
\[
  H\Sigma(\gamma)H=(R-\gamma H)\,QQ^\T\,(R-\gamma H)=(R-\gamma H)^2 .
\]
\emph{The loss along $Q$.} Since $X(\gamma)=H^{-1}(R-\gamma H)Q$,
\[
  f(X(\gamma))=\tfrac12\tr\bigl(X(\gamma)^\T HX(\gamma)\bigr)=\tfrac12\tr\bigl(H^{-1}(R-\gamma H)^2\bigr),
\]
and expanding the square with cyclicity of the trace gives~\eqref{eq:precond:f_along_polar_via_root}. Minimizing this quadratic in $\gamma$ gives $\gamma^\star(Q)=\tr(R)/\tr(H)$, where $\tr(R)=\norm{G}_{\nuc}$, and~\eqref{eq:precond:root_update_abs} follows from~\eqref{eq:precond:root_recursion_square} at $\gamma=\gamma^\star(Q)$.
\end{proof}

\begin{corollary}[A scalar decomposition of the one-step decrease]
\label{cor:precond:scalar_decomp}
Under the assumptions of Proposition~\ref{prop:precond:root_dynamics}, $2f(X)=\tr(H^{-1}R^2)$ and $2f(X^+)=\tr(H^{-1}R^2)-\tr(R)^2/\tr(H)$. Hence
\begin{equation}
\label{eq:precond:p_via_root}
  p(X)=\frac{\tr(R)^2}{\tr(H)\tr(H^{-1}R^2)}
  =1-\frac{\min_{\gamma\ge0}\tr\bigl(H^{-1}(R-\gamma H)^2\bigr)}{\tr(H^{-1}R^2)}.
\end{equation}
\end{corollary}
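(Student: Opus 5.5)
The plan is to read off both identities from Proposition~\ref{prop:precond:root_dynamics}, which already gives the loss along the polar ray in terms of the square-rooted gradient covariance $R=(H\Sigma H)^{1/2}$. First, set $\gamma=0$ in \eqref{eq:precond:f_along_polar_via_root}. This gives $2f(X)=\tr(H^{-1}R^2)$. As a consistency check, $H^{-1}R^2=H^{-1}H\Sigma H=\Sigma H$, whose trace is $\tr(H\Sigma)=2f(X)$, so the formula agrees with the direct expression for the loss.

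Second, insert the optimal step $\gamma^\star(Q)=\tr(R)/\tr(H)$ from the same proposition into the quadratic
\[
\tr(H^{-1}R^2)-2\gamma\tr(R)+\gamma^2\tr(H).
\]
Completing the square gives the minimum value $\tr(H^{-1}R^2)-\tr(R)^2/\tr(H)$, which is $2f(X^+)$. Since $\tr(R)>0$ and $\tr(H)>0$, the unconstrained minimizer is positive. The constraint $\gamma\ge0$ is therefore inactive, and the minimum over $\gamma\ge0$ equals the unconstrained one.

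Third, divide by $2f(X)=\tr(H^{-1}R^2)$ to get
\[
f(X^+)/f(X)=1-\frac{\tr(R)^2}{\tr(H)\tr(H^{-1}R^2)}.
\]
Compare this with Theorem~\ref{prop:precond:progress}, using $\tr(R)=\norm{HX}_{\nuc}$ and $\tr(H^{-1}R^2)=\tr(HXX^\top)$. The first expression for $p(X)$ then coincides with \eqref{eq:precond:progress_coeff}. The second expression in \eqref{eq:precond:p_via_root} is just $1-2f(X^+)/(2f(X))$, with $2f(X^+)$ written as $\min_{\gamma\ge0}\tr\bigl(H^{-1}(R-\gamma H)^2\bigr)$ using \eqref{eq:precond:f_along_polar_via_root}.

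I do not expect a real obstacle here. The only points needing care are the trace identity $\tr(H^{-1}R^2)=\tr(H\Sigma)$, which uses cyclicity of the trace, and confirming that the minimizer is nonnegative so that the constrained and unconstrained minima agree.
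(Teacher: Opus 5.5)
Your proposal is correct and follows the paper's proof: both get $2f(X)=\tr(H^{-1}R^2)$ from $R^2=H\Sigma H$ and cyclicity of the trace, and obtain $2f(X^+)$ by evaluating \eqref{eq:precond:f_along_polar_via_root} at $\gamma^\star(Q)=\tr(R)/\tr(H)$. Your remark that $\tr(R)>0$ makes the constraint $\gamma\ge0$ inactive is a detail the paper leaves implicit.
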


\begin{proof}
$\tr(H^{-1}R^2)=\tr(\Sigma H)=2f(X)$ because $R^2=H\Sigma H$; evaluate~\eqref{eq:precond:f_along_polar_via_root} at $\gamma^\star(Q)$.
\end{proof}

\begin{corollary}[Commuting case: additive shrinkage of the singular values]
\label{cor:precond:commuting}
Assume $H\succ0$, $\rank(X)=d$, and $H\Sigma=\Sigma H$.
Let
\[
  H = U \diag(\lambda_1,\dots,\lambda_d) U^\T,
  \qquad
  \Sigma = U \diag(s_1^2,\dots,s_d^2) U^\T,
\]
with $\lambda_i>0$ and $s_i>0$. Then
\begin{equation}
\label{eq:precond:commuting_gamma}
  R = (H\Sigma H)^{1/2} = U \diag(\lambda_1 s_1,\dots,\lambda_d s_d) U^\T,
  \qquad
  \gamma^\star(Q) = \bar s_H := \frac{\sum_{i=1}^d \lambda_i s_i}{\sum_{i=1}^d \lambda_i}.
\end{equation}
Moreover,
\begin{equation}
\label{eq:precond:commuting_sigma_update}
  \Sigma^+
  =
  (\Sigma^{1/2} - \bar s_H I_d)^2
  =
  U \diag\bigl((s_1-\bar s_H)^2,\dots,(s_d-\bar s_H)^2\bigr) U^\T.
\end{equation}
Equivalently, the singular values of $X$ evolve according to
\begin{equation}
\label{eq:precond:commuting_singular_update}
  s_i^+ = |s_i-\bar s_H|,
  \qquad i=1,\dots,d.
\end{equation}
Finally,
\begin{equation}
\label{eq:precond:commuting_loss_variance}
  f(X)=\frac12\sum_{i=1}^d \lambda_i s_i^2,
  \qquad
  f(X^+)=\frac12\sum_{i=1}^d \lambda_i (s_i-\bar s_H)^2,
\end{equation}
and
\begin{equation}
\label{eq:precond:commuting_p}
  p(X)
  =
  \frac{\bigl(\sum_{i=1}^d \lambda_i s_i\bigr)^2}
       {\bigl(\sum_{i=1}^d \lambda_i\bigr)\bigl(\sum_{i=1}^d \lambda_i s_i^2\bigr)}.
\end{equation}
\end{corollary}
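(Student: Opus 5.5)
The plan is to specialize Proposition~\ref{prop:precond:root_dynamics} to the commuting case, where every matrix involved is diagonal in the common eigenbasis $U$. First I would justify the spectral forms in the hypothesis. Since $H$ and $\Sigma$ are commuting symmetric matrices, they can be diagonalized simultaneously. Because $\Sigma=XX^\T\succ0$ (full row rank), its eigenvalues can be written as $s_i^2$ with $s_i>0$, and the $s_i$ are then exactly the singular values of $X$. With this form, $H\Sigma H=U\diag(\lambda_i^2s_i^2)U^\T$. Since $\lambda_is_i>0$, the unique positive semidefinite square root is $R=U\diag(\lambda_is_i)U^\T$. Then $\tr(R)=\sum_i\lambda_is_i$ and $\tr(H)=\sum_i\lambda_i$, so the formula $\gamma^\star(Q)=\tr(R)/\tr(H)$ from Proposition~\ref{prop:precond:root_dynamics} gives $\bar s_H$, which is \eqref{eq:precond:commuting_gamma}.

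Next I would apply \eqref{eq:precond:root_update_abs}: $\Sigma^+=H^{-1}(R-\bar s_HH)^2H^{-1}$. Here $R-\bar s_HH=U\diag(\lambda_i(s_i-\bar s_H))U^\T$, and everything commutes. Conjugating by $H^{-1}$ therefore cancels the $\lambda_i^2$, leaving $\Sigma^+=U\diag((s_i-\bar s_H)^2)U^\T=(\Sigma^{1/2}-\bar s_HI_d)^2$, where $\Sigma^{1/2}=U\diag(s_i)U^\T$. This is \eqref{eq:precond:commuting_sigma_update}. Since $\Sigma^+=X^+X^{+\T}$, the singular values of $X^+$ are the nonnegative square roots of its eigenvalues, namely $|s_i-\bar s_H|$, which gives \eqref{eq:precond:commuting_singular_update}. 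If some $s_i=\bar s_H$, the rank drops; I would note that the formula still holds in that case, and the only consequence is that the full-rank hypothesis may fail at the next step.

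For the losses, I would use $f(X)=\tfrac12\tr(H\Sigma)$ and $f(X^+)=\tfrac12\tr(H\Sigma^+)$, both computed in the eigenbasis. This gives \eqref{eq:precond:commuting_loss_variance} directly. Alternatively, the same result follows from \eqref{eq:precond:f_along_polar_via_root}.

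Finally, \eqref{eq:precond:commuting_p} follows by substituting $\norm{HX}_{\nuc}=\tr(R)=\sum_i\lambda_is_i$ and $\tr(HXX^\T)=\sum_i\lambda_is_i^2$ into the definition \eqref{eq:precond:progress_coeff}. There is no real obstacle here. The only point that needs care is identifying $(H\Sigma H)^{1/2}$ with the diagonal matrix $U\diag(\lambda_is_i)U^\T$, which relies on uniqueness of the positive semidefinite square root, and making sure the eigen-ordering of $\Sigma$ matches that of $H$ in the shared basis $U$.
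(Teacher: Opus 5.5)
Your proposal is correct and follows essentially the paper's route: diagonalize $H$ and $\Sigma$ in the common basis $U$, identify $R$ by uniqueness of the positive semidefinite square root, and take $\gamma^\star(Q)=\tr(R)/\tr(H)$ from Proposition~\ref{prop:precond:root_dynamics}. The only difference is that you read $\Sigma^+$ off \eqref{eq:precond:root_update_abs}, whereas the paper first shows $Q=R^{-1}G=\Sigma^{-1/2}X$, so that $H$ drops out of the direction and $X^+=(\Sigma^{1/2}-\bar s_H I_d)\Sigma^{-1/2}X$; this yields the same $\Sigma^+$ and also gives the iterate itself, not just its row covariance.
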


\begin{proof}
Since $H\Sigma=\Sigma H$ and both matrices are symmetric, they are simultaneously diagonalizable,
and therefore
\[
  R=(H\Sigma H)^{1/2}=H\Sigma^{1/2}=\Sigma^{1/2}H.
\]
Define
\[
  W := \Sigma^{-1/2} X.
\]
Then $W W^\T = I_d$ and $X=\Sigma^{1/2}W$. Using $G=HX=H\Sigma^{1/2}W=RW$, we get
\[
  Q = R^{-1}G = W = \Sigma^{-1/2}X.
\]
Hence
\[
  X^+ = X-\gamma^\star(Q)Q = (\Sigma^{1/2}-\gamma^\star(Q)I_d)W,
\]
which implies
\[
  \Sigma^+ = X^+ X^{+\top} = (\Sigma^{1/2}-\gamma^\star(Q)I_d)^2.
\]
This proves \eqref{eq:precond:commuting_sigma_update}. The formula for $\gamma^\star(Q)$ follows
from Proposition~\ref{prop:precond:root_dynamics} and
\[
  \tr(R)=\tr(H\Sigma^{1/2})=\sum_{i=1}^d \lambda_i s_i.
\]
The expressions for $f(X)$, $f(X^+)$, and $p(X)$ follow by simultaneous diagonalization.
\end{proof}

\paragraph{Fixed steps.} The same argument applies to the fixed dual-adapted step \(X^+=X-\gamma\norm{G}_{\nuc}Q\) of Theorem~\ref{prop:precond:progress}, with \(\alpha=\gamma\tr(H)\). Since \(\norm{G}_{\nuc}=\tr(R)=\sum_i\lambda_is_i\), the step along \(Q\) is \(\alpha\bar s_H\), so \(\Sigma^+=(\Sigma^{1/2}-\alpha\bar s_H I_d)^2\) and \(s_i^+=|s_i-\alpha\bar s_H|\). In both cases \(\Sigma^+\) is a function of \(\Sigma\), so it again commutes with \(H\), and the commuting structure persists along the trajectory.

\begin{corollary}[Two-cluster dynamics]
\label{cor:two_cluster}
Under the assumptions of Corollary~\ref{cor:precond:commuting}, suppose that the singular values
\(s_1,\dots,s_d\) of \(X\) take exactly two distinct values, \(a\) for \(i\in I\) and \(b\) for
\(i\notin I\), and let \(w:=\sum_{i\in I}\lambda_i/\tr(H)\in(0,1)\). Let \(X_0:=X\) and let \(X_k\) be the
\algname{PolarGD} iterates with exact line search. Then, for every \(k\ge1\) with \(X_k\neq0\), the singular values of
\(X_k\) equal \(c_k(1-w)\) for \(i\in I\) and \(c_kw\) for \(i\notin I\), for some \(c_k>0\) (two distinct values if
\(w\neq\tfrac12\)), and
\[
  p(X_k)=4w(1-w),
  \qquad
  f(X_{k+1})=(1-2w)^2f(X_k).
\]
If \(w=\tfrac12\), then \(X_2=0\).
\end{corollary}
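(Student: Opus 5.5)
The plan is to reduce everything to the scalar recursion of Corollary~\ref{cor:precond:commuting}. The commuting structure persists along the trajectory (noted after that corollary), so in the common eigenbasis the singular values evolve by $s_i^+=|s_i-\bar s_H|$ with $\bar s_H=\sum_i\lambda_is_i/\tr(H)$. In the two-cluster case, $\bar s_H=wa+(1-w)b$. Hence the cluster $I$ maps to $|a-wa-(1-w)b|=(1-w)|a-b|$ and the complement maps to $|b-wa-(1-w)b|=w|a-b|$. So after one step the singular values are $c_1(1-w)$ on $I$ and $c_1w$ off $I$, with $c_1:=|a-b|>0$ because $a\neq b$. This shows the claimed form at $k=1$.

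Next I show this form is invariant. Suppose at step $k\ge1$ the values are $a_k=c_k(1-w)$ and $b_k=c_kw$. Then
\[
\bar s_H=c_k\bigl(w(1-w)+(1-w)w\bigr)=2c_kw(1-w).
\]
The new values are $c_k(1-w)|1-2w|$ on $I$ and $c_kw|1-2w|$ off $I$. So the form is preserved with $c_{k+1}=|1-2w|\,c_k$, and $X_{k+1}\neq0$ exactly when $w\neq\tfrac12$. Induction gives the statement for all $k\ge1$ with $X_k\neq0$. The two values $c_k(1-w)$ and $c_kw$ are distinct iff $w\neq\tfrac12$. If $w=\tfrac12$, then $c_2=0$, so $X_2=0$.

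For the rates I use \eqref{eq:precond:commuting_p} at $X_k$, writing $\pi_i=\lambda_i/\tr(H)$:
\[
p(X_k)=\frac{\bigl(\sum_i\pi_is_i\bigr)^2}{\sum_i\pi_is_i^2}
=\frac{\bigl(2w(1-w)\bigr)^2}{w(1-w)^2+(1-w)w^2}=\frac{4w^2(1-w)^2}{w(1-w)}=4w(1-w),
\]
after cancelling $c_k^2$. Then $f(X_{k+1})=(1-p(X_k))f(X_k)=(1-2w)^2f(X_k)$ by Theorem~\ref{prop:precond:progress}. A direct check via \eqref{eq:precond:commuting_loss_variance} is also available: each $s_i$ is multiplied by $|1-2w|$, so $f$ scales by $(1-2w)^2$.

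One hypothesis needs care. Corollary~\ref{cor:precond:commuting} requires $\rank(X_k)=d$, i.e.\ all $s_i>0$. This holds at each $k\ge1$ with $X_k\neq0$, since $0<w<1$ and $c_k>0$. At $k=0$ it holds by assumption. I expect the main obstacle, which is mild, to be this bookkeeping: checking that the invariance of commutation and of full rank is justified at each step before reapplying the corollary. The algebra itself is routine.
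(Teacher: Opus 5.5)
Your proposal is correct and follows essentially the same route as the paper: you reduce to the scalar recursion $s_i^+=|s_i-\bar s_H|$ of Corollary~\ref{cor:precond:commuting}, show that one step produces the configuration $c(1-w)$ on $I$ and $cw$ off $I$, show that this configuration reproduces itself scaled by $|1-2w|$, and conclude by induction with the full-rank check. The only cosmetic difference is the order of the last step: you compute $p(X_k)$ directly from \eqref{eq:precond:commuting_p} and deduce the loss ratio, whereas the paper gets the loss ratio from degree-two homogeneity of $f$ and then reads off $p(X_k)=1-(1-2w)^2$.
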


\begin{proof}
By Corollary~\ref{cor:precond:commuting}, the iterates stay in the common eigenbasis of \(H\) and the
singular values evolve as \(s_i^+=|s_i-\bar s_H|\) with \(\bar s_H=\sum_i\pi_is_i\),
\(\pi_i=\lambda_i/\tr(H)\). Two values \(a\) (mass \(w\)) and \(b\) (mass \(1-w\)) give
\(\bar s_H=wa+(1-w)b\), hence \(|a-\bar s_H|=(1-w)c\) and \(|b-\bar s_H|=wc\) with \(c:=|a-b|>0\).
For values \(c(1-w)\) on \(I\) and \(cw\) on its complement, \(\bar s_H=2cw(1-w)\), and the next values
are \(c(1-w)|1-2w|\) and \(cw|1-2w|\): the same configuration scaled by \(|1-2w|\). By induction, for
\(k\ge1\) the singular values of \(X_k\) are \(c_k(1-w)\) on \(I\) and \(c_kw\) on its complement with
\(c_{k+1}=|1-2w|\,c_k\). Since \(f=\tfrac12\sum_i\lambda_is_i^2\) is homogeneous of degree two in the
singular values, \(f(X_{k+1})=(1-2w)^2f(X_k)\), and \(p(X_k)=1-(1-2w)^2=4w(1-w)\). If \(w=\tfrac12\),
then \(c_2=0\). While \(c_k>0\) all singular values are positive, so the full-rank assumption holds
along the trajectory.
\end{proof}

Corollary~\ref{cor:two_cluster} gives exact trajectory-level rates in both directions.
(i)~\emph{Adverse:} the near-rank-deficient family of Appendix~\ref{app:pessimistic_polar_bound} has
\(w=1/(1+(d-1)\kappa)\) and contracts at \(((d-1)\kappa-1)^2/((d-1)\kappa+1)^2\) per step.
(ii)~\emph{Favorable:} if the curvature mass is balanced, \(w=\tfrac12\), \algname{PolarGD} reaches the minimizer
in two steps for any \(\kappa(H)\). (iii)~\emph{Figure~\ref{fig:ns_abl}:} with \(H=I_d\) and
\(X_0=O^\T\diag(1,\delta,\dots,\delta)O\), we have \(w=1/d\), so \(p(X_k)=4(d-1)/d^2\) and the loss
contracts by \((1-2/d)^2\) per step for every \(k\ge1\) and every \(\delta\in(0,1)\). (iv)~For \(d=2\) every full-rank iterate is a two-cluster configuration after one
step, and \(w=\lambda_{\min}(H)/\tr(H)\) recovers Corollary~\ref{cor:precond:2x2_postcollapse}.

\begin{remark}[\algname{PolarGD} does not monotonically improve $\kappa(\Sigma_k)$]
\label{rem:precond:no_monotone_whitening}
Even in the commuting case with $H=I_d$, \algname{PolarGD} does \emph{not} monotonically improve
the conditioning of $\Sigma_k$. For example, if $d=3$ and the singular values of $X$ are
\[
  (s_1,s_2,s_3) = (4,2,1),
\]
then $\bar s = (4+2+1)/3 = 7/3$, so
\[
  (s_1^+,s_2^+,s_3^+) = \left(\frac53,\frac13,\frac43\right).
\]
Hence
\[
  \kappa(\Sigma)=\left(\frac{4}{1}\right)^2 = 16,
  \qquad
  \kappa(\Sigma^+) = \left(\frac{5/3}{1/3}\right)^2 = 25.
\]
Thus the row-covariance can become \emph{more} ill-conditioned after a \algname{PolarGD} step.
\end{remark}

\begin{remark}[The progress coefficient $p(X_k)$ is not monotone]
\label{rem:precond:p_not_monotone}
The exact progress coefficient $p(X_k)$ from Theorem~\ref{prop:precond:progress}
need not be monotone along a \algname{PolarGD} trajectory. In the commuting case with $H=I_3$,
\[
  p(X)=\frac{(s_1+s_2+s_3)^2}{3(s_1^2+s_2^2+s_3^2)}.
\]
If $(s_1,s_2,s_3)=(3,3,2)$, then
\[
  p(X)=\frac{32}{33},
  \qquad
  (s_1^+,s_2^+,s_3^+) = \left(\frac13,\frac13,\frac23\right),
  \qquad
  p(X^+) = \frac89 < \frac{32}{33}.
\]
On the other hand, if $(s_1,s_2,s_3)=(4,1,1)$, then
\[
  p(X)=\frac23,
  \qquad
  (s_1^+,s_2^+,s_3^+)=(2,1,1),
  \qquad
  p(X^+) = \frac89 > \frac23.
\]
So $p(X_k)$ may either decrease or increase from one step to the next.
\end{remark}

\begin{proposition}[One-step collapse to gradient isotropy in the $2\times 2$ full-rank case]
\label{prop:precond:2x2_collapse}
Assume $d=2$, $H\succ0$, and $\rank(X)=2$, and let $X^+=X-\gamma^\star(Q)Q$. Then $R^+=(H\Sigma^+H)^{1/2}=cI_2$ for some $c\ge0$; equivalently, $G^+G^{+\T}=c^2I_2$ and $\Sigma^+=c^2H^{-2}$. In particular, $H\Sigma^+=\Sigma^+H$.
\end{proposition}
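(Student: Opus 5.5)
The plan is to work entirely with the root recursion of Proposition~\ref{prop:precond:root_dynamics}. In the full-row-rank regime it gives $R^+=|R-\gamma^\star H|$ with $\gamma^\star=\tr(R)/\tr(H)$. It is convenient to whiten by $H$: set $T:=H^{-1/2}RH^{-1/2}\succ0$, so that $R-\gamma^\star H=H^{1/2}(T-\gamma^\star I_2)H^{1/2}$ and $\gamma^\star=\tr(HT)/\tr(H)$. Call $M:=R-\gamma^\star H$. The key observation is that $M$ is a symmetric $2\times 2$ matrix satisfying the orthogonality condition
\[
\tr(H^{-1}M\cdot H)=\tr(M)=\tr(R)-\gamma^\star\tr(H)=0 .
\]
This is just the first-order condition of the line search in~\eqref{eq:precond:f_along_polar_via_root}, since the derivative in $\gamma$ is $-\tr(R)+\gamma\tr(H)$. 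So $M$ is a traceless symmetric $2\times2$ matrix.

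Next we use the elementary fact that a traceless symmetric $2\times 2$ matrix has eigenvalues $\pm\mu$. Hence $M^2=\mu^2 I_2$ by Cayley--Hamilton: $M^2-\tr(M)M+\det(M)I=0$ with $\tr M=0$ gives $M^2=-\det(M)\,I_2=\mu^2I_2$. Therefore $(R^+)^2=H\Sigma^+H=M^2=\mu^2 I_2$, and $R^+=|M|=\mu I_2$ with $c:=\mu\ge0$. The equivalences then follow directly. First, $G^+G^{+\T}=H\Sigma^+H=c^2I_2$. Second, $\Sigma^+=c^2H^{-2}$, by inverting $H$ on both sides, which is legitimate because $H\succ0$. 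The commutation $H\Sigma^+=\Sigma^+H$ is then immediate, since $\Sigma^+$ is a function of $H$.

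Two small points need checking. First, Proposition~\ref{prop:precond:root_dynamics} requires only $\rank(X)=2$ at the current iterate, not at $X^+$, and~\eqref{eq:precond:root_recursion_square} holds for every $\gamma$. So no assumption on $X^+$ is needed; the case $c=0$ is exactly $X^+=0$ (one-step exactness). Second, I will verify that $\ip{G}{Q}=\tr(R)$, which holds because $G=RQ$ and $QQ^\T=I_2$; this is what makes the line-search condition coincide with $\tr M=0$.

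I expect no real obstacle. The only step needing care is recognizing that the line-search optimality condition is precisely tracelessness of $R-\gamma H$ in the unweighted trace. The weight $H^{-1}$ in the loss cancels against the $H$ in the derivative. Dimension $2$ is then essential, because only there does a traceless symmetric matrix square to a multiple of the identity.
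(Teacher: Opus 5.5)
Your proposal is correct and follows essentially the paper's proof: both use the root recursion $R^+=|R-\gamma^\star(Q)H|$ and observe that the line-search step makes $M=R-\gamma^\star(Q)H$ a traceless symmetric $2\times2$ matrix, so $M$ has eigenvalues $\pm c$ and $|M|=cI_2$. Your Cayley--Hamilton step and the whitening by $T$ only spell out details; $T$ is not needed for the argument.
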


\begin{proof}
By Proposition~\ref{prop:precond:root_dynamics}, $R^+=|M|$ with $M:=R-\gamma^\star(Q)H$, and $\tr(M)=\tr(R)-\gamma^\star(Q)\tr(H)=0$. A symmetric trace-zero $2\times2$ matrix has eigenvalues $\pm c$ for some $c\ge0$, so $|M|=cI_2$. The equivalent forms follow from $G^+G^{+\T}=H\Sigma^+H=(R^+)^2$.
\end{proof}

\begin{corollary}[Exact post-collapse dynamics in $2\times 2$]
\label{cor:precond:2x2_postcollapse}
Under the assumptions of Proposition~\ref{prop:precond:2x2_collapse}, let $0<\lambda_1\le\lambda_2$ be the eigenvalues of $H$. For every $k\ge1$, $(HX_kX_k^\T H)^{1/2}=c_kI_2$ with $c_{k+1}=\rho c_k$ and $\rho:=(\lambda_2-\lambda_1)/(\lambda_1+\lambda_2)=(\kappa(H)-1)/(\kappa(H)+1)$, so that $f(X_{k+1})=\rho^2f(X_k)$. Moreover, whenever $c_k>0$, the \algname{PolarGD} and \algname{GD} direction rays coincide.
\end{corollary}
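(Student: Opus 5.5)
The plan is to reduce everything to Proposition~\ref{prop:precond:2x2_collapse} and then iterate. Start from $X_0$ with $\rank(X_0)=2$. Proposition~\ref{prop:precond:2x2_collapse} gives $R_1=(HX_1X_1^\T H)^{1/2}=c_1I_2$ for some $c_1\ge0$. If $c_1=0$ then $X_1=0$ and every later iterate is zero, so the claims hold trivially with $c_k=0$ (and the ray statement applies only when $c_k>0$). If $c_1>0$, then $R_1\succ0$, so $G_1=HX_1$ has rank $2$, and hence $X_1$ does too. The hypotheses of Proposition~\ref{prop:precond:2x2_collapse} therefore hold again at $X_1$, and we can proceed by induction, provided we can compute the next scalar explicitly.

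For the induction step, assume $R_k=c_kI_2$ with $c_k>0$. Theorem~\ref{prop:precond:progress} (equivalently Proposition~\ref{prop:precond:root_dynamics}) gives $\gamma_k^\star=\tr(R_k)/\tr(H)=2c_k/(\lambda_1+\lambda_2)$. Then
\[
R_{k+1}=|R_k-\gamma_k^\star H|=c_k\,\bigl|I_2-\tfrac{2}{\lambda_1+\lambda_2}H\bigr|,
\]
whose eigenvalues are $c_k|1-2\lambda_i/(\lambda_1+\lambda_2)|=c_k(\lambda_2-\lambda_1)/(\lambda_1+\lambda_2)$ for both $i=1,2$. So $R_{k+1}=\rho c_kI_2$; this is exactly the collapse predicted by Proposition~\ref{prop:precond:2x2_collapse}, now with the constant identified. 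For the loss, Corollary~\ref{cor:precond:scalar_decomp} gives
\[
2f(X_k)=\tr(H^{-1}R_k^2)=c_k^2\,\tr(H^{-1}),
\]
so $f(X_{k+1})/f(X_k)=c_{k+1}^2/c_k^2=\rho^2$. If $\lambda_1=\lambda_2$, then $\rho=0$ and $X_{k+1}=0$, which is consistent with the formula.

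For the ray statement, use $R_k=c_kI_2$ with $c_k>0$. Then $Q_k=R_k^{-1}G_k=G_k/c_k$, which is a positive multiple of $G_k$. Since line search is scale-invariant in $D$, the \algname{PolarGD} and \algname{GD} directions generate the same ray and give the same line-search update. The only mild obstacle is keeping track of the full-rank hypothesis needed to reapply Proposition~\ref{prop:precond:2x2_collapse}, and this is handled by $c_k>0\Rightarrow R_k\succ0$. Alternatively, one can observe that after $k=1$ the iterate satisfies $H\Sigma_k=\Sigma_kH$ and has two-cluster singular values, so Corollary~\ref{cor:two_cluster} with $w=\lambda_1/\tr(H)$ gives the same rate $(1-2w)^2=\rho^2$.
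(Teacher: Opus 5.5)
Your proof is correct and follows the paper's argument essentially step for step: the base case from Proposition~\ref{prop:precond:2x2_collapse}, then the root recursion $R_{k+1}=|R_k-\gamma_k^\star H|=\rho c_kI_2$, then $2f(X_k)=c_k^2\tr(H^{-1})$, and $G_k=c_kQ_k$ for the ray statement. Your explicit check that $c_k>0$ forces $\rank(X_k)=2$, which is what lets you reapply Proposition~\ref{prop:precond:root_dynamics}, and your handling of $c_1=0$ fill in bookkeeping the paper leaves implicit; your alternative via Corollary~\ref{cor:two_cluster} is the paper's own remark~(iv), but to cover the step $X_1\to X_2$ you need to note that $X_1$ is already in the invariant configuration $\propto(1-w,w)$.
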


\begin{proof}
For $k=1$ the claim is Proposition~\ref{prop:precond:2x2_collapse}. Suppose $R_k:=(HX_kX_k^\T H)^{1/2}=c_kI_2$. Then $\gamma^\star_k(Q)=\tr(R_k)/\tr(H)=2c_k/(\lambda_1+\lambda_2)$, and Proposition~\ref{prop:precond:root_dynamics} gives, with $H=U\diag(\lambda_1,\lambda_2)U^\T$,
\[
  R_{k+1}=\Bigl|c_kI_2-\frac{2c_k}{\lambda_1+\lambda_2}H\Bigr|
  =U\diag\Bigl(c_k\Bigl|1-\frac{2\lambda_1}{\lambda_1+\lambda_2}\Bigr|,\ c_k\Bigl|1-\frac{2\lambda_2}{\lambda_1+\lambda_2}\Bigr|\Bigr)U^\T
  =\rho\,c_k\,I_2 .
\]
Since $R_k=c_kI_2$, we have $G_k=R_kQ_k=c_kQ_k$, so the \algname{PolarGD} and \algname{GD} directions are collinear whenever $c_k>0$. Finally, $HX_kX_k^\T H=c_k^2I_2$ gives $X_kX_k^\T=c_k^2H^{-2}$ and $f(X_k)=\tfrac{c_k^2}{2}\tr(H^{-1})$, so $f(X_{k+1})/f(X_k)=(c_{k+1}/c_k)^2=\rho^2$.
\end{proof}

\begin{remark}[Why $d=2$ is special]
The collapse uses that a symmetric trace-zero $2\times2$ matrix has eigenvalues $\pm c$. For $d>2$, a trace-zero symmetric matrix need not have eigenvalues of equal magnitude, so $R^+$ is generally not isotropic and \algname{PolarGD} need not reduce to \algname{GD} after one step. In $d=2$, by contrast, \algname{PolarGD} contracts at exactly \algname{GD}'s worst-case rate after the first step unless it terminates.
\end{remark}

\subsection{Specialization to the \texorpdfstring{$2\times 2$}{2x2} Case}

We specialize the line-search comparison between \algname{GD} and \algname{PolarGD} to $d=m=2$ with $H=\diag(\lambda_1,\lambda_2)$, $0<\lambda_1\le\lambda_2$, which is without loss of generality by Corollary~\ref{cor:diag_wlog}. Let $g_1^\T,g_2^\T$ be the rows of $G=HX$, and let $r_j:=\norm{g_j}_2^2$, $c:=\ip{g_1}{g_2}$, and $a:=r_1+r_2=\norm{G}_\F^2$. The eigenvalues of $GG^\T$ are $\sigma_1^2,\sigma_2^2$ with $\sigma_1^2+\sigma_2^2=a$ and $\sigma_1\sigma_2=\sqrt{\det(GG^\T)}=\sqrt{r_1r_2-c^2}$, so
\begin{equation}
\label{eq:2x2-nuclear}
  \norm{G}_{\nuc}^2=(\sigma_1+\sigma_2)^2=a+2\sqrt{r_1r_2-c^2}.
\end{equation}
With $K:=\tr(G^\T HG)=\lambda_1r_1+\lambda_2r_2$, \algname{GD} with exact line search has $\gamma^{\mathrm{GD}}=a/K$ and
\begin{equation}
\label{eq:2x2-Delta-GD}
  \Delta_{\mathrm{GD}}=\frac{a^2}{2K}.
\end{equation}
If $\rank(G)=2$, then $Q$ is orthogonal, $\tr(Q^\T HQ)=\tr(H)=\lambda_1+\lambda_2$, and
\begin{equation}
\label{eq:2x2-Delta-Mu-expanded}
  \Delta_{\mathrm{Pol}}=\frac{\norm{G}_{\nuc}^2}{2(\lambda_1+\lambda_2)}=\frac{a+2\sqrt{r_1r_2-c^2}}{2(\lambda_1+\lambda_2)}.
\end{equation}
If $\rank(G)=1$, $Q$ is collinear with $G$ and $\Delta_{\mathrm{Pol}}=\Delta_{\mathrm{GD}}$. Thus, for full-rank $G$ ($r_1r_2>c^2$), combining~\eqref{eq:2x2-Delta-GD} and~\eqref{eq:2x2-Delta-Mu-expanded},
\begin{equation}
\label{eq:2x2-condition-raw}
  \Delta_{\mathrm{Pol}}>\Delta_{\mathrm{GD}}
  \;\Longleftrightarrow\;
  \bigl(a+2\sqrt{r_1r_2-c^2}\bigr)(\lambda_1r_1+\lambda_2r_2)>a^2(\lambda_1+\lambda_2).
\end{equation}
With $\omega:=r_1/a\in[0,1]$, the share of gradient energy in the first eigendirection, and $\chi:=c/\sqrt{r_1r_2}\in[-1,1]$, the correlation between the gradient rows, we have $\sqrt{r_1r_2-c^2}=a\sqrt{\omega(1-\omega)}\sqrt{1-\chi^2}$, and~\eqref{eq:2x2-condition-raw} becomes
\begin{equation}
\label{eq:2x2-condition-alpha-rho}
  \bigl[\lambda_1\omega+\lambda_2(1-\omega)\bigr]\bigl[1+2\sqrt{\omega(1-\omega)}\sqrt{1-\chi^2}\bigr]>\lambda_1+\lambda_2.
\end{equation}
Three consequences follow. If $G$ has rank one ($\chi=\pm1$), \algname{PolarGD} coincides with \algname{GD}. If $H$ is isotropic ($\lambda_1=\lambda_2$), \eqref{eq:2x2-condition-alpha-rho} cannot hold, since $2\sqrt{\omega(1-\omega)(1-\chi^2)}\le1$, with equality only for $\omega=\tfrac12$, $\chi=0$, i.e., equal singular values of $G$; so \algname{GD} is always at least as good. If $\lambda_1<\lambda_2$, taking $\chi=0$ and $\omega$ slightly below $\tfrac12$ satisfies~\eqref{eq:2x2-condition-alpha-rho}, so there are configurations in which \algname{PolarGD} achieves a strictly larger decrease than \algname{GD}.

\subsection{Optimizer Trajectories in Singular Value Space}
By von Neumann's trace inequality,
\begin{equation}
\label{eq:vn}
f(X) = \frac{1}{2}\tr(H XX^\T) \le \frac12\sum_{i=1}^{\min(d, m)} \sigma_i (X)^2 \lambda _i(H),
\end{equation}
where $\lambda _i(H)$, $\sigma _i(X)$ are in decreasing order. Crucially, this upper bound depends only on the singular values of $X$ and the eigenvalues of $H$. In the $2 \times 2$ case we can therefore overlay this upper bound on the trajectories of singular values. This visualization (Figure~\ref{fig:2x2contour}) makes the interplay between the spectrum of $X$ and the structure of $H$ immediately apparent in two dimensions. 
\begin{figure}
    \centering
    \includegraphics[width=0.8\linewidth]{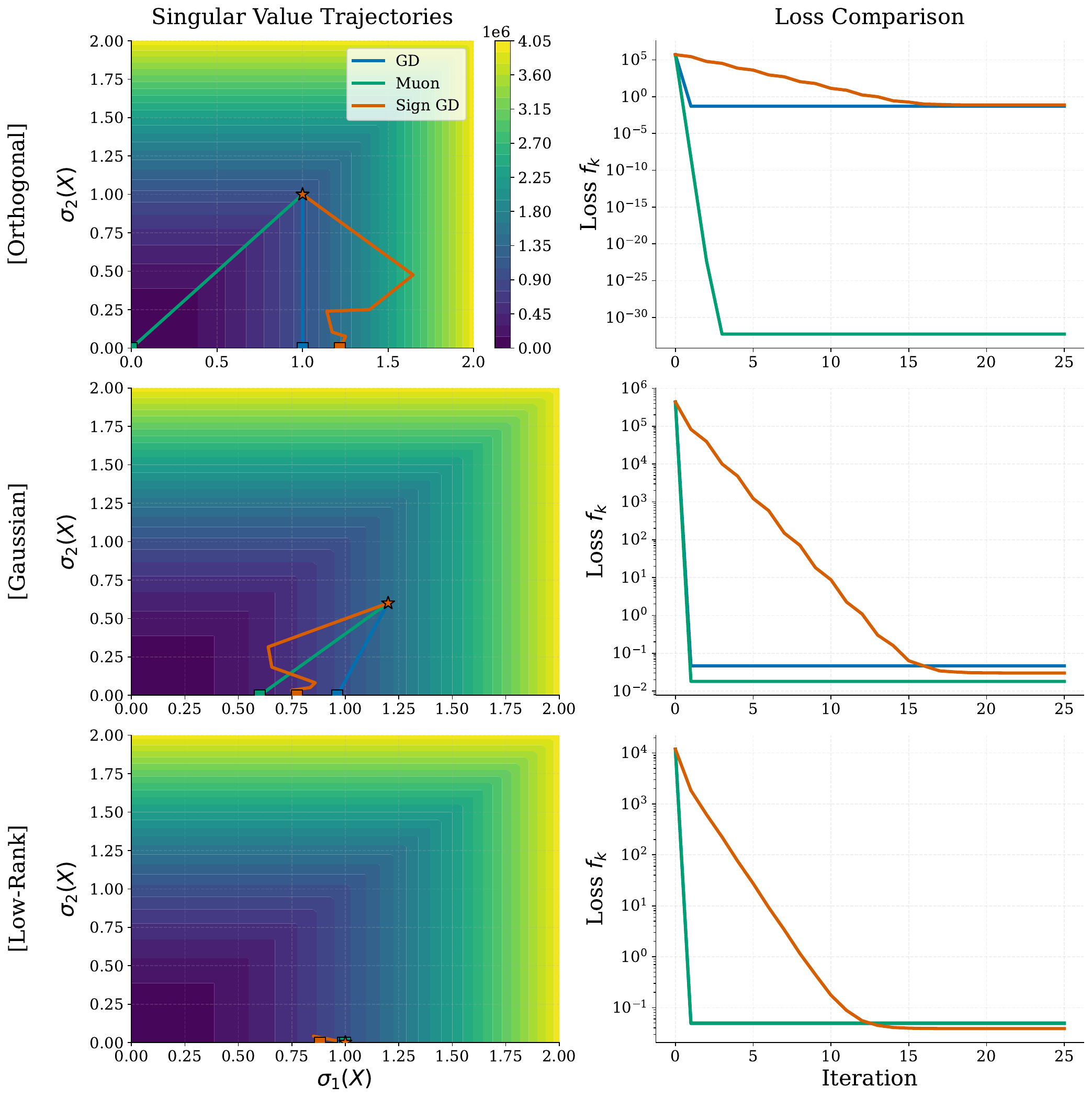}
\caption{\textbf{Singular value trajectories in $\R^2$.}
    Rows correspond to different initializations; the right column shows the loss curves.
    For each row, optimizer trajectories in singular value space, all start from the same initialization ({\large$\star$}), with the von Neumann bound (\ref{eq:vn}) overlaid. Terminal iterates are marked with a square ($\square$). Here $H=O\diag(1, 10^4)O^\T$ with $O$ a random orthogonal matrix, and the rows use orthogonal, Gaussian, and low-rank initializations.
\algname{PolarGD}'s singular value trajectory maintains a $45^\circ$ direction in singular value space. By contrast \algname{GD} is seen to optimize the larger eigen-direction first. \algname{SignGD} does not obviously exploit singular geometry here. 
    }
    \label{fig:2x2contour}
\end{figure}

\end{document}